\def\mycitecolor{green!50!black}
\def\mylinkcolor{red!60!black}
\newcommand*{\centerfloat}{%
  \parindent \z@
  \leftskip \z@ \@plus 1fil \@minus \textwidth
  \rightskip\leftskip
  \parfillskip \z@skip}
\newcommand{\cm}{\mu^{-}}
\newcommand{\cp}{\mu^{+}}
\newcommand{\pr}[1]{\left( #1 \right)}
\newcommand{\brac}[1]{\left\{ #1 \right\}}
\newcommand{\cbr}[1]{\brac{#1}}
\newcommand{\bra}[1]{\left[ #1 \right]}
\newcommand{\R}{\mathbb{R}}
\newcommand{\EU}{\mathrm{EU}}
\newcommand{\Xc}{\mathcal{X}}
\newcommand{\esp}[2][]{\mathbb{E}_{\scriptscriptstyle#1}\!\bra{#2}}
\newcommand{\espk}[2]{\esp{#1\middle|#2}}
\newcommand{\var}[2][]{\mathbb{V}_{\!#1}\!\bra{#2}}
\newcommand{\vark}[3][]{\var[#1]{#2\,\middle|\,#3}}
\newcommand{\PP}[1]{\mathbb{P}\bra{#1}}
\newcommand{\PPG}[2]{\mathbb{P}\!\pr{#1\middle|#2}}
\newcommand{\1}[1]{\mathds{1}_{#1}}
\newcommand{\argmax}[1]{\underset{#1}{\operatorname{arg}\!\operatorname{max}}\;}
\newcommand{\sstar}{^{\star}}
\newcommand{\eg}{\textit{e.g}}
\newcommand{\ie}{\textit{i.e}\ }
\newcommand{\GL}{\mathrm{GL}}
\newcommand{\GLhat}{\widehat{\GL}}
\newcommand{\CL}{\mathrm{CL}}
\newcommand{\supp}[1]{\operatorname{supp}#1}
\newcommand{\deltastar}{\delta_{f\sstar\!\!, t\sstar}}
\newcommand{\rcl}{R^\CL}
\newcommand{\rclhat}{\smash{\hat{R}}^\CL}
\newcommand{\rgl}{R^\GL}
\newcommand{\lgl}{L^\GL}
\newcommand{\ugl}{U^\GL}
\newcommand{\lglhat}{\hat{L}^\GL}
\newcommand{\uglhat}{\hat{U}^\GL}
\newcommand{\rglh}{\rgl_\hh}
\newcommand{\rclht}{\rcl_{\hh\!,t}}
\newcommand{\rclhtstar}{\rcl_{\smash{\hh}\!,t\sstar}}
\newcommand{\rclhtstarhat}{\rclhat_{\smash{\hh}\!,t\sstar}}
\newcommand{\rclhthat}{\rclhat_{\hh\!,t}}
\newcommand{\rht}{R_{\hh\!,t}}
\newcommand{\rhthat}{\hat{R}_{\hh\!,t}}
\newcommand{\rhtstarhat}{\hat{R}_{\smash{\hh}\!,t\sstar}}
\newcommand{\rglhat}{\hat{R}^{\GL}}
\newcommand{\rglhhat}{\rglhat_{\smash{\hh}}}
\newcommand{\lglh}{\lgl_\hh}
\newcommand{\lglhhat}{\lglhat_\hh}
\newcommand{\uglh}{\ugl_\hh}
\newcommand{\uglhhat}{\uglhat_\hh}
\newcommand{\vmin}{V_{\!\mathrm{min}}}
\newcommand{\svmin}{v_{\mathrm{min}}}
\newcommand{\vmax}{V_{\!\mathrm{max}}}
\newcommand{\vfhp}{\vark{\hstar(X)\mkern-2mu}{\mkern-2mu \hh(X)\!=\!p}}
\newcommand{\unorm}{U_{\!\!\scriptscriptstyle\Delta}}
\newcommand{\ece}{\mathrm{ECE}}
\newcommand{\mce}{\mathrm{MCE}}
\newcommand{\rmsce}{\mathrm{RMSCE}}
\definecolor{C0}{HTML}{1f77b4}
\definecolor{C1}{HTML}{ff7f0e}
\definecolor{C2}{HTML}{2ca02c}
\definecolor{C3}{HTML}{d62728}
\definecolor{C4}{HTML}{9467bd}
\definecolor{C5}{HTML}{8c564b}
\definecolor{C6}{HTML}{e377c2}
\definecolor{C7}{HTML}{7f7f7f}
\definecolor{C8}{HTML}{bcbd22}
\definecolor{C9}{HTML}{17becf}
\colorlet{mplblue}{C0}
\colorlet{mplorange}{C1}
\colorlet{mplgreen}{C2}
\colorlet{mpldgreen}{mplgreen!75!black}
\colorlet{mplred}{C3}
\colorlet{mplpurple}{C4}
\colorlet{mpldpurple}{mplpurple!75!black}
\colorlet{mplbrown}{C5}
\colorlet{mplpink}{C6}
\colorlet{mplgray}{C7}
\colorlet{mplolive}{C8}
\colorlet{mplcyan}{C9}
\def\hh{f}
\def\hstar{\hh^{\star}}
\def\c{c}
\def\ch{\c}
\def\chat{\hat{c}}
\def\chhat{\chat}
\def\cch{c\circ\!\hh}
\def\chch{\ch\circ\!\hh}
\def\hhp{\hh_{\Pc}}
\newcommand{\autoreff}[2][]{\hyperref[#2]{\autoref*{#2}#1}}
\def\Q{F}
\def\S{H}
\def\Pc{\mathcal{P}}
\def\t{\ensuremath{t\sstar}}
\def\f{\ensuremath{f\sstar}}
\def\EU{\mathrm{EU}}
\theoremstyle{plain}
\newtheorem{theorem}{Theorem}[section]
\newtheorem{proposition}[theorem]{Proposition}
\newtheorem{lemma}[theorem]{Lemma}
\newtheorem{corollary}[theorem]{Corollary}
\theoremstyle{definition}
\newtheorem{definition}[theorem]{Definition}
\theoremstyle{remark}
\newmdtheoremenv[
  backgroundcolor=lightgray!20, %
  linecolor=black, %
  innertopmargin=5pt, %
  innerbottommargin=0pt,
  skipabove=0pt, %
  skipbelow=0pt, %
  leftmargin=0,
  rightmargin=0,
  linewidth=0pt, %
  innerleftmargin=0pt, %
  innerrightmargin=0pt, %
  leftmargin=0pt, %
  rightmargin=0pt, %
]{colortheorem}{Theorem}
\newmdtheoremenv[
  backgroundcolor=lightgray!20, %
  linecolor=black, %
  innertopmargin=5pt, %
  innerbottommargin=0pt,
  skipabove=0pt, %
  skipbelow=0pt, %
  innerleftmargin=0pt, %
  innerrightmargin=0pt, %
  leftmargin=0pt, %
  rightmargin=0pt, %
  linewidth=0pt, %
]{colorproof}{Proof}
\def\eg{\emph{e.g.}}
\def\ie{\emph{i.e.}}
\title{Decision from Suboptimal Classifiers:\\Regret Pre- and Post-Calibration}
\begin{document}

\runningtitle{Decision from Suboptimal Classifiers: Excess Risk Pre- and Post-Calibration}

\twocolumn[

\aistatstitle{Decision from Suboptimal Classifiers:\\Excess Risk Pre- and Post-Calibration}

\aistatsauthor{Alexandre Perez-Lebel \And Gael Varoquaux \And Sanmi
Koyejo}

\aistatsaddress{
    \begin{tabular}[t]{c}
        Soda, Inria Saclay, France \\[2pt]
        Stanford University, USA\\[2pt]
        Fundamental Technologies$^*$, USA
    \end{tabular}
    \And
        Soda, Inria Saclay, France
    \And
    Stanford University, USA}

\aistatsauthor{Matthieu Doutreligne \And Marine Le Morvan}

\aistatsaddress{Haute Autorité de Santé, France \And Soda, Inria Saclay,
France}

\runningauthor{Alexandre Perez-Lebel, Gael Varoquaux, Sanmi Koyejo,
Matthieu Doutreligne, Marine Le Morvan}

]

\def\thefootnote{*}\footnotetext{Current affiliation}

\begin{abstract}
    Probabilistic classifiers are central for making informed decisions under uncertainty. Based on the maximum expected utility principle, optimal decision rules can be derived using the posterior class probabilities and misclassification costs. Yet, in practice only learned approximations of the oracle posterior probabilities are available. In this work, we quantify the excess risk (a.k.a. regret) incurred using approximate posterior probabilities in batch binary decision-making. We provide analytical expressions for miscalibration-induced regret ($\rcl$), as well as tight and informative upper and lower bounds on the regret of calibrated classifiers ($\rgl$). These expressions allow us to identify regimes where recalibration alone addresses most of the regret, and regimes where the regret is dominated by the grouping loss, which calls for post-training beyond recalibration. Crucially, both $\rcl$ and $\rgl$ can be estimated in practice using a calibration curve and a recent grouping loss estimator. On NLP experiments, we show that these quantities identify when the expected gain of more advanced post-training is worth the operational cost. Finally, we highlight the potential of multicalibration approaches as efficient alternatives to costlier fine-tuning approaches.
\end{abstract}

\section{Introduction}
\label{sec:introduction}

Whether it's marking a financial transaction as fraudulent, or deciding if a suspected cancer warrants a biopsy, making a decision involves carefully weighting the inconvenience of false positives (e.g. an invasive biopsy on a healthy patient) with that of false negatives (e.g. delaying cancer treatment due to a missed diagnosis). Often, the true outcome cannot be deterministically characterized \citep[e.g. in medicine, ][sec 3.1.1]{Sox2013}. A rational decision-maker thus seeks the decision that offers the best harm--benefit tradeoff according to its preferences and the probabilities of each outcome.\looseness=-1

In decision theory \citep{Peterson2017,Kochenderfer2015}, the Maximum Expected Utility Principle (or similarly, the Minimum Expected Cost Principle) offers a framework for optimal decisions. By using the class-conditional probabilities $P(Y|X)$ of the outcome Y given the input data X to quantify the uncertainty, along with the utilities (or costs) associated with decisions, one can derive the decision that maximizes utility. In a binary decision setting, the optimal decision for a record $x$ is 1 whenever the probability of the corresponding outcome $\PPG{Y\!=\!1}{X=x}$ is above a threshold $\t$ function of the utilities \citep{Elkan2001}.

The optimal decision depends on both the utilities and the \emph{oracle} class-conditional probabilities $P(Y|X)$. In practice, the utilities can be defined by an expert, it is a task in itself \citep[chap. 8]{Sox2013}, \citep[sec. 3.1.4-5]{Kochenderfer2015}. However, the oracle probabilities are unknown and must be estimated \eg{} using a learned probabilistic classifier within the machine learning framework. As relying on approximate probabilities affects the resulting decisions, it is essential to choose a model that leads to the best possible decisions.

Models are often selected based on common metrics such as accuracy, AUC or Brier score. Yet, a model's high accuracy is no guarantee of its ability to improve subsequent decisions, and nor are the AUC or Brier score \citep{Localio2012beyond}. Model calibration is also known to be desirable and histogram binning---a well-known recalibration method---was introduced specifically to enhance decision-making by calibrating class-conditional probabilities \citep{zadrozny2001unknowncosts}. \cite{VanCalster2019achilles} highlighted the importance of assessing calibration when using estimated probabilities for clinical decision-making. Nonetheless, it remains unclear what degree of calibration is necessary for a model to be suitable for production or for preferring one model over another.

Validating decision-rules in practice is crucial to ensure that AI does more good than harm in production. For this purpose, decision-analytic measures such as Expected Utility and Net Benefit \citep{vickers2016netbenefit} can be used, with the latter gaining traction in medical communities. In this work, we \emph{investigate how inaccuracies in the estimated class-conditional probabilities translate into regret}, i.e., into an expected utility lower than the best possible expected utility for the given task. The interplay between decision-analytic measures and quantifications of class-conditional inaccuracies has not been thoroughly studied. Most research has focused on mitigating the detrimental effects of miscalibration on resulting decisions \citep{Zhao2021, Rothblum2022}, while \cite{VanCalster2015calibration} evaluated its impact on Net Benefit through simulations.

\paragraph{This work} We study how errors in estimated probabilities affect the optimality of decisions derived from these probabilities. Our theoretical results address practical questions such as: Is a pre-trained model suited to a new task? What is the simplest way to correct decisions based on a sub-optimal probabilistic classifier? How much will a model benefit from post-training?
This approach is particularly valuable in the current trend of applying large pre-trained models to new tasks, rather than training models from scratch---\eg{} using foundation models.
Our contributions are:
\begin{itemize}[itemsep=1pt, parsep=1pt, topsep=0pt,leftmargin=2ex]

    \item We formally describe how discrepancies between estimated probabilities and the underlying distribution $P(Y|X)$ affect the decision regret. In particular, we give an analytical expression for the regret induced by miscalibration, as well as \emph{tight} and \emph{informative} upper and lower bounds on the regret of a re-calibrated classifier. These bounds are distribution-agnostic, model-agnostic, and solely controlled by the grouping loss, decision threshold, and re-calibrated probabilities.

    \item Using these bounds, we describe two regimes: one where recalibration is a cheap and effective post-training strategy and another where recalibration fails to mitigate the regret.

    \item We show that these scenarios can be successfully identified in practice. The bounds identify the cases where fine-tuning improves utility upon recalibration alone on NLP tasks. This enables a new model validation procedure to guide post-training.

    \item We investigate the potential of multicalibration and show it is a cost-effective and controllable alternative to fine-tuning, while concurrently diminishing regret compared to calibration alone.
\end{itemize}

\section{Background}  %
\label{sec:background}

\subsection{Related work}

\paragraph{Cost-sensitive learning} Cost-sensitive learning \citep[chap. 4]{Ling2008cost, Fernandez2018learning} focuses on minimizing expected costs rather than misclassification rates. This can be achieved through three main approaches: direct methods, such as modifying tree-based splitting criteria \citep[sec. 4.4]{Ling2004decision, Petrides2022, Fernandez2018learning} or training losses to embed cost information \citep{Chung2015cost}; pre-processing methods, which alter the training set to account for the cost \citep{Zadrozny2003sampling,Ting1998weighting}; and post-processing approaches, including threshold adjustment and refining class-conditional probabilities. For threshold adjustment, \citet{Sheng2006thresholding} select the threshold that minimizes expected costs on the training set, while recent work seeks to adapt the decision threshold to miscalibrated classifiers \citep{Rothblum2022}.

Unlike direct and pre-processing approaches, post-processing methods do not need model retraining when the costs change. This makes them appealing in a many settings: when the costs are not known at training time, change over time, when the model is too expensive to retrain (\eg{} large pre-trained models), or when the model is not accessible (\eg{} using an API). In this work, we focus on post-training methods in batch decision-making, and in particular on post-training the obtained class-conditional probabilities.

\paragraph{Calibration} Refining class-conditional probabilities often involves recalibration. Calibration ensures that, on average, the predicted probabilities match the positive rate within groups of the same estimated probability. For instance, if a classifier estimates an 80\% probability, then 80\% of those predictions should be actual positive outcomes. Learned classifiers are often miscalibrated; boosted trees tend to be under-confident \citep{Niculescu-Mizil2005b}, whereas naive Bayes classifiers or modern neural networks tend to be over-confident \citep{Guo2017}. To address these issues, many recalibration techniques have been developed, including Platt scaling \citep{Platt1999a}, histogram binning \citep{Zadrozny2001b}, isotonic regression \citep{Zadrozny2002}, or temperature scaling \citep{Guo2017}. Recalibrating  is advocated for better decisions \citep{VanCalster2019achilles}, and some recalibration methods are specifically framed in decision settings, \eg{} multiclass \citep{Zhao2021}.

\paragraph{Beyond calibration: post-training} Calibration, being a control on averages, does not control individual probabilities. A complete characterization of predicted probabilities is given by decomposing the expected loss, and thus prediction errors \citep[3.1 and 5.1]{Kull2015}:
\begin{equation}
\small
    \text{\hspace*{-4ex}\shortstack{Expected\\Loss}} = \underbrace{\text{\shortstack{Calibration \\ Loss}} + \text{\shortstack{Grouping\\Loss}}}_{\text{Epistemic Uncertainty}} + \underbrace{\text{\shortstack{Irreducible\\Loss}}\makebox[0pt]{.}}_{\text{\hspace*{-5ex}Aleatoric Unce\rlap{rtainty}}}
    \label{eqn:decomposition}
\end{equation}
\emph{Aleatoric uncertainty}, stems from the randomness of the outcome $Y$ and cannot be reduced even with an optimal model and infinite data. On the opposite, \emph{epistemic uncertainty} is due to model imperfection and can be reduced with a better model \citep{Hullermeier2021}; it is a good indicator of whether a model can be improved \citep{Lahlou2021}.
The recalibration methods listed above only reduce the calibration loss in eq.\,\eqref{eqn:decomposition}.
Multicalibration recently pushed further, notably for fairness \citep{Hebert-Johnson2018}, as well as calibration within groups \citep{Kleinberg2016,Pfohl2022}. More general post-training methods tackle the full error in eq.\,\eqref{eqn:decomposition}, \eg{} stacking \citep{Pavlyshenko2018}, which learns a model on top of the output of another model, or fine-tuning, particularly useful with the advent of large pretrained models.

\paragraph{Measuring the grouping loss}
Miscalibration is well characterized \citep[as with expected calibration error, ECE,][]{Naeini2015}, however it is only part of the epistemic error.
Recently, \citet{Perez-Lebel2023} gave an estimator for the remainder, the grouping loss, showing that modern classifiers often exhibit grouping loss in real-world settings.
The grouping loss can be seen as the loss of grouping together entities having different odds.
Concretely, it measures the variance of the true individual probabilities within groups of same estimated probabilities.
The estimator uses a partitioning of the feature space to estimate local averages of the true probabilities. This enables detecting dissimilar entities that were grouped together by the model, thus approximating the grouping loss.\looseness=-1

\subsection{Decision-making under uncertainty: definitions and notations}

\paragraph{Decision theory}

In this article, we consider the classic setting of batch supervised machine learning and focus on the binary setting. Let $(X, Y)$ a pair of jointly distributed random variables on $\Xc \times \cbr{0, 1}$. Binary decision rules map each point in $\Xc$ to a decision in $\cbr{0, 1}$. Let $U \in \R^{2 \times 2}$ be a matrix of utilities, where $U_{ij}$ is the utility of predicting $i$ when the true outcome is $Y=j$.\footnote{Utility in the sense of \citet{vonNeumann1944}, which can be viewed as the opposite of a cost.} The expected utility of a decision rule $\delta : \Xc \to \{0, 1\}$ is defined
for $x \in \supp{X}$
as:\footnote{We note $\supp$ the support of a random variable, \ie{} the values for which the probability density function is nonzero.}
\begin{flalign}
    \text{Pointwise} &&&
    \EU(\delta,x) \triangleq \espk{U_{\delta(X), Y}}{X\!=\!x}
    &
    &&
    \label{eq:eu:pointwise}
    \\
    \text{Overall} &&&
    \EU(\delta) \triangleq \esp[X]{\EU(\delta,X)}.
    &&&
    \label{eq:eu}
\end{flalign}
Let $\unorm\!=\!U_{00}\!-\!U_{10}\!+\!U_{11}\!-\!U_{01}$. Decision theory \citep{Elkan2001} states that the best decision rule, in the sense that it maximizes the (pointwise) expected utility,\footnote{Equivalently, minimizes the expected cost.} is:
\begin{equation}
    \label{eq:optimal_classifier}
    \delta^\star : x \mapsto \mathds{1}_{\PPG{Y=1}{X=x}\geq t\sstar}
    \quad \text{where} \quad
    t\sstar \triangleq \tfrac{U_{00}\!-\!U_{10}}{\unorm}.
\end{equation}
The optimal decision thus amounts to assigning class 1 to $x$ whenever $\PPG{Y\!=\!1}{X\!=\!x} \geq t\sstar$ and 0 otherwise. When misclassification costs are equal, \ie{} $U_{00} = U_{11}$, and $U_{10} = U_{01}$, the optimal threshold is $t\sstar = 0.5$, similar to regular cost-insensitive classification. We will denote by $\hstar : x \mapsto \PPG{Y\!=\!1}{X\!=\!x}$ the (unknown) conditional probability of the positive class, and $\hh : \Xc \to [0, 1]$ the probabilistic predictor estimating the probabilities $\hstar$. Without loss of generality, the set of binary decision rules $ \delta : \Xc \to {\{0, 1\}}$
can be parametrized with estimated class-conditional probabilities $\hh : \Xc \to [0, 1]$ and a threshold $t \in [0, 1]$ (\autoref{lem:decision:parametrized}) as:\looseness=-1
\begin{equation}
    \label{eq:decision:rule}
        \delta_{\hh, t} : x \mapsto \1{\hh(x) \geq t}.
\end{equation}
With these notations, the optimal decision rule thus writes $\delta_{f^\star, t^\star}$.
In the following, we consider a candidate decision rule $\delta_{\hh, t}$ with $t \in [0, 1]$.

\paragraph{Calibration and grouping loss}
The recalibrated predictor is defined as $\ch \circ \hh$, where $\ch$ is the calibration curve given by:
\begin{flalign}
    \label{eq:calibration-curve}%
    \text{Calibration curve}
    &&
    \ch: p \mapsto \espk{Y}{\hh(X)\!=\!p}.%
    &&
\end{flalign}
A binary classifier $\hh$ is calibrated when $\ch(p) = p$ for all $p \in \supp{\hh(X)}$.
Finally, writing $\mathbb{V}$ the variance, the grouping loss associated to the squared loss is defined as:
\begin{flalign}
    \text{Grouping loss}
    &&
    \GL : p \mapsto \vfhp.
    &&
\end{flalign}
It can be thought of as the variance of the unknown individual probabilities around their mean $\ch(p)$ in the bin $p$ of a reliability diagram \citep{Perez-Lebel2023}.

\section{Theory: regret on a decision}

\label{sec:theory}
In this section, we establish a connection between the errors in the estimated probabilities (a form of uncertainty quantification) and the suboptimality (or regret) of the resulting decisions.

\subsection{Regret decomposition of suboptimal decision rules}

While the best decision rule is given by $\delta_{\hstar, t\sstar}$, only imperfect estimates $\hh$ of $\hstar$ are available in practice.
It is thus of interest to characterize the optimal decision rule that can be achieved from $\hh$.
\begin{restatable}[Best decision given estimated probabilities, \ref{sec:proof:best-decision:dh}]{proposition}{thmBestDh}
\label{prop:decision:est}
    Let $\mathcal{D}_{\hh}$ be the set of decision rules \underline{function of the estimated probabilities $\hh$}. Then the calibrated probabilities thresholded at $t^\star$,
    \begin{equation}
        \delta_{\cch, t^\star}: x \mapsto \1{(\cch)(x) \geq t^\star},
    \end{equation}
    maximize the conditional expected utility over $\mathcal{D}_{\hh}$, i.e.,
    \begin{equation*}
        \delta_{\cch, t^\star} \in \argmax{\delta \in \mathcal{D}_{\hh}} \EU(\delta|p) \qquad
        \text{for all } p \in \supp{\hh(X)}
    \end{equation*}
    \vspace{-4mm}
    \begin{align*}
        &\text{with} \quad \EU(\delta|p) \triangleq \espk{\EU(\delta,X)}{f(X) = p}\\
        &\text{and} \quad \mathcal{D}_{\hh} = \cbr{\delta_{g\circ\!\hh\!, t} : \quad g: [0, 1] \to [0, 1],\; t \in [0, 1]}.
    \end{align*}
\end{restatable}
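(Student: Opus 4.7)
The strategy is to reduce the optimization over $\mathcal{D}_\hh$ to a pointwise binary choice indexed by $p \in \supp{\hh(X)}$, and then to re-derive Elkan's threshold rule at the level of the $\hh$-conditional distribution. Concretely, every $\delta \in \mathcal{D}_\hh$ has the form $\delta(x) = \1{g(\hh(x)) \geq t} = d(\hh(x))$ for the binary function $d = \1{g \geq t}$, and conversely any measurable $d: [0,1] \to \{0,1\}$ is realized by $\mathcal{D}_\hh$ (take $g = d$, $t = 1/2$). So maximizing $\EU(\delta|p)$ over $\mathcal{D}_\hh$ is equivalent to choosing, for each $p$ independently, an optimal value $d(p) \in \{0, 1\}$.

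First, I fix $p \in \supp{\hh(X)}$ and rewrite the conditional expected utility using the tower property: since $\delta(X) = d(\hh(X))$ is constant equal to $d(p)$ on the event $\{\hh(X) = p\}$,
\begin{align*}
    \EU(\delta|p)
    &= \espk{\espk{U_{\delta(X),Y}}{X}}{\hh(X)\!=\!p} \\
    &= \espk{U_{d(p),Y}}{\hh(X)\!=\!p} \\
    &= U_{d(p),0}\bigl(1 - \ch(p)\bigr) + U_{d(p),1}\,\ch(p),
\end{align*}
where the last equality uses the definition $\ch(p) = \espk{Y}{\hh(X)=p}$ and the fact that $Y \in \{0,1\}$.

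Next, I maximize over the two possible values of $d(p)$. Comparing the value at $d(p)=1$ with the value at $d(p)=0$, the choice $d(p)=1$ is preferred if and only if
\[
    (U_{11}-U_{01})\,\ch(p) + (U_{10}-U_{00})\bigl(1-\ch(p)\bigr) \geq 0,
\]
which, under the natural assumption $\unorm = U_{00}-U_{10}+U_{11}-U_{01} > 0$ (i.e.\ correct decisions are strictly preferred to wrong ones, making $t\sstar \in [0,1]$ well-defined), rearranges exactly to $\ch(p) \geq t\sstar$. This is the same argument as for the oracle optimal rule in \eqref{eq:optimal_classifier}, but applied to the distribution of $Y$ conditional on $\hh(X)=p$ rather than on $X=x$; the calibration curve $\ch(p)$ plays here the role that $\hstar(x)$ played there. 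Hence the pointwise optimal choice is $d\sstar(p) = \1{\ch(p) \geq t\sstar}$, which corresponds to the decision rule $\delta_{\cch, t\sstar}(x) = \1{(\cch)(x) \geq t\sstar} \in \mathcal{D}_\hh$, concluding the proof.

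The argument is mostly bookkeeping once the right reformulation is in place; the only conceptual step is the tower-property identification of the conditional positive rate on $\{\hh(X)=p\}$ with the calibration curve $\ch(p)$, which makes Elkan's threshold argument transfer verbatim from $\hstar$ to $\cch$. The only technical care concerns the sign of $\unorm$ (handled by the standard rational-utility assumption) and the fact that decisions are well-defined only for $p \in \supp{\hh(X)}$, which is precisely the restriction stated in the proposition.
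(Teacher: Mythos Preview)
Your proof is correct and takes essentially the same approach as the paper: both observe that on $\{\hh(X)=p\}$ the decision is a constant $d(p)\in\{0,1\}$ and the relevant posterior is $\ch(p)$, so Elkan's threshold rule applies with $\ch(p)$ in place of $\hstar(x)$. The only presentational difference is that the paper computes the \emph{difference} $\EU(\delta_{\cch,t^\star}|p)-\EU(\delta|p)$ directly (via a helper corollary on utility differences) and shows it equals $\unorm\,|\1{\ch(p)\geq t^\star}-\1{g(p)\geq t}|\,|\ch(p)-t^\star|\geq 0$, a closed form it then reuses for the calibration-regret expression in \autoref{prop:rcl:expression}, whereas you compute $\EU(\delta|p)$ itself and maximize over the binary choice $d(p)$.
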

\autoref{prop:decision:est} shows that given an approximation $\hh$ of the oracle probabilities $\hstar$, recalibrating the classifier and using $t\sstar$ as threshold achieves the highest utility among decisions based on $\hh$ only.
While it is commonly admitted that recalibration is desirable to improve decisions, to the best of our knowledge, the optimality of $\delta_{\cch, \t}$ in batch binary decision-making has not been previously demonstrated.
Note that $\mathcal{D}_{\hh}$ contains all possible decision rules taking only $\hh$ as input (\autoref{lem:dh}). This includes recalibration but excludes decision rules based on both $\hh(x)$ and $x$. This prevents adjusting for the grouping loss, thereby incurring regret relative to $\delta_{\hstar\!, t\sstar}$.\looseness=-1

Since $\delta_{\cch, t\sstar}$ is the best decision over a subset $\mathcal{D}_{\hh}$ of all possible decision rules, it holds for $p \in \supp{\hh(X)}$ that:
    \begin{equation}
        \label{eq:utility:ranks}
        \underbrace{\EU(\delta_{\hh, t}|p)}_{\text{Naive}} \;\; \leq \;\; \underbrace{\EU(\delta_{\cch, t\sstar}|p)}_{\text{Recalibrated}} \;\; \leq \;\; \underbrace{\EU(\delta_{\hstar, t\sstar}|p)}_{\text{Oracle}}
        .
    \end{equation}
Ranking \eqref{eq:utility:ranks} leads us to define the conditional \emph{calibration regret} $\rclht(p)$ as the expected utility gap between the best decision given $f$ and the candidate decision: $\rclht(p) \triangleq
\EU(\delta_{\cch, t\sstar}|p) - \EU(\delta_{\hh, t}|p)$. $\rclht$ quantifies the regret of using a miscalibrated classifier rather than a calibrated one.
Similarly, we define the conditional \emph{grouping regret} $\rglh(p)$ as the expected utility gap between the oracle decision and the best decision given $f$: $\rglh(p) \triangleq \EU(\delta_{\hstar, t\sstar}|p) - \EU(\delta_{\cch, t\sstar}|p)$.
$\rglh$ quantifies the regret of using the recalibrated classifier $\cch$ instead of oracle class-conditional probabilities $\hstar$.
The conditional \emph{regret} $\rht(p)$ of the candidate decision to the oracle decision $\deltastar$
is defined as $\rht(p) \triangleq \EU(\delta_{\hstar, t\sstar}|p) - \EU(\delta_{\hh, t}|p)$
and can naturally be decomposed as:
\begin{equation}
    \label{eq:regret-decomposition}
    \rht(p) = \underbrace{\rclht(p)}_{\geq 0} + \underbrace{\rglh(p)}_{\geq 0}.
\end{equation}

The overall regrets can then be obtained by integrating over $p$. Ranking~\eqref{eq:utility:ranks} and decomposition~\eqref{eq:regret-decomposition} trivially hold marginally. We note $\rht$, $\rclht$ and $\rglh$ the marginal counterparts.
Importantly, the literature often focuses on the calibration regret $\rclht$ \citep[\eg{},][\say{type regret}]{Zhao2021,Noarov2023}, which is blind to the grouping regret. In this work, we consider the full regret $\rht$ between the candidate and oracle decisions.
In general, $\rglh$ is nonzero due to the grouping loss of the estimated probabilities $\hh$.
In the following sections \ref{sec:rcl}--\ref{sec:rgl} we provide an analytical expression for the calibration regret $\rclht(p)$ as well as bounds on the grouping loss regret $\rglh(p)$.

\subsection{Expression of the regret stemming from miscalibration}
\label{sec:rcl}
The calibration regret can be estimated using the calibration curve as described in \autoref{prop:rcl:expression}.
\begin{restatable}[Expression of the calibration regret, \ref{sec:proof:rcl:expression}]{proposition}{propRCLExpression}
    \label{prop:rcl:expression}
    For all $p \in \supp{\hh(X)}$,
    \begin{align}
        \rclht(p) & = \begin{cases}
            \unorm |\ch(p) - t\sstar| & \text{if }\; \1{\ch(p) \geq t\sstar} \neq \1{p \geq t}\\
            0 & \text{otherwise}
        \end{cases}
        .
        \label{eq:rcl}
    \end{align}
\end{restatable}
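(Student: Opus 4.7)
The plan is to reduce $\EU(\delta|p)$ for any decision rule that only depends on $\hh$ to a simple linear expression in $\ch(p)$, and then to compare the two rules $\delta_{\cch,\t}$ and $\delta_{\hh,t}$ pointwise in $p$.

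First, I would unfold the definition of $\EU(\delta|p)$ for a rule of the form $\delta_{g\circ \hh, s}$. Conditionally on $\hh(X)=p$, such a rule is deterministic and equals $d \triangleq \1{g(p)\geq s} \in \{0,1\}$. Using the tower property on $\EU(\delta,X)=\espk{U_{\delta(X),Y}}{X}$ and the definition $\ch(p)=\espk{Y}{\hh(X)=p}$, I would obtain
\begin{equation*}
\EU(\delta_{g\circ\hh,s}|p) = U_{d,0}\bigl(1-\ch(p)\bigr) + U_{d,1}\ch(p).
\end{equation*}
Subtracting the value at $d=0$ from the value at $d=1$ and using the definitions of $\unorm$ and $\t = (U_{00}-U_{10})/\unorm$, this collapses to
\begin{equation*}
\EU(d{=}1\,|\,p) - \EU(d{=}0\,|\,p) = \unorm\bigl(\ch(p)-\t\bigr),
\end{equation*}
which (assuming the standard $\unorm>0$, ensuring the oracle rule in \eqref{eq:optimal_classifier} is a maximizer rather than a minimizer) immediately identifies $d=1$ as the utility-maximizing decision exactly when $\ch(p)\geq \t$.

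Next, I would specialize this to the two rules of interest. Conditionally on $\hh(X)=p$: $\delta_{\cch,\t}(X)=\1{\ch(p)\geq \t}$ and $\delta_{\hh,t}(X)=\1{p\geq t}$. If the two indicators coincide, both rules make the same decision on the conditioning event, so the conditional utilities are equal and $\rclht(p)=0$. If they differ, $\delta_{\hh,t}$ makes the opposite, suboptimal decision, so the gap is the absolute value of the difference computed above, i.e.\ $|\unorm(\ch(p)-\t)| = \unorm|\ch(p)-\t|$ since $\unorm>0$. This yields the two cases of \eqref{eq:rcl}.

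The computation is essentially a one-line algebraic identity once the conditioning is correctly handled, so I do not expect a genuine obstacle. The only subtlety is to keep track of the fact that $\delta_{\cch,\t}$ and $\delta_{\hh,t}$ are $\sigma(\hh(X))$-measurable, which is what allows pulling $\1{g(p)\geq s}$ outside the inner conditional expectation and reducing everything to a function of $\ch(p)$ alone. A minor point worth flagging explicitly is the sign of $\unorm$: if one does not assume $\unorm>0$, the threshold rule in \eqref{eq:optimal_classifier} would actually minimize utility, contradicting its stated optimality in \autoref{prop:decision:est}, so this convention is built into the setup and can be invoked without further comment.
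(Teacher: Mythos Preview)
Your proposal is correct and follows essentially the same approach as the paper: both arguments condition on $\hh(X)=p$, use that $\delta_{\cch,\t}$ and $\delta_{\hh,t}$ are $\sigma(\hh(X))$-measurable so the decisions become deterministic indicators, and reduce the conditional utility gap to $\unorm(\1{\ch(p)\geq\t}-\1{p\geq t})(\ch(p)-\t)$, which equals $\unorm|\ch(p)-\t|$ precisely when the indicators disagree. The paper obtains this identity by invoking the pointwise utility-difference formula proved earlier (\autoref{cor:expected-utility:diff}) inside the proof of \autoref{prop:decision:est}, whereas you rederive it directly from the utility matrix and the tower property; the content is the same.
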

\begin{figure}[t!]
    \centering
    \includegraphics[width=\linewidth]{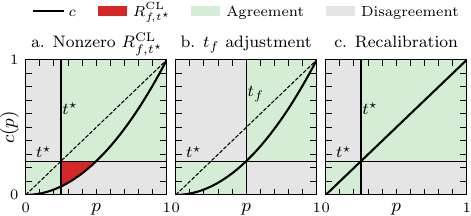}
    \caption{\textbf{Impact of miscalibration on the regret $\rclht$.} (a) The oracle decision $p \mapsto \1{p \geq t\sstar}$ applied on miscalibrated estimated probabilities $\hh$, that is $\delta_{\hh, t\sstar}$, yields a non zero regret $\rclhtstar$ within areas of disagreement with amount $|\ch - t\sstar|$ (red area). The regret $\rclht$ can be reduced to 0 either by adapting the decision to a new threshold $t_\hh = \ch^{-1}(t\sstar)$, that is $\delta_{\hh, t_\hh}$ (b), or by recalibrating the estimated probabilities and using $t\sstar$ as threshold, that is $\delta_{\cch, t\sstar}$ (c) (\autoref{prop:th}).}
    \label{fig:rcl:source}
\end{figure}

$\rclht$ scales as the distance between $t\sstar$ and the calibration curve $\ch$ in areas of disagreements between $\delta_{\hh, t}$ and $\delta_{\cch, t\sstar}$ (red area in \autoreff[a]{fig:rcl:source}). Whenever $\delta_{\hh, t}$ and $\delta_{\cch, t\sstar}$ agree everywhere, $\rclht = 0$.
This is in particular the case for calibrated probabilities with threshold $t\sstar$ (\autoreff[c]{fig:rcl:source}). Yet, it is not necessary for $\hh$ to be calibrated to imply $\rclht = 0$, as the decision from a miscalibrated classifier $\delta_{\hh, t}$ can agree everywhere with $\delta_{\cch, t\sstar}$ (\autoreff[b]{fig:rcl:source}). \autoref{prop:th} shows how to achieve this by threshold adjustment.

\begin{restatable}[Adjusting the threshold $t_\hh$, \ref{sec:proof:th}]{proposition}{thmTH}
\label{prop:th}
    For all $t\sstar \in [0, 1]$
    let $t_\hh \in \ch^{-1}(\{t\sstar\})$ if it exists, otherwise
    let $t_\hh \triangleq \inf \{t: \ch(t) \geq t\sstar \}$.
    If $\ch$ is monotonic non-decreasing,
    then thresholding $\hh$ at $t_\hh$, \ie{} $\delta_{\hh,t_\hh}$,
    achieves zero miscalibration regret:
    $\rcl_{\hh,t_\hh} = 0$.
\end{restatable}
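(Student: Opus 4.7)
My plan is to reduce the statement to a direct application of \autoref{prop:rcl:expression}, which asserts that $\rcl_{\hh, t_\hh}(p)$ vanishes pointwise whenever either (i) the two indicators $\1{\ch(p) \geq t\sstar}$ and $\1{p \geq t_\hh}$ agree, or (ii) $\ch(p) = t\sstar$. Showing that one of these conditions holds at each $p \in \supp{\hh(X)}$ then yields $\rcl_{\hh, t_\hh} = 0$ after integration.

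First I would handle the case where $\ch^{-1}(\{t\sstar\}) \neq \emptyset$ and $t_\hh$ is picked inside that preimage, so $\ch(t_\hh) = t\sstar$. By monotonicity, $p \geq t_\hh$ gives $\ch(p) \geq \ch(t_\hh) = t\sstar$ and both indicators equal $1$; symmetrically $p \leq t_\hh$ gives $\ch(p) \leq t\sstar$, so either both indicators equal $0$ (strict case) or condition (ii) applies with a disagreement contribution $\unorm |\ch(p) - t\sstar| = 0$.

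The second case is $t\sstar \notin \ch([0,1])$, with $t_\hh = \inf \{t : \ch(t) \geq t\sstar\}$. Because $\ch$ is non-decreasing, the super-level set $S = \{t : \ch(t) \geq t\sstar\}$ is an upper set. For $p < t_\hh$, $p \notin S$ implies $\ch(p) < t\sstar$ so both indicators are $0$. For $p > t_\hh$, the infimum property yields some $t' \in S \cap (t_\hh, p]$ and monotonicity gives $\ch(p) \geq \ch(t') \geq t\sstar$, so both indicators equal $1$.

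The main obstacle I foresee is the boundary value $p = t_\hh$ in this second case: if $\ch(t_\hh) < t\sstar$ we would have $\1{\ch(p) \geq t\sstar} = 0$ but $\1{p \geq t_\hh} = 1$, with $|\ch(p) - t\sstar| \neq 0$, violating both (i) and (ii) at that single point. This edge case is only genuinely problematic when $\hh(X)$ places an atom at $t_\hh$; otherwise the contribution is integrated over a null set so the marginal regret remains zero. In a rigorous write-up I would either invoke the absence of such an atom (standard under continuity of the distribution of $\hh(X)$) or refine the tie-breaking in the definition of $t_\hh$ to enforce $\ch(t_\hh) \geq t\sstar$ on the relevant side, after which integration of the pointwise identity $\rcl_{\hh, t_\hh}(p) = 0$ closes the proof.
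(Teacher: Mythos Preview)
Your approach is essentially the same as the paper's: both reduce to checking that $\1{\ch(p) \geq t\sstar} = \1{p \geq t_\hh}$ via monotonicity of $\ch$ and then invoke the closed-form expression for $\rclht$ (\autoref{prop:rcl:expression}, specifically \autoref{eq:proof:rcl3}). If anything, you are more careful at the boundary than the paper. In the infimum case the paper simply asserts that $\ch(t_\hh) \geq t\sstar$ ``by definition'' and calls $t_\hh$ the smallest \emph{element} of the super-level set, which need not hold without right-continuity of $\ch$; this is precisely the edge case you flag. Your explicit use of condition (ii)---that a disagreement at a point with $\ch(p)=t\sstar$ still contributes zero regret because of the factor $|\ch(p)-t\sstar|$---also cleanly handles the flat-region ambiguity in the first case, where the paper's claimed biconditional $p \geq t_\hh \Leftrightarrow \ch(p) \geq \ch(t_\hh)$ can fail for non-strictly monotone $\ch$ yet the conclusion remains valid for exactly the reason you give.
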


When the calibration curve is non-decreasing (which is assumed by isotonic regression), \autoref{prop:th} shows that instead of recalibrating the classifier $\hh$, one can achieve zero miscalibration regret by adjusting the threshold: $\delta_{\cch, t^\star}\!=\!\delta_{\hh, c^{-1}(t^\star)}$ (\autoreff[b]{fig:rcl:source}).
Note that these two approaches are equally costly as calibrated probabilities must be estimated in both cases. Calibration however has the advantage of being usable even if the calibration curve is not monotonic.
The dual formulation between optimal threshold on $\hh$ and calibration brings another insight: if the calibrated probabilities still hide over- or underconfident subgroups (i.e., nonzero grouping loss), it could lead to suboptimal decisions that cannot be tackled simply by setting a better threshold.

\subsection{Bounding the grouping-loss-induced regret of the calibrated classifier}
\label{sec:rgl}
\vspace{-2mm}
Calibration does not directly control individual probabilities $\f$: within a bin $p$, the individual probabilities $f^\star(x)$ may vary around their mean $c(p)$ with a variance $\GL(p)$. In what follows, we address the regret arising from this variance. Unlike miscalibration, there is no one-to-one correspondence between a specific grouping loss level and the resulting regret $\rglh$. Specifically, $\GL(p)$ represents the variance of individual probabilities, while $\rglh$ depends on their distribution. As shown by the expression of the $\GL$-induced regret (\autoref{thm:rgl:expression}), this regret depends on the proximity of individual probabilities $f^\star(x)$ to the threshold $t^\star$, as well as on the region of agreement between $\delta_{f^\star, t^\star}$ and $\delta_{c(p), t^\star}$. Hence for the same $\GL$, the regret $\rglh$ can vary. This is why we provide lower and upper bounds on $\rglh$. Our bounds are derived by identifying, among all distributions $\PPG{\hstar(X)}{\hh(X)=p}$ with a fixed mean $c(p)$ and variance $\GL(p)$, the distributions that result in the lowest and highest regrets.\looseness=-1

\begin{restatable}[Grouping regret lower bound, \ref{proof:regret:lb}]{theorem}{thmRegretLB}
\label{thm:regret:lb}
    The conditional grouping regret is lower bounded for all $p \in \supp{\hh(X)}$ as $ \rglh(p) \geq \lglh(p)$, by:
    \begin{equation}
        \lglh(p) \triangleq \unorm\left[\GL(p)-\vmin(p)\right]_{+}
    \end{equation}
    \vspace{-7mm}
    \begin{align*}
        &\text{with:} [\cdot]_+ = \max \{\cdot, 0\}\\
        &\text{and:}\;\; %
        \vmin(p)\triangleq \begin{cases}(1\!-\!\ch(p))\left(\ch(p)\!-\!t^{\star}\right) & \!\!\text { if } \ch(p) \geq t^{\star} \\ \ch(p)\left(t^{\star}\!-\!\ch(p)\right) & \!\!\text { otherwise } %
        \end{cases}
        .
    \end{align*}
    \textbf{Tightness.} The lower bound is tight. %
    For any $p \in \supp{\hh(X)}$ for which $\hh$ admits at least 3 antecedent values,
    and for all admissible mean $\ch(p)\in[0, 1]$ and variance $\GL(p) \in [0, \ch(p)(1-\ch(p))]$, there exists a distribution of $(X, Y)$ such that the conditional distribution $\PPG{\hstar(X)}{\hh(X)=p}$ has mean $\ch(p)$ and variance $\GL(p)$, and the grouping regret attains its lower bound: $\rglh(p) = \lglh(p)$.

\end{restatable}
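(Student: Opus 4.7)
The plan is to reduce the problem, via the conditioning on $\hh(X)=p$, to a one-dimensional moment problem for $Q\triangleq\hstar(X)\mid \hh(X)\!=\!p$, whose mean is $\ch(p)$ and whose variance is exactly $\GL(p)$. First I would observe that $\delta_{\cch,t\sstar}$ is constant on the level set $\{\hh=p\}$, taking value $d\triangleq\1{\ch(p)\geq t\sstar}$, while the oracle decision is $\1{Q\geq t\sstar}$. Combining this with the pointwise utility gap of a wrong decision (an easy consequence of $t\sstar=(U_{00}-U_{10})/\unorm$), the conditional grouping regret rewrites
\begin{equation*}
\rglh(p)=\unorm\cdot\begin{cases}\espk{(t\sstar-Q)_+}{\hh(X)\!=\!p}&\text{if }\ch(p)\geq t\sstar,\\ \espk{(Q-t\sstar)_+}{\hh(X)\!=\!p}&\text{otherwise.}\end{cases}
\end{equation*}

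The key analytic step is a uniform bound on the two truncated moments. Because $Q\in[0,1]$ implies $(Q-t\sstar)_+\leq 1-t\sstar$ and $(Q-t\sstar)_-\leq t\sstar$, one has $(Q-t\sstar)_+^2\leq(1-t\sstar)(Q-t\sstar)_+$ and $(Q-t\sstar)_-^2\leq t\sstar(Q-t\sstar)_-$. Adding, taking conditional expectation, and using $(Q-t\sstar)^2=(Q-t\sstar)_+^2+(Q-t\sstar)_-^2$ together with $\espk{(Q-t\sstar)^2}{\hh\!=\!p}=\GL(p)+(\ch(p)-t\sstar)^2$ yields
\begin{equation*}
(1-t\sstar)A+t\sstar B\;\geq\;\GL(p)+(\ch(p)-t\sstar)^2,
\end{equation*}
where $A$ and $B$ are the conditional expectations of $(Q-t\sstar)_+$ and $(Q-t\sstar)_-$. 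Since $A-B=\ch(p)-t\sstar$ by linearity, eliminating $A$ (resp.\ $B$) in the case $\ch(p)\geq t\sstar$ (resp.\ $\ch(p)<t\sstar$) gives $B\geq\GL(p)-(1-\ch(p))(\ch(p)-t\sstar)$ and symmetrically $A\geq\GL(p)-\ch(p)(t\sstar-\ch(p))$. Since $A,B\geq 0$, taking the positive part on the right-hand side and multiplying by $\unorm$ produces exactly the announced bound $\lglh(p)=\unorm[\GL(p)-\vmin(p)]_+$.

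For tightness, equality in both squared inequalities holds iff $Q\in\{0,t\sstar,1\}$ almost surely, which is precisely what the three-antecedent hypothesis permits to realise. I would split two sub-cases, treating $\ch(p)\geq t\sstar$ in detail and invoking symmetry for the other. When $\GL(p)\leq\vmin(p)$, a two-point distribution supported in $[t\sstar,1]$ with mean $\ch(p)$ and variance $\GL(p)$ gives $B=0$ and matches the bound (which is then $0$). When $\GL(p)>\vmin(p)$, I would solve the three linear moment equations (total mass, mean, second moment) for the weights $(w_0,w_{t\sstar},w_1)$ of a distribution on $\{0,t\sstar,1\}$, obtaining explicit formulas whose non-negativity I would check on the full admissible range $\GL(p)\in[0,\ch(p)(1-\ch(p))]$. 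Finally, using three distinct antecedents $x_0,x_{t\sstar},x_1\in\hh^{-1}(\{p\})$, I would construct a joint law for $(X,Y)$ that places $\PPG{Y\!=\!1}{X\!=\!x_i}$ at $0$, $t\sstar$, $1$ respectively with masses $w_0,w_{t\sstar},w_1$, so that the induced conditional law of $Q$ is the extremal one.

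The main obstacle I anticipate is the bookkeeping around $\vmin$: the threshold effect in the bound hides a phase transition between a two-point and a three-point extremal distribution, and the non-negativity of the three weights has to be verified uniformly in $(\ch(p),\GL(p))$ over the full admissible region. The inequality step itself is essentially Cauchy--Schwarz-free and short; the delicate part is making the case analysis clean and explicitly exhibiting the extremal distribution so that the tightness claim holds for \emph{every} admissible pair, not merely in a generic regime.
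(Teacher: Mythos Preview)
Your proposal is correct and reaches the same bound by a route that is close in spirit to the paper's but more direct. The paper first establishes an auxiliary variance lemma: writing $w=\PPG{Q\geq t\sstar}{\hh(X)=p}$ and $c^{+}=\espk{Q}{Q\geq t\sstar,\hh(X)=p}$, it upper-bounds $\GL(p)$ by $w(c^{+}-t\sstar)+\ch(p)(t\sstar-\ch(p))$ via Bhatia--Davis applied separately on $[0,t\sstar]$ and $[t\sstar,1]$, and then substitutes the regret reformulation $\rglh(p)=\unorm\bigl(w(c^{+}-t\sstar)-\1{\ch(p)\geq t\sstar}(\ch(p)-t\sstar)\bigr)$ to isolate $\rglh(p)$. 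Your pointwise bounds $(Q-t\sstar)_+^2\leq(1-t\sstar)(Q-t\sstar)_+$ and $(Q-t\sstar)_-^2\leq t\sstar(Q-t\sstar)_-$ encode exactly the same range information but bypass the $(w,c^{+},c^{-})$ parameterisation and the law-of-total-variance decomposition entirely; the elimination of $A$ or $B$ via $A-B=\ch(p)-t\sstar$ is then a one-line substitution that lands on $\GL(p)-\vmin(p)$ directly. For tightness the two constructions coincide when $\GL(p)>\vmin(p)$ (both use the three-atom law on $\{0,t\sstar,1\}$); when $\GL(p)\leq\vmin(p)$ the paper places three atoms at $\{t\sstar,\ch(p),1\}$ (or its mirror) whereas your two-point law in $[t\sstar,1]$ is slightly leaner and in fact shows that two antecedents already suffice in that sub-case. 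Either way the tightness claim as stated follows.
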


Here $\vmin(p)$ represents the largest possible variance GL with zero regret.
This is achieved by a distribution of the true probabilities $\PPG{\hstar(X)}{\hh(X)=p}$ having all its mass on 0 and $t^\star$, or $t^\star$ and 1 depending on the side of $c(p)$ to $t^\star$.
Such a distribution ensures that $\delta_{\f, \t}$ and $\delta_{c(p), \t}$ always agree, as individual probabilities always lie on the same side of the threshold as $c(p)$, thus guaranteeing zero regret.

\begin{figure}[b!]
    \centerline{%
	\includegraphics[width=.8\linewidth]{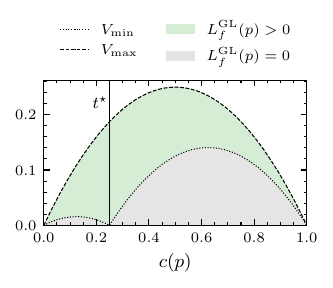}%
    }%
    \caption{\textbf{Impact of the grouping loss on the minimal regret $\lglh$ of a recalibrated classifier.} In a bin $p \in \supp{\hh(X)}$, the grouping loss exceeding $\vmin(p)$ incurs to the recalibrated classifier a nonzero regret $\rglh(p)$ of at least $\unorm \left[\GL(p)-\vmin(p)\right]_{+}$ (\autoref{thm:regret:lb}). Measuring a variance smaller than $\vmin(p)$ is not informative with respect to the grouping regret as there exists $(\hstar, \hh)$ where $\GL(p) = \vmin(p)$ and $\rglh(p) = 0$. The variance cannot exceed $\vmax(p) \triangleq \ch(p)(1 - \ch(p))$. The informative area is highlighted in green.%
    \label{fig:rgl:lb}%
    }
\end{figure}

Measuring a GL in the range $[0, \vmin]$ does not provide informative insights on the regret (\autoref{fig:rgl:lb}), as it could correspond to a distribution with zero regret, like the one described above. Yet it is an ``adversarial'' distribution in the sense that most distributions with lower variance are still likely to incur some regret. Conversely, all distributions with a $\GL$ higher than $\vmin$ will necessarily incur regret proportional to $\GL(p) - \vmin(p)$: their high variance implies that some $\f$ values exist on the wrong side of the threshold, implying disagreements between $\delta_{\f, \t}$ and $\delta_{c(p), \t}$, and thus regret.

Interestingly, $\vmin$ equals 0 for $\ch = t^\star$ and is small for values close to $t^\star$ (\autoref{fig:rgl:lb}). On the threshold, the presence of grouping loss necessarily incurs some regret. This highlights that the more the grouping loss occurs when the calibrated probability $\ch$ is close to the decision threshold $t\sstar$, the more likely it will incur regret. The grouping loss matters the most in regions of high uncertainty, \eg, when the calibrated probabilities $\ch$ are close to 0.5 with a threshold at $\t = 0.5$.

\begin{restatable}[Grouping regret upper bound, \ref{proof:regret:ub}]{theorem}{thmRegretUB}
\label{thm:regret:ub}
    The conditional grouping regret is upper bounded for all $p \in \supp{\hh(X)}$ as $\rglh(p) \leq \uglh(p)$, by:\looseness=-1
    \begin{equation}
        \uglh\!(p) \triangleq
         \tfrac{1}{2}\unorm\!\!\pr{\!\sqrt{\GL(p)\!+\!(\c(p)\!-\!t^{\star})^2}\!-\!|\c(p)\!-\!t^{\star}|\!}
        .
    \end{equation}
    \textbf{Tightness.}
    The upper bound is tight when $t\sstar = \frac{1}{2}$. For any $p \in \supp{\hh(X)}$,
    and for all admissible mean $\ch(p)\in[0, 1]$ and variance $\GL(p) \in [0, \ch(p)(1-\ch(p))]$,
    there exists a distribution of $(X, Y)$ such that the conditional distribution $\PPG{\hstar(X)}{\hh(X)=p}$ has mean $\ch(p)$ and variance $\GL(p)$, and the grouping regret attains its upper bound: $\rglh(p) = \uglh(p)$.
\end{restatable}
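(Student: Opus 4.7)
The plan is to reduce $\rglh(p)$ to an expectation involving $Z := \hstar(X)$ conditioned on $\hh(X) = p$, which has mean $\ch(p)$ and variance $\GL(p)$ by construction, and then apply a Jensen-type inequality.

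First, I would compute from \eqref{eq:eu:pointwise} and \eqref{eq:optimal_classifier} that on the event $\{\delta_{\hstar,\t}(x) \neq \delta_{\cch,\t}(x)\}$ the pointwise regret is $\unorm\,|\hstar(x) - \t|$, and zero elsewhere. Since $\delta_{\cch,\t}$ is constant on the bin $\{\hh = p\}$, equal to $\1{\ch(p) \geq \t}$, the disagreement set is $\{Z < \t\}$ when $\ch(p) \geq \t$ and $\{Z \geq \t\}$ otherwise. Integrating over the bin yields
\begin{equation*}
\rglh(p) = \unorm\, \mathbb{E}\!\bra{(\t-Z)_+\,\1{\ch(p) \geq \t} + (Z-\t)_+\,\1{\ch(p) < \t}}.
\end{equation*}
Combining the elementary identities $(Z-\t)_+ - (\t-Z)_+ = Z-\t$ and $(Z-\t)_+ + (\t-Z)_+ = |Z-\t|$, both cases collapse into the symmetric form
\begin{equation*}
\rglh(p) = \tfrac{1}{2}\,\unorm\,\pr{\mathbb{E}\,|Z-\t| - |\ch(p) - \t|}.
\end{equation*}

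Second, Jensen's inequality applied to $y \mapsto y^2$ gives
\begin{equation*}
\mathbb{E}\,|Z - \t| \leq \sqrt{\mathbb{E}\,(Z-\t)^2} = \sqrt{\GL(p) + (\ch(p) - \t)^2},
\end{equation*}
using the bias--variance split around $\t$. Substituting delivers exactly $\uglh(p)$, establishing the upper bound.

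For tightness when $\t = \tfrac{1}{2}$, equality in Jensen forces $(Z-\t)^2$ to be almost surely constant, so $Z$ must be supported on the two symmetric points $\{\t - a, \t + a\}$. Matching the prescribed mean and variance yields $a = \sqrt{\GL(p) + (\ch(p) - \t)^2}$ and masses $q = \tfrac{1}{2}(1 + (\ch(p)-\t)/a)$ and $1-q$, which lie in $[0,1]$ since $|\ch(p) - \t| \leq a$. The only remaining requirement $a \leq \tfrac{1}{2}$ (so that $\{\t \pm a\} \subset [0,1]$) follows from admissibility $\GL(p) \leq \ch(p)(1-\ch(p))$, giving $a^2 \leq \ch(p)(1-\ch(p)) + (\ch(p) - \tfrac{1}{2})^2 = \tfrac{1}{4}$. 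To realize this conditional law as a joint distribution of $(X,Y)$, I would take two antecedents $x_+, x_- \in \Xc$ of $p$ under $\hh$ with conditional masses $q$ and $1-q$ and set $\PPG{Y = 1}{X = x_\pm} = \t \pm a$; the resulting two-point conditional law of $\hstar(X)\mid \hh(X)=p$ achieves $\rglh(p) = \uglh(p)$.

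The main difficulty is the tightness argument for general $\t$. For $\t \neq \tfrac{1}{2}$, the symmetric two-point saturator may fall outside $[0,1]$ even for admissible $(\ch(p),\GL(p))$, so Jensen cannot be saturated; closing the gap would require a moment-constrained optimization over laws on $[0,1]$, which the theorem sidesteps by restricting tightness to $\t = \tfrac{1}{2}$.
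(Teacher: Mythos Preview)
Your argument is correct and takes a genuinely different, more direct route than the paper. The paper first rewrites $\rglh(p)$ in terms of $w(p)=\PPG{\hstar(X)\geq t^\star}{\hh(X)=p}$ and $c^+(p)=\espk{\hstar(X)}{\hstar(X)\geq t^\star,\hh(X)=p}$ (Lemma~\ref{lem:regret:reform}), then invokes the variance \emph{lower} bound $\var{Z}\geq w(1-w)(c^+-c^-)^2$ from Lemma~\ref{lem:var} to obtain an inequality parametrized by $w$, and finally maximizes analytically over $w\in[0,1]$ to reach $\uglh(p)$. Your approach collapses the two cases $c(p)\gtrless t^\star$ into the single identity
\[
\rglh(p)=\tfrac{1}{2}\unorm\bigl(\esp{|Z-t^\star|}-|c(p)-t^\star|\bigr),
\]
after which a one-line Jensen/Cauchy--Schwarz step $\esp{|Z-t^\star|}\leq\sqrt{\esp{(Z-t^\star)^2}}=\sqrt{\GL(p)+(c(p)-t^\star)^2}$ finishes the bound. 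This is shorter, avoids the auxiliary variables $w,c^+,c^-$, and makes tightness transparent: equality in Jensen forces $|Z-t^\star|$ to be a.s.\ constant, so the unique saturating law is the two-point distribution symmetric about $t^\star$, which is exactly what the paper constructs (its points $a,b$ in the tightness proof simplify to $t^\star\pm\sqrt{\GL(p)+(c(p)-t^\star)^2}$). The paper's machinery, by contrast, has the advantage of being shared with the lower-bound proof (Theorem~\ref{thm:regret:lb}), where the same $(w,c^+)$ reformulation combined with the variance \emph{upper} bound of Lemma~\ref{lem:var} yields $\lglh$; your symmetric identity does not obviously help there. One minor omission: you should note that when $\hh^{-1}(\{p\})$ is a singleton the only admissible variance is $\GL(p)=0$, in which case $\rglh(p)=\uglh(p)=0$ trivially; the paper handles this edge case explicitly.
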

This upper bound was obtained by finding the distribution $\PPG{\hstar(X)}{\hh(X)=p}$, with fixed mean $c(p)$ and variance $\GL(p)$, that leads to the largest regret (details in the proof \ref{proof:regret:ub}). A more compact upper-bound immediately follows since $\uglh(p) \leq \tfrac{1}{2}\unorm\sqrt{\GL(p)}$, with equality when $\ch(p) = t\sstar$. This upper bound scales as a squared root of the grouping loss, meaning that large $\GL$ opens the door to large $\rglh$, and small $\GL$ implies small $\GL$-induced regret. In particular $\GL(p) = 0$ implies $\rglh(p) = 0$.

\paragraph{$\lglh$ and $\uglh$ are informative.}
The upper and lower bounds are entirely defined by the calibration curve $\ch(p)$, the grouping loss $\GL(p)$, as well as the threshold $\t$. \autoref{fig:lb:ub} plots the bounds for different values of these quantities. Importantly, it shows that the bounds are informative as the gap between them is not too large. It also illustrates that the $\GL$-induced regret is larger when $\ch$ is close to the decision threshold, and that it increases when $\GL$ increases.

We underline that these bounds are \emph{distribution-agnostic} and \emph{model-agnostic}, requiring no assumptions on either the data distribution, $\f$, or the probabilistic classifier $\hh$. Moreover, thanks to the $\GL$ estimator proposed by \citet{Perez-Lebel2023}, all quantities involved in these bounds can be evaluated in practice.

\begin{figure}[t!]
    \centering
    \includegraphics[width=\linewidth]{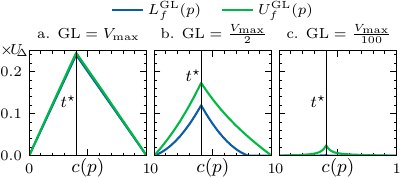}
    \caption{\textbf{Impact of the grouping loss on the bounds of the regret of the recalibrated classifier.}
    Lower and upper bounds $\lglh(p)$ and $\uglh(p)$ as a function of the calibrated probabilities $\ch(p)\in[0, 1]$ for a bin $p \in \supp{\hh(X)}$, in three settings of grouping loss: maximal (a), intermediate (b) and small (c).
    The gap between the lower and upper bounds reduces when the grouping loss is high or low.
    $\vmax \triangleq \ch(p)(1 - \ch(p))$.
    }
    \label{fig:lb:ub}
\end{figure}

\begin{definition}[Regret estimators]
    \label{def:intro:estimators}
    Let $\chat : [0, 1] \to [0, 1]$ and $\GLhat : [0, 1] \to [0, \tfrac{1}{4}]$ be the estimates of $\c$ and $\GL$. We define the plugin estimators of the conditional regrets and bounds for $p \in [0, 1]$ as:
    \vspace{-2mm}
    \begin{align*}
        \lglhhat(p) & \triangleq \textstyle\unorm\left[\GLhat(p)-\vmin(p)\right]_{+}\\
        \uglhhat(p) & \triangleq \textstyle\tfrac{1}{2}\unorm\!\!\pr{\!\sqrt{\GLhat(p)\!+\!(\chat(p)\!-\!t^{\star})^2}\!-\!|\chat(p)\!-\!t^{\star}|\!}\\
        \rglhhat(p) & \triangleq \tfrac{1}{2}(\lglhhat(p) + \uglhhat(p))\\
        \rclhthat(p) & \triangleq \unorm |\chhat(p) - t\sstar| \1{\1{\chhat(p) \geq t\sstar} \neq \1{p \geq t}}\\
        \rhthat(p) & \triangleq \rclhthat(p) + \rglhhat(p)
        .
    \end{align*}
\end{definition}
We note $\lglhhat$, $\uglhhat$, $\rglhhat$, $\rclhthat$, and $\rhthat$ their counterpart obtained by averaging over the bins of $\hh(X)$.

\paragraph{Two regimes} Comparing the conditional calibration and grouping regrets highlights two different regimes. Within a bin $p$, when $\ch(p)$ is close to $t^{\star}$, grouping loss matters more than miscalibration in terms of regret. When $\ch(p) = t\sstar$, the effect of the grouping loss on the regret is maximal: $\GL(p) \leq \rglh(p) \leq \tfrac{1}{2}\sqrt{\GL(p)}$ (\autoref{fig:lb:ub}). Conversely when $\ch(p)$ is further from $t^{\star}$, miscalibration leading to disagreement between $\delta_{\hh,t}$ and $\delta_{\ch\circ\hh,t\sstar}$, if any, typically matters more (\autoref{eq:rcl}).
On the overall population, when averaging across bins, these effects blend according to the distribution of $\hh(X)$.
Conditional estimators (\autoref{def:intro:estimators}) enables detecting these regimes.

\subsection{Grouping Loss Adaptative Recalibration}
\label{sec:glar}

Recalibration only addresses part of the overall regret $\rht$, leaving unchanged the grouping-loss part.
The estimation of $\GL(p)$ from \cite{Perez-Lebel2023} involves finding regions in the input space $\Xc$ that explain the variance of the true probabilities $\hstar(X)$ whithin a bin $\hh(X)=p$.
This work naturally motivates a multicalibration method to reduce grouping loss by using the same estimated region probabilities.

\begin{definition}[GLAR]
    \label{def:glar}
    Let $\mathcal{P} : \Xc \to \R$ be a partition of the feature space $\Xc$.
    We define the Grouping Loss Adaptative Recalibration (GLAR) of a function $\hh : \Xc \to \R$ relative to a partition $\Pc$ as:
    \begin{equation}
        \label{eq:recal_ours}
        \begin{aligned}
        \hhp \colon & \Xc \to [0, 1] \\
        &x \mapsto \espk{Y}{\hh(X) = \hh(x),\mathcal{P}(X)=\mathcal{P}(x)}
        .
        \end{aligned}
    \end{equation}
\end{definition}
The GLAR correction consists of replacing the output of $f$ with that of $f_{\Pc}$.
GLAR provides a calibrated classifier $f_{\Pc}$ that has a lower grouping loss than the original classifier (\autoref{prop:glar}).
Moreover, the decision based on the GLAR-corrected estimator $\hh_{\Pc}$, \ie{} $\delta_{\hhp, t\sstar}$, yields a better expected utility than both the decision based on the initial classifier $\delta_{\hh,t}$ and the recalibrated classifier $\delta_{\cch, t\sstar}$ (\autoref{prop:hierarchy}).
The choice of partition $\Pc$ is important. A trivial partition in one region would give histogram binning recalibration. A too fined-grained partition would give bad estimations because of a low number of samples per partition.
GLAR has the notable advantage of reusing the partitions and local probabilities computed for the estimation of $\rglhhat$, and thus does not incur any additional cost once the regret is estimated.
The implementation details of the method are given in \autoref{sec:details}.

\section{Experiments: validation of regret bounds and link with post-training}
\label{sec:experiments}

\paragraph{Settings} For evaluation on practical scenarios, we consider a hate-speech detection task on real-world language datasets. From the perspective of a platform (\eg{} a social network), failing to identify hate speech will incur a reputation cost, while wrongly identifying a text as hate speech will incur an opportunity cost (\eg{} loss of content and users in the long term). Formally, these costs can be gathered into a $2 \times 2$ utility matrix $U$ to be determined by the platform. The goal is then to solve a binary cost-sensitive decision-making problem with classes \textit{hate speech} ($Y = 1$) and \textit{no hate speech} ($Y=0$).
We investigate post-training of pre-trained models on hate-speech datasets, with potential distribution shift. We benchmark 6 pre-trained models (\autoref{tab:model_translation}) on 14 real-world datasets (\autoref{tab:ds_translation}), and 9 post-training methods.
These include calibration methods, both classical (isotonic regression \citep{Zadrozny2002}, Platt scaling \citep{Platt1999a}, histogram binning \citep{Zadrozny2001b}) and more recent (Scaling-Binning \citep{Kumar2019}, Meta-Cal \citep{Ma2021}); finetuning, where only the last layer is fine-tuned; stacking, where \texttt{scikit-learn}'s \citep{Pedregosa2011} Random Forests or Gradient Boosted Trees are trained on the concatenation of the inputs with the probabilistic outputs of the pretrained model; and finally the GLAR correction (\autoref{sec:glar}).

For each model, we extract the embedding space (usually the penultimate layer) and consider post-training from this representation to the class probability space. We draw utility matrices corresponding to 11 values of $t\sstar$ in the range $[0.01, 0.99]$. We consider decision rules formed by thresholding estimated probabilities at $t\sstar$, \ie{} $\delta_{\hh, t\sstar}$.
Experimental details are given in \autoref{sec:details}.
Our experimental question is whether our regret estimators (\autoref{def:intro:estimators}) are better than classical performance metrics (Brier score, AUC, accuracy, calibration errors defined in \autoref{sec:details}) at identifying post-training gains.

\begin{figure}[b!]
    \centering
    \raisebox{2mm}{%
    \includegraphics[width=0.48\linewidth]{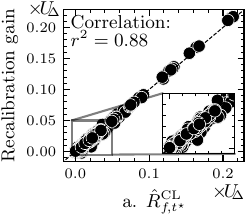}}%
    \includegraphics[width=0.52\linewidth]{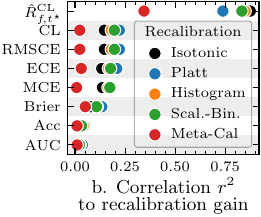}
    \caption{\textbf{$\rclhthat$ captures the gain of recalibration.} (a) Gain in utility of isotonic recalibration versus the regret to the recalibrated classifier $\rclhtstarhat$, for each (model, dataset, $t\sstar$). (b) Pearson's $r^2$ correlation of the gain in utility of each recalibration method with $\rclhtstarhat$ and other metrics.}
    \label{fig:rcl}
\end{figure}

\paragraph{$\rclhtstar$ captures the gain of recalibration} First, we estimate the calibration regret $\rclhtstar$ of each model $\hh$ using the estimator $\rclhtstarhat$ from \autoref{def:intro:estimators}. We estimate $\ch$ using an histogram binning with 15 equal-mass bins.
We compare the expected gain of recalibration, $\rclhtstarhat$, to the gain obtained by recalibrating the pre-trained models using Isotonic Regression. \autoreff[a]{fig:rcl} shows a near perfect identity relation between the estimated calibration regret $\rclhtstar$, and the gain obtained by recalibrating the model (Pearson's correlation: $r^2 = 0.88$). \autoreff[b]{fig:rcl} shows the correlation between $\rclhtstar$ and the gain obtained with 4 other recalibration methods: Platt Scaling, Histogram Binning, Scaling-Binning, and Meta-Cal.

Miscalibrated models do not necessarily incur regret compared to the calibrated classifier. Indeed, the 4 calibration error metrics (ECE, MCE, RMSCE, and CL) are very poorly correlated to the gain of recalibration for any of the 5 recalibration approaches (\autoreff[b]{fig:rcl}, see \autoref{sec:app:gains} for detailed figures). This is because for a given utility level $t\sstar$, the model does not need to be calibrated to achieve $\rcl = 0$ as shown in \autoref{sec:rcl}.
The expected gain from recalibration is best measured by $\rclhthat$.\looseness=-1

\begin{figure}[t!]
    \centering
    \raisebox{4mm}{%
    \includegraphics[width=0.48\linewidth]{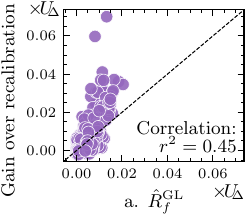}}%
    \includegraphics[width=0.52\linewidth]{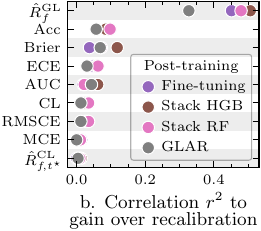}
    \caption{\textbf{Gain of Post-Training on top of recalibration.} (a) Gain in utility of fine-tuning over isotonic recalibration versus $\rglhhat$, for each (model, dataset, $t\sstar$). (b) Pearson's $r^2$ correlation of the gain in utility over isotonic recalibration with $\rglhhat$, $\rclhtstarhat$, and other metrics.\looseness=-1}
    \label{fig:post:excess}
\end{figure}

\paragraph{Gain of post-training on top of recalibration}
The calibrated classifier can be suboptimal due to the grouping loss and $\rglh$. We demonstrate this with 4 post-training methods (fine-tuning, stacking with boosted trees or random forest, and GLAR). We compare their gain in utility to the gain of isotonic recalibration (\autoreff[a]{fig:post:excess}). $\rglhhat$ is by far the metric that best explains the excess gain of post-training among all the other metrics. Across the 4 post-training methods, $\rglhhat$ has $r^2 \approx 0.5$ while calibration errors metrics (ECE, MCE, RMSCE, CL) or model-performance metrics (Brier, AUC) all have $r^2 \leq 0.1$ (\autoreff[b]{fig:post:excess}). The lower levels of correlation of $\rglhhat$ compared to the levels obtained for the calibration regret $\rclhthat$ in \autoref{fig:rcl} are partly due to the fact that $\rglhhat$ is derived from bounds on the regret. For any value of $\rglh$, the regret can vary in a range, hence decreasing the correlation. The fact that $y \gtrsim x$ on \autoreff[a]{fig:post:excess} means that post-training yields a higher gain than what was given by $\rglhhat$. This is expected since the grouping loss estimator can only capture a fraction of the total $\GL$.

\begin{figure}[t!]
    \centering
    \hfill%
    \includegraphics[width=0.95\linewidth]{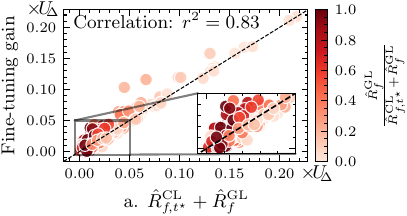}%
    \hspace*{-1mm}

    \vspace{5mm}
    \includegraphics[width=0.9\linewidth]{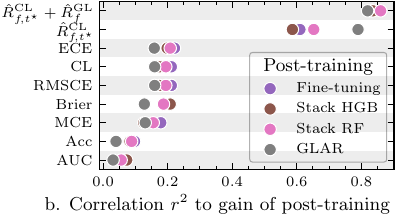}
    \hspace{7mm}
    \caption{\textbf{Gain of Post-Training.} (a) Gain in utility of fine-tuning versus $\rhtstarhat = \rclhtstarhat + \rglhhat$, for each (model, dataset, $t\sstar$). (b) Pearson's $r^2$ correlation of the gain in utility of each post-training method with $\rglhat$, $\rclhthat$, and other performance metrics.}
    \label{fig:post}
\end{figure}

\begin{figure*}[t!]
    \centering
    \includegraphics[width=\linewidth]{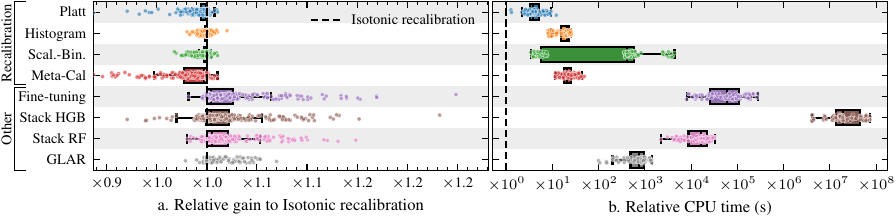}
    \caption{\textbf{Comparing various recalibration and advanced post-training.} Gain (a.) or CPU time (b.) of post-training methods relative to isotonic recalibration, for each (model, dataset, $t\sstar$).}
    \label{fig:perf:iso}
\end{figure*}

\paragraph{Gain of post-training}
Theory points to using $\rhtstarhat = \rclhtstarhat + \rglhhat$ to measure the potential utility gain from using a good post-training method. \autoreff[a]{fig:post} shows that the estimated $\rhtstarhat$ captures well the gain of fine-tuning ($r^2 = 0.83$ and a relation $y \approx x$). \autoreff[b]{fig:post} shows that the gain of stacking (with boosted trees or random forest) and GLAR reaches similar levels of correlation. $\rclhtstarhat$ reaches a lower correlation of $r^2 = 0.6$ which is expected since $\rclht$ is blind to gains on top of recalibration. All other metrics, calibration errors, Brier score and AUC correlate poorly to the gain of post-training ($r^2 \leq 0.2$).

\paragraph{Improving recalibration hits diminishing returns}
In the binary setting, isotonic regression is one of the simplest recalibration methods and is parameter-free. \autoreff[a]{fig:perf:iso} shows the utility gain of each of the remaining 8 post-training methods relatively to the gain of isotonic regression. Isotonic regression provides better utility gains that any of the other recalibration methods, even the more advanced Scaling-Binning and MetaCal (see also \autoref{fig:recal:excess:t}). \autoreff[b]{fig:perf:iso} shows the computational time of the post-training methods. Isotonic regression is also the fastest recalibration method, by at least a factor 10. Fine-tuning and stacking provides better gain than recalibration, but their cost can become prohibitive: stacked boosted trees are $10^7$ times more expensive than isotonic regression. GLAR provides a cheaper post-training method but with lower gains.
These observations along with theoretical results on the regret, suggest that recalibration reaches a ceiling in its ability to improve decision-making. Enhancing recalibration methods often does not lead to better decision-making. Instead, the focus should be on methods to reduce the grouping regret, such as stacking, fine-tuning or developping cheaper alternatives in the spirit of GLAR.\looseness=-1

\section{Conclusion}

\label{sec:conclusion}

This work quantifies both theoretically and empirically how imperfect class-conditional probability estimates affect the expected utility of downstream decisions. We provided an analytical expression for the regret in expected utility caused by miscalibration, along with upper and lower bounds on the regret due to grouping loss. Our experiments show that these quantities better capture the potential gains from recalibration and post-training in expected utility compared to common metrics. In the future, it would be of interest to extend theses results to the multiclass setting.

\subsubsection*{Acknowledgments}
We thank Thomas Moreau for valuable discussions that helped refine the derivations of the bounds.

\bibliography{main}

\section*{Checklist}

 \begin{enumerate}

 \item For all models and algorithms presented, check if you include:
 \begin{enumerate}
   \item A clear description of the mathematical setting, assumptions, algorithm, and/or model. Yes. For the theoretical part, all assumptions, definitions are specified and proofs given in appendix. For the experimental part, a detailed procedure is given in \autoref{sec:details} and a python code repository enables reproducing the results (\url{https://github.com/aperezlebel/decision_suboptimal_classifiers}).
   \item An analysis of the properties and complexity (time, space, sample size) of any algorithm. Not Applicable.
   \item (Optional) Anonymized source code, with specification of all dependencies, including external libraries. Yes.
 \end{enumerate}

 \item For any theoretical claim, check if you include:
 \begin{enumerate}
   \item Statements of the full set of assumptions of all theoretical results. Yes. All theorems have their assumptions defined.
   \item Complete proofs of all theoretical results. Yes. All proofs are given in appendix.
   \item Clear explanations of any assumptions. Yes.
 \end{enumerate}

 \item For all figures and tables that present empirical results, check if you include:
 \begin{enumerate}
   \item The code, data, and instructions needed to reproduce the main experimental results (either in the supplemental material or as a URL). Yes. The python code repository is given in the supplemental material.
   \item All the training details (e.g., data splits, hyperparameters, how they were chosen). Yes. The training details are given in \autoref{sec:details}.
         \item A clear definition of the specific measure or statistics and error bars (e.g., with respect to the random seed after running experiments multiple times). Not Applicable
         \item A description of the computing infrastructure used. (e.g., type of GPUs, internal cluster, or cloud provider). Yes. Details given in \autoref{sec:details}.
 \end{enumerate}

 \item If you are using existing assets (e.g., code, data, models) or curating/releasing new assets, check if you include:
 \begin{enumerate}
   \item Citations of the creator If your work uses existing assets. Yes. All datasets and pre-trained models used are refered in \autoref{tab:ds_translation} and \autoref{tab:model_translation}.
   \item The license information of the assets, if applicable. Not Applicable.
   \item New assets either in the supplemental material or as a URL, if applicable. Not Applicable.
   \item Information about consent from data providers/curators. Not Applicable
   \item Discussion of sensible content if applicable, e.g., personally identifiable information or offensive content. Not Applicable.
 \end{enumerate}

 \item If you used crowdsourcing or conducted research with human subjects, check if you include:
 \begin{enumerate}
   \item The full text of instructions given to participants and screenshots. Not Applicable.
   \item Descriptions of potential participant risks, with links to Institutional Review Board (IRB) approvals if applicable. Not Applicable.
   \item The estimated hourly wage paid to participants and the total amount spent on participant compensation. Not Applicable.
 \end{enumerate}

 \end{enumerate}

\clearpage
\appendix
\onecolumn
\part*{Appendix}

\section{Proposed evaluation procedure}
\paragraph{A procedure to evaluate whether to post-train or not for given utilities}
Our findings suggest a new model evaluation procedure for batch decision-making under uncertainty. For a given trained model, it is important to determine whether post-training will benefit a decision task, what type of post-training is sufficient (recalibration or more advanced methods), and what the potential gain might be.
In practice, with large or regulatory-constrained models (\eg{} foundation models \citep{Zhou2023}, health models \citep{Collins2012} or both \citep{Moor2023}),
it is prohibitive to apply a post-train-and-find-out strategy.
Guiding post-training upfront is thus valuable. Concretely (\autoref{fig:procedure}), we propose to first measure the model's regret $\rht$ for the specified decision task using $\rhthat = \rclhthat + \rglhhat$, and to assess whether the model is utility-suboptimal ($\rht > 0$). The regret decomposition then informs on whether the regret comes mainly from miscalibration ($\rht \approx \rclht$). This enables selecting the appropriate post-training method and avoiding unnecessary, costly methods. The strong correlation between the estimated regret $\rhthat$ and the effective gain of post-training enables a cost-benefit analysis: is the potential utility gain worth the cost of post-training the model? Note that this cost not only includes computational costs, but also the cost of changing the production model and the potential risks of deploying a new model. If the analysis favors post-training, we start-over the procedure to check whether the post-training was effective (post-training can sometimes undermine the original model). If the analysis rejects post-training, or if the model was not suboptimal, the decision-maker assesses whether the achieved utility is satisfying for their task.
If not, gathering more informative features may then reduce aleatoric uncertainty and potentially improve the utility.
\begin{figure}[ht]
    \centering
    \resizebox{\linewidth}{!}{%
    \begin{tikzpicture}[node distance=2cm]
        \input{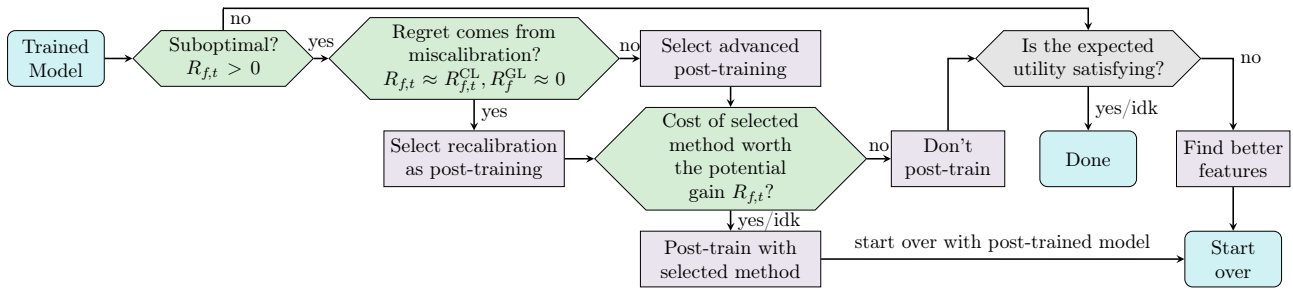}
    \end{tikzpicture}
    }
    \caption{\textbf{Proposed model evaluation procedure.} From a trained model, our work enables three major steps: assessing whether a model is suboptimal for the decision task at hand, quantifying the expected regret, and finding its origin: miscalibration or not (green boxes). This enables disambiguating between when to improve the model (post-train) and when to improve the data (finding better features).\looseness=-1
    }
    \label{fig:procedure}
\end{figure}

\clearpage

\section{Theoretical results}
\subsection{Parametrization of decision rules}
\begin{lemma}[Parametrization of decision rules]
\label{lem:decision:parametrized}
The set of binary decision rules $\Xc^{\{0, 1\}}$ can be parametrized with a function $p : \Xc \to [0, 1]$ and a threshold $t \in [0, 1]$:
\begin{equation}
    \Xc^{\{0, 1\}} = \cbr{\delta_{p, t} : x \mapsto \1{p(x) \geq t} \text{ where } p : \Xc \to [0, 1], t \in [0, 1]}
\end{equation}
\end{lemma}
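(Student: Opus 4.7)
The plan is to prove set equality by showing two inclusions. The inclusion $\supseteq$ is immediate since, for any $p:\Xc\to[0,1]$ and $t\in[0,1]$, the map $x\mapsto\1{p(x)\geq t}$ takes values in $\{0,1\}$, hence lies in $\Xc^{\{0,1\}}$.

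For the harder (but still easy) inclusion $\subseteq$, the idea is that any binary function already ``contains its own indicator parametrization.'' Given an arbitrary $\delta:\Xc\to\{0,1\}$, I would define $p\triangleq\delta$, viewed as a map into $[0,1]$ (legal since $\{0,1\}\subseteq[0,1]$), and choose any threshold $t\in(0,1]$, for instance $t=1$ or $t=\tfrac{1}{2}$. Then for every $x\in\Xc$ one has $\1{p(x)\geq t}=\1{\delta(x)=1}=\delta(x)$, so $\delta=\delta_{p,t}$ belongs to the right-hand set.

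There is essentially no obstacle here: the lemma just records a convenient reparametrization, and the only mild subtlety is noting that the threshold must be chosen strictly positive so that $\1{0\geq t}=0$, which is why I would pick $t\in(0,1]$ rather than $t=0$. Combining both inclusions yields the claimed equality.
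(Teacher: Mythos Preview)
Your proof is correct and follows essentially the same approach as the paper: both show the nontrivial inclusion by setting $p=\delta$ and thresholding at $t=\tfrac{1}{2}$ (the paper's choice), with the reverse inclusion being immediate. Your additional remark that any $t\in(0,1]$ works is a nice clarification the paper omits.
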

\begin{proof}[Proof of \autoref{lem:decision:parametrized}]
First let's show the inclusion of left in right. Let $\delta \in \Xc^{\{0, 1\}}$. Let's show that there exists a function $p : \Xc \to [0, 1]$ and a threshold $t \in [0, 1]$ such that $\delta = \delta_{p, t}$.

Define $p : x \mapsto \delta(x)$ and $t = \frac{1}{2}$. Then for all $x \in \Xc$, $\delta_{p, t}(x) = \1{p(x) \geq t} = \1{\delta(x) \geq \tfrac{1}{2}} = \delta(x)$.

The inclusion of right in left is trivial because all $\delta_{p, t}$ are decision rules from $\Xc$ to $\{0, 1\}$.
\end{proof}

\subsection{$\mathcal{D}_{\hh}$ contains all decisions relying on $\hh$ only}
\begin{lemma}[$\mathcal{D}_{\hh}$]
\label{lem:dh}
    $\mathcal{D}_{\hh} \triangleq \cbr{\delta_{g(\hh), t} : \quad g: [0, 1] \to [0, 1],\; t \in [0, 1]}$ contains all possible decision rules taking $\hh$ as input:
    \begin{equation}
        \mathcal{D}_{\hh} = \cbr{x \mapsto d(\hh(x)) : d \in [0, 1]^{\{0, 1\}}}
    \end{equation}
\end{lemma}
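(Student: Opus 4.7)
The plan is to prove the set equality by showing both inclusions. I read the right-hand side as $\{x\mapsto d(\hh(x)) : d \in \{0,1\}^{[0,1]}\}$, i.e., all maps from $\Xc$ to $\{0,1\}$ that factor through $\hh$ via some binary function $d$ on $[0,1]$ (the notation in the statement appears to be a small typo).

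For the forward inclusion $\mathcal{D}_{\hh} \subseteq \{x\mapsto d(\hh(x))\}$, I take an arbitrary element $\delta_{g\circ\hh,t} \in \mathcal{D}_{\hh}$ with $g:[0,1]\to[0,1]$ and $t\in[0,1]$, and define $d:[0,1]\to\{0,1\}$ by $d(p) \triangleq \1{g(p) \geq t}$. Then for every $x\in\Xc$, $\delta_{g\circ\hh,t}(x) = \1{g(\hh(x)) \geq t} = d(\hh(x))$, so $\delta_{g\circ\hh,t}$ factors through $\hh$.

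For the reverse inclusion, I fix any $d:[0,1]\to\{0,1\}$ and seek $g$ and $t$ such that $\delta_{g\circ\hh,t}(x)=d(\hh(x))$ for all $x$. Since $\{0,1\}\subseteq[0,1]$, the map $g\triangleq d$ is a valid function $[0,1]\to[0,1]$, and picking $t=1/2$ (or any threshold in $(0,1]$) gives $\1{g(p)\geq t}=\1{d(p)=1}=d(p)$ for every $p\in[0,1]$, since $d$ takes only values $0$ and $1$. Hence $\delta_{d\circ\hh,1/2}(x)=d(\hh(x))$, which puts the candidate element in $\mathcal{D}_{\hh}$.

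There is no real obstacle here, as both constructions are explicit and one-line. The only subtlety worth flagging in the write-up is that the parametrization of $\mathcal{D}_{\hh}$ with a $[0,1]$-valued $g$ and a threshold $t$ is equivalent, on the level of induced decision rules, to a single binary function $d$ of $\hh(x)$, because thresholding collapses the intermediate real-valued representation.
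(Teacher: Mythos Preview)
Your proof is correct and follows essentially the same approach as the paper's own proof: both directions use the identical constructions, namely $d(p)\triangleq\1{g(p)\geq t}$ for the forward inclusion and $g\triangleq d$ with $t=\tfrac12$ for the reverse. Your remark about the typo in the exponent notation is also apt (the paper's proof carries the same typo).
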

\begin{proof}[Proof of \autoref{lem:dh}]
Let $\delta \in \mathcal{D}_{\hh}$. By definition of $\mathcal{D}_{\hh}$, there exists $g : [0, 1] \to [0, 1]$ and $t \in [0, 1]$ such that $\delta(x) = \1{g(\hh(x)) \geq t}$ for all $x \in \Xc$. Define $d : p \mapsto \1{g(p) \geq t}$. Then $d \in [0, 1]^{\{0, 1\}}$ and $\delta = x \mapsto d(\hh(x))$.

Let $\delta \in \cbr{x \mapsto d(\hh(x)) : d \in [0, 1]^{\{0, 1\}}}$. Thus, there exists $d \in [0, 1]^{\{0, 1\}}$ such that $\delta = x \mapsto d(\hh(x))$. Define $g : p \mapsto d(p)$ and $t = \frac{1}{2}$. Then $\delta(x) = g(\hh(x)) = \1{g(\hh(x)) \geq t}$. Hence $\delta \in \mathcal{D}_{\hh}$.
\end{proof}

\subsection{Expression of expected utility}
\begin{lemma}[Expected utility]
    \label{lem:expected-utility}
    Let $\delta \in \Xc \to {\{0, 1\}}$ and $x \in \Xc$. Then the expected utility of $\delta$ conditional to $x$ is:
    \begin{equation}
        \EU(\delta, x) = \unorm\delta(x)(\hstar(x)-t\sstar)\!+\!\hstar(x)U_{01}\!+\!(1\!-\!\hstar(x))U_{00}
    \end{equation}
\end{lemma}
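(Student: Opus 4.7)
The plan is to compute the conditional expectation directly from the definition and then verify that the resulting case-by-case expression matches the compact formula after substituting the definitions of $\unorm$ and $t\sstar$.

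First, I would unfold $\EU(\delta, x) = \espk{U_{\delta(X), Y}}{X=x}$ by conditioning on $Y$. Since $\delta(x)$ is deterministic given $x$, and $Y\!\mid\! X\!=\!x$ is Bernoulli with parameter $\hstar(x) = \PPG{Y=1}{X=x}$, this gives
\begin{equation*}
    \EU(\delta, x) = \hstar(x)\, U_{\delta(x), 1} + (1-\hstar(x))\, U_{\delta(x), 0}.
\end{equation*}

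Next, since $\delta(x) \in \{0,1\}$, I would split into the two cases. The case $\delta(x)=0$ immediately yields $\hstar(x) U_{01} + (1-\hstar(x)) U_{00}$, which matches the target formula since the first term vanishes. For $\delta(x)=1$, it suffices to show
\begin{equation*}
    \hstar(x) U_{11} + (1-\hstar(x)) U_{10} \;=\; \unorm(\hstar(x)-t\sstar) + \hstar(x) U_{01} + (1-\hstar(x)) U_{00}.
\end{equation*}
Substituting $\unorm t\sstar = U_{00} - U_{10}$ and $\unorm = U_{00} - U_{10} + U_{11} - U_{01}$ on the right-hand side, the $U_{00}$ terms cancel and regrouping the $\hstar(x)$ coefficients collapses the expression to $U_{11}\hstar(x) + U_{10}(1-\hstar(x))$, as required. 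This verification is the only substantive step; it is a short but slightly tedious cancellation, so care is needed with signs, but there is no real obstacle.

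Combining the two cases, both branches can be written uniformly as $\unorm\,\delta(x)(\hstar(x)-t\sstar) + \hstar(x)U_{01} + (1-\hstar(x))U_{00}$, which is exactly the claimed identity. The main ``obstacle,'' if any, is purely notational: being careful that $\unorm$ and $t\sstar$ are defined so that $\unorm t\sstar = U_{00}-U_{10}$ and that the combination $\unorm - (U_{00}-U_{01}) = U_{11}-U_{10}$ is what makes the two cases collapse into a single affine expression in $\delta(x)$.
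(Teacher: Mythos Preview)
Your proof is correct and follows essentially the same route as the paper: expand $\EU(\delta,x)$ using the Bernoulli law of $Y\mid X=x$, then do the algebra to recover the affine form in $\delta(x)$. The only cosmetic difference is that the paper avoids the case split by writing $U_{\delta(x),j} = \delta(x)(U_{1j}-U_{0j}) + U_{0j}$ directly, whereas you verify $\delta(x)=0$ and $\delta(x)=1$ separately; the substance is identical.
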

\begin{corollary}[Difference in expected utilities]
\label{cor:expected-utility:diff}
Let $\delta_1, \delta_2 \in \Xc \to {\{0, 1\}}$ and $x \in \Xc$. Then the difference in expected utilities of $\delta_1$ and $\delta_2$ conditional to $x$ is:
\begin{equation}
    \EU(\delta_2, x) - \EU(\delta_1, x) = \unorm(\delta_2(x) - \delta_1(x))(\hstar(x)-t\sstar)
\end{equation}
\end{corollary}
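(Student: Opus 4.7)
The plan is to derive the corollary as an immediate algebraic consequence of Lemma \ref{lem:expected-utility}. Since the lemma gives an explicit closed form for $\EU(\delta, x)$ that is affine in $\delta(x)$, subtracting the two expressions for $\delta_1$ and $\delta_2$ will cause the terms that do not depend on $\delta$ to cancel, leaving only the linear part.

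Concretely, I would first apply Lemma \ref{lem:expected-utility} twice, once for $\delta_2$ and once for $\delta_1$, at the same point $x$. This yields
\[
\EU(\delta_2, x) = \unorm\, \delta_2(x)\bigl(\hstar(x)-t\sstar\bigr) + \hstar(x) U_{01} + (1-\hstar(x)) U_{00},
\]
\[
\EU(\delta_1, x) = \unorm\, \delta_1(x)\bigl(\hstar(x)-t\sstar\bigr) + \hstar(x) U_{01} + (1-\hstar(x)) U_{00}.
\]
Then I would subtract these two equalities. The terms $\hstar(x) U_{01} + (1-\hstar(x)) U_{00}$ are identical in both lines (they depend only on $x$ and on the utilities $U_{00}, U_{01}$, not on the decision rule), so they cancel, leaving
\[
\EU(\delta_2, x) - \EU(\delta_1, x) = \unorm\bigl(\delta_2(x) - \delta_1(x)\bigr)\bigl(\hstar(x) - t\sstar\bigr),
\]
which is the claimed identity.

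There is essentially no obstacle here: once Lemma \ref{lem:expected-utility} has been established, the corollary is a one-line algebraic manipulation. The only thing worth emphasising in the write-up is that the $\delta$-independent remainder in the lemma depends solely on $x$ (through $\hstar(x)$) and on the fixed utility entries $U_{00}, U_{01}$, so it is the \emph{same} constant in both expansions, which is precisely why it drops out of the difference.
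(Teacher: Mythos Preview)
Your proposal is correct and mirrors the paper's own argument exactly: the paper proves \autoref{lem:expected-utility} and then immediately subtracts the two instances to obtain the corollary, noting that the $\delta$-independent terms cancel. There is nothing to add.
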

\begin{proof}[Proof of \autoref{lem:expected-utility} and \autoref{cor:expected-utility:diff}]
Let $\delta \in \Xc \to {\{0, 1\}}$ and $x \in \Xc$.

First note that:
$U[\delta(x), 1] = \begin{cases}
    U_{11} & \text{if } \delta(x) = 1\\
    U_{01} & \text{otherwise}
\end{cases} = \delta(x)(U_{11} - U_{01}) + U_{01}$.

Similarly, $U[\delta(x), 0] = \begin{cases}
    U_{10} & \text{if } \delta(x) = 1\\
    U_{00} & \text{otherwise}
\end{cases} = \delta(x)(U_{10} - U_{00}) + U_{00}$.

Then the expected utility of $\delta$ conditional to $x$ writes:
\begin{align*}
    \EU(\delta, x)
    & = \espk{U_{\delta(X), Y}}{X=x}& \text{\autoref{eq:eu:pointwise}}\\
    & = \hstar(x)U_{\delta(x), 1} + (1 - \hstar(x))U_{\delta(x), 0}&\\
    & = \hstar(x)(\delta(x)(U_{11} - U_{01}) + U_{01})&\\
    & \qquad + (1 - \hstar(x))(\delta(x)(U_{10} - U_{00}) + U_{00})&\\
    & = \delta(x)(\hstar(x)(U_{00}\!-\!U_{10}\!+\!U_{11}\!-\!U_{01})\!-\!(U_{00}\!-\!U_{01}))&\\
    & \qquad +\hstar(x)U_{01}\!+\!(1\!-\!\hstar(x))U_{00}&\\
    & = \unorm\delta(x)(\hstar(x)-t\sstar)\!+\!\hstar(x)U_{01}\!+\!(1\!-\!\hstar(x))U_{00}&\text{Def of $\unorm$ and $t\sstar$ (\autoref{eq:optimal_classifier})}\\
\end{align*}
Hence for all $\delta_1, \delta_2 \in \Xc^{\{0, 1\}}$ and $x \in \Xc$:
\begin{align}
    \EU(\delta_2, x) - \EU(\delta_1, x)
    & = \unorm(\delta_2(x) - \delta_1(x))(\hstar(x)-t\sstar)&
\end{align}
\end{proof}

\subsection{Best decision over $\mathcal{D}_{\hh}$}
\label{sec:proof:best-decision:dh}
\thmBestDh*
\begin{proof}[Proof of \autoref{prop:decision:est}]

Let $\delta \in \mathcal{D}_\hh$. Let's show that $\EU(\delta_{\chch, t\sstar}|p) \geq \EU(\delta|p)$.

By definition of $\mathcal{D}_\hh$, there exist $g : [0, 1] \to [0, 1]$ and $t \in [0, 1]$ such that $\delta = x \mapsto \1{g(\hh(x)) \geq t}$.

Let $p \in \supp{\hh(X)}$.
\begin{align}
    & \EU(\delta_{\chch, t\sstar}|p)\!-\!\EU(\delta|p) &\\
    & = \espk{\EU(\delta_{\chch, t\sstar}, X) - \EU(\delta, X)}{\hh(X)=p}& \text{Definition of~}\EU(\cdot|p)\\
    & = \unorm\espk{(\delta_{\chch, t\sstar}(X) - \delta(X))(\hstar(X) - t\sstar)}{\hh(X)=p}
    & \text{\autoref{cor:expected-utility:diff}}\\
    & = \unorm\espk{(\1{\ch(\hh(X)) \geq t\sstar} - \1{g(\hh(X)) \geq t})(\hstar(X) - t\sstar)}{\hh(X)=p} & \text{Def. of $\delta_{\chch, t\sstar}$ and $\delta$}\\
    & = \unorm(\1{\ch(\hh(X)) \geq t\sstar} - \1{g(\hh(X)) \geq t})(\espk{\hstar(X)}{\hh(X)} - t\sstar) & \text{\quad$\delta_{\chch, t\sstar}$ and $\delta$ func. of $\hh(X)$}\\
    & = \unorm(\1{\ch(\hh(X)) \geq t\sstar} - \1{g(\hh(X)) \geq t})(\ch(\hh(X)) - t\sstar) & \text{Def. of $\ch$ (\ref{eq:calibration-curve})}\label{eq:proof:rcl}\\
    & = \unorm\1{\ch(\hh(X)) \geq t\sstar}\1{g(\hh(X)) < t}(\ch(\hh(X)) - t\sstar) & \text{}\\
    & \qquad + \unorm\1{\ch(\hh(X)) < t\sstar}\1{g(\hh(X)) \geq t}(t\sstar - \ch(\hh(X))) & \text{}\\
    & = \unorm(\1{\ch(\hh(X)) \geq t\sstar}\1{g(\hh(X)) < t} + \1{\ch(\hh(X)) < t\sstar}\1{g(\hh(X)) \geq t}) &\\
    & \qquad \times|\ch(\hh(X)) - t\sstar| & \text{}\\
    & = \unorm|\1{\ch(\hh(X)) \geq t\sstar} - \1{g(\hh(X)) \geq t}||\ch(\hh(X)) - t\sstar| &\label{eq:proof:rcl2}\\
    & \geq 0 & \text{}
\end{align}
Hence:
\begin{equation}
    \delta_{\cch, t^\star} \in \argmax{\delta \in \mathcal{D}_{\hh}} \EU(\delta|p)
\end{equation}
\end{proof}

\subsection{Calibration regret $\rclht$}
\subsubsection{Expression}
\label{sec:proof:rcl:expression}
\propRCLExpression*
\begin{proof}[Proof of \autoref{prop:rcl:expression}]
    Let $\delta \in \mathcal{D}_{\hh}$.
    By definition of $\mathcal{D}_{\hh}$, there exist $g : [0, 1] \to [0, 1]$ and $t \in [0, 1]$ such that $\delta = x \mapsto \1{g(\hh(x)) \geq t}$.
    Let $p \in \supp{\hh(X)}$.

    \begin{align}
        \rclht(p)
        & = \EU(\delta_{\cch, t\sstar}|p) - \EU(\delta_{\hh, t}|p) &\text{Definition of~}\rclht(p)\\
        & = \espk{\EU(\delta_{\ch(\hh), t\sstar}, X)\!-\!\EU(\delta_{\hh,t}, X)}{\hh(X) = p} &\text{Definition of~}\EU(\cdot|p)\\
        & = \unorm(\1{\ch(p) \geq t\sstar} - \1{g(p) \geq t})(\ch(p) - t\sstar)
        & \text{\autoref{eq:proof:rcl}} \label{eq:proof:rcl3}\\
        & = \unorm|\1{\ch(p) \geq t\sstar} - \1{g(p) \geq t}||\ch(p) - t\sstar| & \text{\autoref{eq:proof:rcl2}}
    \end{align}
\end{proof}

\subsubsection{Zero calibration regret for all utilities implies calibration}
\begin{proposition}
    \label{prop:rcl:0:forall:t}
    For all $p \in \supp{\hh(X)}$,
    \begin{align}
        \rclhtstar(p) = 0, \quad \forall t\sstar \in [0, 1] \; \Longleftrightarrow \; \ch(p) = p.
    \end{align}
\end{proposition}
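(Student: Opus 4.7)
The plan is to reduce the claim to \autoref{prop:rcl:expression} applied with candidate threshold $t=t\sstar$: the calibration regret $\rclhtstar(p)$ equals $\unorm|\ch(p)-t\sstar|$ whenever $\1{\ch(p)\ge t\sstar}\neq\1{p\ge t\sstar}$, and zero otherwise. So the task reduces to studying when this quantity vanishes as $t\sstar$ ranges over $[0,1]$.

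For the easy direction ($\Leftarrow$), I would observe that if $\ch(p)=p$, then $\1{\ch(p)\ge t\sstar}=\1{p\ge t\sstar}$ identically in $t\sstar$, so by \autoref{prop:rcl:expression} the regret is $0$ for every utility matrix and every $t\sstar$.

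For the forward direction ($\Rightarrow$), I would argue by contraposition. Suppose $\ch(p)\neq p$; without loss of generality take $\ch(p)<p$ (the symmetric case $\ch(p)>p$ is identical). The open interval $\bigl(\ch(p),p\bigr)\subseteq[0,1]$ is nonempty, so I can pick some $t\sstar$ strictly inside it. For this $t\sstar$, one has $\1{p\ge t\sstar}=1$ while $\1{\ch(p)\ge t\sstar}=0$, so the indicators differ, and moreover $|\ch(p)-t\sstar|>0$. It then suffices to exhibit a utility matrix realising this chosen threshold with $\unorm>0$: for instance $U_{00}=t\sstar$, $U_{11}=1-t\sstar$, $U_{10}=U_{01}=0$ yields $\unorm=1$ and recovers $t\sstar=(U_{00}-U_{10})/\unorm$ as defined in \autoref{eq:optimal_classifier}. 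For this utility matrix, \autoref{prop:rcl:expression} then gives $\rclhtstar(p)=|\ch(p)-t\sstar|>0$, contradicting the hypothesis that $\rclhtstar(p)=0$ for every $t\sstar\in[0,1]$.

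The only mildly delicate point is the quantifier over $t\sstar$: strictly speaking $t\sstar$ is determined by the utility matrix rather than chosen freely, so one must explicitly produce a utility matrix whose induced threshold lies in the separating interval $(\ch(p),p)$ and whose $\unorm$ is nonzero; the explicit construction above handles this. No other step poses difficulty, and the argument is essentially a routine unpacking of the closed-form expression in \autoref{prop:rcl:expression}.
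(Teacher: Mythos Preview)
Your proof is correct and follows essentially the same approach as the paper: both directions rely on \autoref{prop:rcl:expression}, and for the forward direction both pick a threshold strictly between $\ch(p)$ and $p$ to force the indicators to disagree (the paper chooses the midpoint $\tfrac{1}{2}(\ch(p)+p)$, you pick any point in the open interval). Your extra care in explicitly constructing a utility matrix that realises the chosen $t\sstar$ with $\unorm>0$ is a nice touch that the paper leaves implicit.
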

\begin{proof}[Proof of \autoref{prop:rcl:0:forall:t}]
Let $p \in \supp{\hh(X)}$. Suppose for all $t\sstar \in [0, 1], \rclhtstar(p) = 0$. Let's show that $\ch(p) = p$. By contradiction, suppose $\ch(p) \neq p$. Let's show that there exists $t\sstar \in [0, 1]$ such that $\rclhtstar(p) > 0$. Take $t\sstar = \tfrac{1}{2}(\ch(p) + p)$. Then $t\sstar \in [0, 1]$ and $|\1{\ch(p) \geq t\sstar} - \1{p \geq t\sstar}| = 1$ and $|\ch(p) - t\sstar| = \tfrac{1}{2}|\ch(p) - p| > 0$. Hence $\rclhtstar(p) > 0$ which proves the contradiction.
Now suppose $\ch(p) = p$. Then $|\1{\ch(p) \geq t\sstar} - \1{p \geq t\sstar}| = |\1{p \geq t\sstar} - \1{p \geq t\sstar}| = 0$ and $\rclhtstar(p) = \unorm|\1{\ch(p) \geq t\sstar} - \1{p \geq t\sstar}||\ch(p) - t\sstar| = 0$ (\autoref{prop:rcl:expression}).
\end{proof}

\subsubsection{Adjusting the threshold to address the calibration regret}
\label{sec:proof:th}
\thmTH*
\begin{proof}[Proof of \autoref{prop:th}]
Let $t\sstar \in [0, 1]$. Suppose $\ch$ is monotonic non-decreasing. Let $p \in \supp{\hh(X)}$. Then $p \geq t_\hh \Leftrightarrow \ch(p) \geq \ch(t_\hh)$ because $\ch$ is non-decreasing. When $t_\hh \in \ch^{-1}(\{t\sstar\})$, then $\ch(t_\hh) = t\sstar$ and $p \geq t_\hh \Leftrightarrow \ch(p) \geq t\sstar$ hence $\1{\ch(p) \geq t\sstar} = \1{p \geq t_\hh}$.
When $t_\hh = \inf \{t: \ch(t) \geq t\sstar \}$, then $\ch(t_\hh) \geq t\sstar$ by definition. Let's show that we still have $p \geq t_\hh \Leftrightarrow \ch(p) \geq t\sstar$. Suppose $p \geq t_\hh$. Then $\ch(p) \geq \ch(t_\hh) \geq t\sstar$. Suppose $\ch(p) \geq t\sstar$. Then $p \in \{t: \ch(t) \geq t\sstar \}$. But $t_\hh$ is the smallest element of this set. So $p \geq t_\hh$. Hence $\1{\ch(p) \geq t\sstar} = \1{p \geq t_\hh}$.
Hence:
\begin{align}
    \rclhtstar(p)
    & = \unorm(\1{\ch(p) \geq t\sstar} - \1{p \geq t_\hh})(\ch(p) - t\sstar) = 0 & \text{\autoref{eq:proof:rcl3}}
\end{align}
\end{proof}

\subsection{Lemma on the variance}

\begin{lemma}[Bhatia-Davis inequality \citep{Bhatia2000}]
\label{lem:bhatia-davis}
Let $m, M \in \R$ such that $m \leq M$. Let $Z$ a random variable valued in $[m, M]$. Then the variance of $Z$ is upper bounded by:
\begin{equation*}
    \var{Z} \leq (M - \esp{Z})(\esp{Z} - m)
\end{equation*}
\end{lemma}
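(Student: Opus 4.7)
The plan is to exploit the fact that since $Z \in [m, M]$ almost surely, the random variable $(M - Z)(Z - m)$ is nonnegative, and then take expectations to relate $\esp{Z^2}$ to the endpoints $m, M$ and the mean $\esp{Z}$.

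Concretely, I would expand $(M - Z)(Z - m) = (M + m)Z - Mm - Z^2 \geq 0$, which rearranges to $Z^2 \leq (M + m)Z - Mm$ pointwise. Taking expectations preserves the inequality and gives $\esp{Z^2} \leq (M + m)\esp{Z} - Mm$. Subtracting $\esp{Z}^2$ on both sides yields
\begin{equation*}
    \var{Z} = \esp{Z^2} - \esp{Z}^2 \leq (M + m)\esp{Z} - Mm - \esp{Z}^2.
\end{equation*}

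The last step is a purely algebraic identity: expanding the target right-hand side gives $(M - \esp{Z})(\esp{Z} - m) = (M+m)\esp{Z} - Mm - \esp{Z}^2$, which matches exactly the upper bound obtained above, so the desired inequality follows.

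There is no real obstacle here: the argument is short and purely algebraic once one notices the pointwise nonnegativity of $(M-Z)(Z-m)$. The only care needed is to verify that the rearranged expression coincides with $(M - \esp{Z})(\esp{Z} - m)$, which is immediate by expansion. Equality is attained when $(M-Z)(Z-m) = 0$ almost surely, i.e., when $Z$ is supported on $\{m, M\}$, matching the extremal distributions that will reappear when proving tightness of the grouping regret bounds.
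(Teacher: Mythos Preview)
Your argument is correct and is exactly the classical proof of the Bhatia--Davis inequality: the pointwise nonnegativity of $(M-Z)(Z-m)$, expansion, expectation, and the algebraic identification with $(M-\esp{Z})(\esp{Z}-m)$. The paper does not supply its own proof of this lemma; it simply cites \citet{Bhatia2000} and uses the result as a tool in \autoref{lem:var}, so there is nothing to compare against.
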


\begin{lemma}[Bounds on the variance of a random variable in {[}0, 1{]}]
    \label{lem:var}
    Let $t \in [0, 1]$.
    Let $Z$ be a random variable valued in $[0, 1]$.
    Define $\cp \triangleq \espk{Z}{Z \geq t}$, $\cm \triangleq \espk{Z}{Z < t}$ and $w \triangleq \PP{Z \geq t}$.

    When both $\cp$ and $\cm$ are defined, the expectation of $Z$ satisfies:
    \begin{align}
        \esp{Z} = w\cp + (1-w)\cm\label{eq:exp:decomposition}
    \end{align}
    and the variance of $Z$ is bounded by:
    \begin{align}
        \var{Z} & \geq w(1-w)(\cp - \cm)^2 & \text{Variance lower bound}\label{eq:var:lb}\\
        \var{Z} & \leq w(\cp - t) + \esp{Z}(t-\esp{Z}) & \text{Variance upper bound}\label{eq:var:ub}
    \end{align}

    When only one of $\cp$ and $\cm$ is defined,
    the variance of $Z$ is bounded by:
    \begin{align}
        \var{Z} & \geq 0 & \text{Variance lower bound}\label{eq:var:lb2}\\
        \var{Z} & \leq \begin{cases}
            \esp{Z}(t-\esp{Z}) & \text{if } \esp{Z} < t\\
            (1 - \esp{Z})(\esp{Z}-t) & \text{if } \esp{Z} \geq t
        \end{cases}& \text{Variance upper bound}\label{eq:var:ub2}
    \end{align}

    \end{lemma}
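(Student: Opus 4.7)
The plan is to view $Z$ as a mixture indexed by the Bernoulli indicator $B \triangleq \1{Z \geq t}$ (with $\PP{B=1} = w$) and to leverage the laws of total expectation and total variance. The decomposition $\esp{Z} = w\cp + (1-w)\cm$ is then immediate. For the variance lower bound in the two-piece case, the law of total variance gives $\var{Z} = \esp{\vark{Z}{B}} + \var{\espk{Z}{B}}$: the first term is non-negative, and the second is the variance of a two-point random variable taking $\cp$ with probability $w$ and $\cm$ with probability $1-w$, which equals $w(1-w)(\cp-\cm)^2$ by direct computation (using $\cp - \esp{Z} = (1-w)(\cp-\cm)$ and $\cm - \esp{Z} = -w(\cp-\cm)$).

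For the variance upper bound in the two-piece case, my cleanest route is to bound $\esp{Z^2}$ via elementary piecewise quadratic majorants: on $[t,1]$ we have $z^2 \leq (t+1)z - t$ (since $(z-t)(z-1) \leq 0$), and on $[0,t]$ we have $z^2 \leq tz$ (since $z(z-t) \leq 0$). Conditioning on $B$ and recombining yields $\esp{Z^2} \leq w\bigl((t+1)\cp - t\bigr) + (1-w)t\cm = t\,\esp{Z} + w(\cp - t)$, and subtracting $\esp{Z}^2$ gives the claimed bound. An equivalent route is to combine the law of total variance with the Bhatia--Davis inequality applied separately to $Z\mid B=1 \in [t,1]$ and $Z\mid B=0 \in [0,t]$; this produces the same bound after somewhat more algebra (the cross term $w(1-w)(\cp-\cm)^2$ and the two conditional Bhatia--Davis bounds telescope back into $w(\cp-t) + \esp{Z}(t-\esp{Z})$).

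The single-piece case is direct. When $\cp$ is undefined ($w=0$) we have $Z \in [0,t]$ almost surely, so Bhatia--Davis on $[0,t]$ yields $\var{Z} \leq (t-\esp{Z})\,\esp{Z}$; symmetrically, when $\cm$ is undefined ($w=1$) we have $Z \in [t,1]$ a.s., giving $\var{Z} \leq (1-\esp{Z})(\esp{Z}-t)$. The lower bound $\var{Z} \geq 0$ is trivial. The only real challenge, which is really just bookkeeping, is identifying the right piecewise linear majorants of $z^2$ in the two-piece upper bound so that the resulting expression collapses cleanly to $w(\cp - t) + \esp{Z}(t - \esp{Z})$; once the correct majorants are in hand, every step reduces to a short calculation.
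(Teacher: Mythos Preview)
Your proposal is correct. The expectation decomposition, the variance lower bound via the law of total variance (dropping the non-negative conditional-variance term and computing the two-point variance), and the single-piece case via Bhatia--Davis are all exactly what the paper does.

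The one genuine difference is your primary route for the two-piece upper bound. The paper decomposes $\var{Z}$ via the law of total variance into $wv^+ + (1-w)v^- + w(\cp-\mu)^2 + (1-w)(\cm-\mu)^2$, bounds $v^+ \leq (1-\cp)(\cp-t)$ and $v^- \leq (t-\cm)\cm$ separately via Bhatia--Davis, and then spends several lines of algebra collapsing everything to $w(\cp-t)+\esp{Z}(t-\esp{Z})$. Your approach instead bounds $\esp{Z^2}$ directly with the piecewise secant majorants $z^2 \leq (t+1)z - t$ on $[t,1]$ and $z^2 \leq tz$ on $[0,t]$, which gives the final expression in two lines. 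Both approaches are really the same inequality $(z-a)(z-b)\leq 0$ under the hood, but yours applies it to $Z^2$ before splitting into variance pieces, so the bookkeeping is shorter; the paper's route makes the role of the conditional variances $v^\pm$ explicit, which is slightly more informative about when equality holds (namely when both $v^+$ and $v^-$ saturate Bhatia--Davis). Your secondary route is exactly the paper's argument.
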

    \begin{proof}[Proof of \autoref{lem:var}]
    Let $t \in [0, 1]$.
    Let $Z$ be a random variable valued in $[0, 1]$.
    Define $\cp \triangleq \espk{Z}{Z \geq t}$, $\cm \triangleq \espk{Z}{Z < t}$ and $w \triangleq \PP{Z \geq t}$.

    The case where only one of $\cp$ and $\cm$ is defined amounts to the Bhatia-Davis inequality (\autoref{lem:bhatia-davis}) on $Z$ which is valued in $[0, t]$ if $\cp$ is undefined and in $[t, 1]$ if $\cm$ is undefined. This proves \autoref{eq:var:ub2}.

    From now we suppose that both $\cp$ and $\cm$ are defined.

    \paragraph{Expectation.}The expectation of $Z$ satisfies:
    \begin{align*}
        \esp{Z}
        & = \esp{\espk{Z}{\1{Z \geq t}}} & \text{Law of total expectation}\\
        & = \PP{Z \geq t}\espk{Z}{Z \geq t} + \PP{Z < t}\espk{Z}{Z < t} & \text{}\\
        & = w\cp + (1 - w)\cm& \text{Def. of $\cp, \cm$ and $w$}
    \end{align*}
    This proves \autoref{eq:exp:decomposition}.
    Next we show the bounds on the variance.

    \paragraph{Variance.} For convinience, we note: $\mu \triangleq \esp{Z}$, $v^+ \triangleq \var{Z \geq t}$ and $v^- \triangleq \vark{Z}{Z < t}$. The variance of $Z$ satisfies:
    \begin{align}
        \var{Z}
        & = \esp{\vark{Z}{\1{Z \geq t}}} + \var{\espk{Z}{\1{Z \geq t}}}
        & \text{Law of total variance}\nonumber\\
        & = wv^+ + (1 - w)v^- + w(\cp - \mu)^2 + (1 - w)(\cm - \mu)^2\label{eq:var:decomposition}
    \end{align}
    From this expression, we derive both the upper and lower bounds, using either the positivity of $v^+$ and $v^-$ or the Bhatia-Davis inequality (\autoref{lem:bhatia-davis}) on $v^+$ and $v^-$.

    \paragraph{Upper bound.}First, we show the upper bound. Using the fact that $Z | Z \geq t$ is in $[t, 1]$ and $Z | Z  < t$ is in $[0, t]$, the Bhatia-Davis inequality gives:
    \begin{align}
        v^+ & \leq (1 - \cp)(\cp - t)\\
        v^- & \leq (t - \cm)\cm
    \end{align}
    Hence, \autoref{eq:var:decomposition} gives:
    \begin{align*}
        \var{Z}
        & \leq w(1 - \cp)(\cp - t) + (1 - w)(t - \cm)\cm&\\
        & \quad + w(\cp - \mu)^2 + (1 - w)(\cm - \mu)^2&\\
        & = w(\cp - t + t\cp - 2\cp\mu) + (1 - w)\cm(t - 2\mu) + \mu^2&\\
        & = w(\cp - t + t\cp - 2\cp\mu) + (\mu - w\cp)(t - 2\mu) + \mu^2& \text{Using \autoref{eq:exp:decomposition}}\\
        & = w(\cp - t) + \mu(t - 2\mu) + \mu^2& \\
        & = w(\cp - t) + \mu(t - \mu)&
    \end{align*}
    This proves the upper bound (\autoref{eq:var:ub})

    \paragraph{Lower bound.} Now, we prove the lower bound. First notice that \autoref{eq:exp:decomposition} gives:
    \begin{align}
        \cm - \mu & = w(\cm - \cp) \label{eq:mu:diff1}\\
        \cp - \mu & = (1-w)(\cp - \cm) \label{eq:mu:diff2}
    \end{align}

    Using the positivity of $v^+$ and $v^-$ in \autoref{eq:var:decomposition}, we have:
    \begin{align}
        \var{Z}
        & \geq w(\cp - \mu)^2 + (1 - w)(\cm - \mu)^2&\nonumber\\
        & = w(1-w)^2(\cp - \cm)^2 + (1 - w)w^2(\cp - \cm)^2&\nonumber\text{Using \autoref{eq:mu:diff1} and \ref{eq:mu:diff2}}\\
        & = w(1-w)(\cp - \cm)^2\label{eq:var:lb:proof}&
    \end{align}
    This proves the lower bound (\autoref{eq:var:lb}).

    Note that the results and proof holds for $Z | V$ for any random variable $V$, by conditioning all expectations and variance by $V$.
    \end{proof}

\subsection{$\rgl$ regret reformulation}
\begin{restatable}[Grouping loss induced regret]{lemma}{thmRGLExpression}
    \label{thm:rgl:expression}
    For all $p \in [0, 1]$ where $\rglh$ is defined, then:
    \begin{align}
        \rglh(p) & = \unorm\,\espk{(\1{\hstar(X) \geq t\sstar} - \1{c(p) \geq t\sstar})(\hstar(X) - t\sstar)}{h(X)\!=\!p}.
    \end{align}
\end{restatable}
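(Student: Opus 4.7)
The plan is to unfold the definition of $\rglh(p)$ and apply the pointwise expected utility difference from \autoref{cor:expected-utility:diff} to reach the claimed identity.

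First I would write
\begin{align*}
\rglh(p) &= \EU(\delta_{\hstar, t\sstar}|p) - \EU(\delta_{\cch, t\sstar}|p) \\
&= \espk{\EU(\delta_{\hstar, t\sstar}, X) - \EU(\delta_{\cch, t\sstar}, X)}{\hh(X)=p},
\end{align*}
using the definitions of $\rglh$ and of the conditional expected utility $\EU(\cdot|p)$.

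Next, I would apply \autoref{cor:expected-utility:diff} pointwise with $\delta_2 = \delta_{\hstar, t\sstar}$ and $\delta_1 = \delta_{\cch, t\sstar}$, which gives
\begin{equation*}
\EU(\delta_{\hstar, t\sstar}, X) - \EU(\delta_{\cch, t\sstar}, X) = \unorm\bra{\1{\hstar(X) \geq t\sstar} - \1{\ch(\hh(X)) \geq t\sstar}}(\hstar(X) - t\sstar).
\end{equation*}
Plugging this into the conditional expectation and pulling the constant $\unorm$ out of the expectation yields the target expression, provided we can replace $\1{\ch(\hh(X)) \geq t\sstar}$ by $\1{\ch(p) \geq t\sstar}$ inside the conditional.

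The only substantive step is this last substitution: conditional on the event $\hh(X)=p$, the quantity $\ch(\hh(X))$ equals the deterministic value $\ch(p)$, hence $\1{\ch(\hh(X)) \geq t\sstar} = \1{\ch(p) \geq t\sstar}$ almost surely under this conditioning. After this replacement, the identity in the lemma follows directly. There is no genuine obstacle here; the argument is a short combination of the conditional-expectation definition, the pointwise utility difference formula, and the measurability of $\ch\circ\hh$ with respect to $\hh(X)$.
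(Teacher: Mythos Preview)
Your proposal is correct and follows essentially the same route as the paper: unfold the definition of $\rglh(p)$ as a conditional expectation of the utility difference, apply \autoref{cor:expected-utility:diff}, and then substitute the indicator functions using the decision rule definitions. Your explicit remark about replacing $\ch(\hh(X))$ by $\ch(p)$ under the conditioning is the only place where you are slightly more verbose than the paper, which performs this substitution silently.
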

\begin{proof}[Proof of \autoref{thm:rgl:expression}]
Let $p \in [0, 1]$ where $\rglh$ is defined.

\begin{align}
    \rglh(p)
    & = \espk{\EU(\deltastar, X) - \EU(\delta_{c_\hh(\hh), t\sstar}, X)}{\hh(X)\!=\!p} & \text{}\\
    & = \unorm\espk{(\deltastar(X) - \delta_{c_\hh(\hh), t\sstar}(X))(\hstar(X)-t\sstar)}{\hh(X) = p} & \text{\autoref{cor:expected-utility:diff}}\\
    & = \unorm\espk{(\1{\hstar(X) \geq t\sstar} - \1{c_\hh(p) \geq t\sstar}(X))(\hstar(X)-t\sstar)}{\hh(X) = p} & \text{\autoref{eq:decision:rule}}
\end{align}
\end{proof}

\begin{lemma}[Regret reformulation]
\label{lem:regret:reform}
    Let $p \in \supp{\hh(X)}$.
    Define $w(p) \triangleq \PPG{\hstar(X) \geq t\sstar}{\hh(X) = p}$ and $c^+(p) \triangleq \espk{\hstar(X)}{\hstar(X) \geq t\sstar, \hh(X) = p}$. Then, the grouping loss induced regret $\rglh$ can be written as:
    \begin{equation}
        \rglh(p) = \begin{cases}
            \unorm\pr{w(p)(c^+(p) - t\sstar)-\1{\ch(p) \geq t\sstar}(\ch(p)-t\sstar)}
            & \text{if } 0 < w(p) < 1.\\
            0 & \text{otherwise}.
        \end{cases}
    \end{equation}
\end{lemma}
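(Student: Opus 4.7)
The plan is to start from the reformulation in \autoref{thm:rgl:expression} and split the bracketed indicator into two pieces that can be evaluated separately using the tower property of conditional expectation. Concretely, I would write
\begin{equation*}
\rglh(p) = \unorm\bigl(A(p) - B(p)\bigr),
\end{equation*}
with $A(p) \triangleq \espk{\1{\hstar(X) \geq t\sstar}(\hstar(X)-t\sstar)}{\hh(X)=p}$ and $B(p) \triangleq \espk{\1{\ch(p)\geq t\sstar}(\hstar(X)-t\sstar)}{\hh(X)=p}$, and then handle each term.

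For $B(p)$, since $\1{\ch(p)\geq t\sstar}$ is a deterministic function of $p$, I would pull it out of the conditional expectation and use that, by iterated expectation and the definition \eqref{eq:calibration-curve} of the calibration curve, $\espk{\hstar(X)}{\hh(X)=p} = \espk{\espk{Y}{X}}{\hh(X)=p} = \espk{Y}{\hh(X)=p} = \ch(p)$. This yields $B(p) = \1{\ch(p)\geq t\sstar}(\ch(p)-t\sstar)$. For $A(p)$, assuming $0<w(p)<1$ so that both events $\{\hstar(X)\geq t\sstar\}$ and its complement have positive conditional probability, I would condition further on $\1{\hstar(X)\geq t\sstar}$: the term on $\{\hstar(X)<t\sstar\}$ is killed by the indicator, and the term on $\{\hstar(X)\geq t\sstar\}$ contributes $w(p)(c^+(p)-t\sstar)$ by definition of $w(p)$ and $c^+(p)$. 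Combining gives the main case of the lemma.

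The edge cases $w(p)\in\{0,1\}$ I would handle by observing that in these cases the two indicators $\1{\hstar(X)\geq t\sstar}$ and $\1{\ch(p)\geq t\sstar}$ coincide almost surely on $\{\hh(X)=p\}$, so the integrand in \autoref{thm:rgl:expression} vanishes and $\rglh(p)=0$. For $w(p)=1$, $\hstar(X)\geq t\sstar$ a.s.\ conditional on $\hh(X)=p$, hence $\ch(p)\geq t\sstar$, and both indicators equal $1$. For $w(p)=0$, $\hstar(X)<t\sstar$ a.s., so $\1{\hstar(X)\geq t\sstar}=0$ and, crucially, $\ch(p)<t\sstar$ strictly, so $\1{\ch(p)\geq t\sstar}=0$ as well.

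The main delicate point is this last strict inequality: an a.s.\ strict bound on $\hstar(X)$ must pass to its conditional expectation. I would justify it by noting that $t\sstar-\hstar(X)>0$ a.s.\ on $\{\hh(X)=p\}$ is a nonnegative random variable that is zero only on a null set, so its conditional expectation is strictly positive, yielding $\ch(p)<t\sstar$. Apart from this small measure-theoretic step, the proof is a direct bookkeeping using only the tower property and the definitions of $\ch$, $w$, $c^+$.
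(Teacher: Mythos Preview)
Your proposal is correct and follows essentially the same route as the paper: start from \autoref{thm:rgl:expression}, split the expectation into the two indicator terms, evaluate each via the tower property and the definitions of $\ch$, $w$, $c^+$, and then treat $w(p)\in\{0,1\}$ separately. Your handling of the $w(p)=0$ edge case is in fact slightly more careful than the paper's, which simply asserts $\1{\ch(p)\geq t\sstar}=0$ without spelling out the strict-inequality argument you give.
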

\begin{proof}[Proof of \autoref{lem:regret:reform}]
    Let $p \in \supp{\hh(X)}$.

    Suppose $0 < \PPG{\hstar(X) \geq t\sstar}{\hh(X) = p} < 1$. Then $c^+(p)$ is defined. We have:
\begin{align}
    & \rglh(p) &\\
    & = \unorm\,\espk{(\1{\hstar(X) \geq t\sstar} - \1{c(p) \geq t\sstar})(\hstar(X) - t\sstar)}{h(X)\!=\!p}
    & \text{\autoref{lem:regret:reform}}\\
    & = \unorm\,\espk{\1{\hstar(X) \geq t\sstar}(\hstar(X) - t\sstar)}{h(X)\!=\!p} & \label{eq:proof:rgl1}\\
    & \qquad - \unorm\,\1{c(p) \geq t\sstar}(\espk{\hstar(X)}{\hh(X) = p} - t\sstar)
    & \text{}\\
    & = \unorm\,\espk{\espk{\1{\hstar(X) \geq t\sstar}(\hstar(X) - t\sstar)}{\hstar\!(X)\!\geq\!t\sstar\!, \hh(X)}}{h(X)\!=\!p} & \text{Total exp.}\\
    & \qquad - \unorm\,\1{c(p) \geq t\sstar}(\ch(p) - t\sstar)
    & \text{Def. of $\ch$ (\autoref{eq:calibration-curve})}\\
    & = \unorm\,\espk{\1{\hstar(X) \geq t\sstar}(\espk{\hstar(X)}{\hstar\!(X)\!\geq\!t\sstar\!, \hh(X)} - t\sstar)}{h(X)\!=\!p} & \text{}\\
    & \qquad - \unorm\,\1{c(p) \geq t\sstar}(\ch(p) - t\sstar)
    & \text{}\\
    & = \unorm\,\espk{\1{\hstar(X) \geq t\sstar}(c^+(p) - t\sstar)}{h(X)\!=\!p} & \text{Def. of $c^+$}\\
    & \qquad - \unorm\,\1{c(p) \geq t\sstar}(\ch(p) - t\sstar)
    & \text{}\\
    & = \unorm\,\espk{\1{\hstar(X) \geq t\sstar}}{h(X)\!=\!p}(c^+(p) - t\sstar) & \text{}\\
    & \qquad - \unorm\,\1{c(p) \geq t\sstar}(\ch(p) - t\sstar)
    & \text{}\\
    & = \unorm\,w(p)(c^+(p) - t\sstar) & \text{Def. of $w$}\\
    & \qquad - \unorm\,\1{c(p) \geq t\sstar}(\ch(p) - t\sstar)
    & \text{}
\end{align}

Suppose $\PPG{\hstar(X) \geq t\sstar}{\hh(X) = p} = 1$. Then the above proof is still valid and $w(p) = 1$, $c^+(p) = \ch(p)$ and $\ch(p) \geq t\sstar$. Hence $\rglh(p) = 0$.

Suppose $\PPG{\hstar(X) \geq t\sstar}{\hh(X) = p} = 0$. Then $\espk{\1{\hstar(X) \geq t\sstar}(\hstar(X) - t\sstar)}{h(X)\!=\!p} = 0$ and $\1{\ch(p) \geq t\sstar} = 0$. Using \autoref{eq:proof:rgl1} we have $\rglh(p) = 0$.

\end{proof}

\subsection{Grouping regret lower bound}
\label{proof:regret:lb}

\thmRegretLB*
\begin{proof}[Proof of \autoref{thm:regret:lb}]
    Let $p \in \supp{\hh(X)}$.\\
    Suppose $0 < \PPG{\hstar(X) \geq t\sstar}{\hh(X) = p} < 1$.

    Define $w(p) \triangleq \PPG{\hstar(X) \geq t\sstar}{\hh(X)=p}$ and $c^+(p) \triangleq \espk{\hstar(X)}{\hstar(X) \geq t\sstar, \hh(X)=p}$. \autoref{lem:regret:reform} gives:
    \begin{equation}
        \rglh(p) = \unorm\pr{w(p)(c^+(p) - t\sstar) - \1{\ch(p) \geq t\sstar}(\ch(p) - t\sstar)}
    \end{equation}

    We apply the upper bound of \autoref{lem:var} to $Z \triangleq \hstar(X) | \hh(X)$.
    Hence:
    \begin{align}
        \vark{\hstar(X)}{\hh(X)=p}
        & \leq w(p)(c^+(p) - t\sstar) + \ch(p)(t\sstar-\ch(p))\\
        & = \unorm\rglh(p) + \1{\ch(p) \geq t\sstar}(\ch(p) - t\sstar) + \ch(p)(t\sstar-\ch(p))\\
        & = \unorm\rglh(p) + \1{\ch(p) \geq t\sstar}(\ch(p) - t\sstar)(1 - \ch(p)) \\
        & \qquad + \1{\ch(p) < t\sstar}\ch(p)(t\sstar - \ch(p))\\
        & = \unorm\rglh(p) + \vmin(p)
        \intertext{Hence:}
        \rglh(p)
        & \geq \unorm(\vfhp - \vmin(p))
    \end{align}

    Since $\rglh(p) \geq 0$, we have:
    \begin{align}
        \rglh(p) \geq \unorm\left[\vfhp - \vmin(p)\right]_+
    \end{align}

    Suppose $\PPG{\hstar(X) \geq t\sstar}{\hh(X) = p} \in \{0, 1\}$. Then $\rglh(p) = 0$ (\autoref{lem:regret:reform}). Let's show that $\lglh(p) = 0$.
    According to \autoref{lem:var}, we have $\vfhp \leq \vmin(p)$ (\autoref{eq:var:ub2}). Then $\lglh(p) = 0$. Then $\rglh(p) = \lglh(p)$.

    \paragraph{Tightness of the regret lower bound $\lglh$.} Following the proof of the regret lower bound, we see that the bound is obtained using the upper bound of the lemma on the variance (\autoref{lem:var}). The proof of \autoref{lem:var} gives an idea on how to achieve the equality. Taking a distribution where $v^+$ and $v^-$ saturates the Bhatia-Davis inequality is a good candidate. That is, a distribution of diracs in $\{0, t, 1\}$ with appropriate weights.

    Below we explicit distributions achieving the equality between the regret $\rglh$ and the lower bound $\lglh$. For convenience, we work with a random variable $Z$ valued in $[0,1]$ which we will later link to the true probabilities $\PPG{\hstar(X)}{\hh(X) = p}$.

    Let $t \in (0,1)$. Let $c \in [0, 1]$ and $v \in [0, c(1 - c)]$. The Bhatia-Davis inequality shows that these represent the sets of admissible values of mean and variances of a random variable valued in $[0,1]$. Note $R_Z \triangleq \unorm\esp{(\1{Z\geq t} - \1{\esp{Z}\geq t})(Z - t)}$ and $L_Z \triangleq \unorm[\var{Z} - \svmin]_+$ with:
    \begin{equation}
        \svmin \triangleq \begin{cases}
            c(t - c) & \text{if } c < t\\
            (1 - c)(c - t) & \text{if } c \geq t
        \end{cases}.
    \end{equation}
    $R_Z$ represents the regret $\rglh$ (\autoref{thm:rgl:expression})
    and $L_Z$ its lower bound $\lglh$ (\autoref{thm:regret:lb}) when $c = \esp{Z}$.

    We now show that there exist distributions $\mathbb{P}_Z$ of $Z$ such that $\esp{Z} = c$, $\var{Z} = v$ and $R_Z = L_Z$. Depending of the value of $c$ and $v$, we explicit 4 distributions of $Z$ that achieve the equality.

    \paragraph{Case 1: $c < t$ and $v < \svmin$.} Define $Z_1$ distributed as:
    $\mathbb{P}_{Z_1} = w_0\delta_0 + w_c\delta_c + w_t\delta_t$
    with $w_c = 1 - \frac{v}{\svmin}$, $w_t = \frac{c}{t}\frac{v}{\svmin}$ and $w_0 = 1 - w_c - w_t$.
    \begin{align}
        \esp{Z_1}
        & = w_cc+w_tt\\
        & = c\\
        \var{Z_1}
        & = w_0c^2 + w_t(t - c)^2\\
        & = (1 - w_1)c^2 - w_t t(t-2c)\\
        & = \tfrac{v}{\svmin}c^2 - \tfrac{v}{\svmin}c(t-2c)\\
        & = \tfrac{v}{\svmin}c(t-c)\\
        & = v\\
        R_{Z_1}
        & = \unorm\esp{\1{Z_1\geq t}(Z_1 - t)}\\
        & = 0\\
        & = L_{Z_1}
    \end{align}

    \paragraph{Case 2: $c \geq t$ and $v < \svmin$.} Define $Z_2$ distributed as:
    $\mathbb{P}_{Z_2} = w_t\delta_t + w_c\delta_c + w_1\delta_1$
    with $w_c = 1 - \frac{v}{\svmin}$, $w_1 = \frac{c-t}{1-t}\frac{v}{\svmin}$ and $w_t = 1 - w_c - w_t$.

    \begin{align}
        \esp{Z_2}
        & = w_tt + w_cc + w_1\\
        & = t + w_c(c-t) + w_1(1-t)\\
        & = t + (1-\tfrac{v}{\svmin})(c-t) + \tfrac{c-t}{1-t}\tfrac{v}{\svmin}(1-t)\\
        & = c\\
        \var{Z_2}
        & = w_t(t - c)^2 + w_1(1-c)^2\\
        & = (1 - w_1)c^2 - w_t t(t-2c)\\
        & = (t-c)^2 - w_c(t-c)^2 + w_1(1-t)(1+t-2c)\\
        & = \tfrac{v}{\svmin}(c-t)(1-c)\\
        & = v\\
        R_{Z_2}
        & = \unorm\esp{(\1{Z_2\geq t} - 1)(Z_2 - t)}\\
        & = \unorm\esp{\1{Z_2 < t}(t - Z_2)}\\
        & = 0\\
        & = L_{Z_2}
    \end{align}

    \paragraph{Case 3: $c < t$ and $v \geq \svmin$.} Define $Z_3$ distributed as:
    $\mathbb{P}_{Z_3} = w_0\delta_0 + w_t\delta_t + w_1\delta_1$
    with $w_1 = \frac{1}{1-t}(v - \svmin)$, $w_t = \frac{c-w_1}{t}$  and $w_0 = 1 - w_t - w_1$.
    \begin{align}
        \esp{Z_3}
        & = w_tt + w_1\\
        & = c\\
        \var{Z_3}
        & = w_0c^2 + w_t(t - c)^2 + w_1(1-c)^2\\
        & = c^2 + w_tt(t-2c) + w_1(1-2c)\\
        & = c^2 + (c - w_1)(t-2c) + w_1(1-2c)\\
        & = c(t-c) + w_1(1-t)\\
        & = c(t-c) + v - \svmin\\
        & = v\\
        R_{Z_3}
        & = \unorm\esp{\1{Z_3\geq t}(Z_3 - t)}\\
        & = w_1(1-t)\\
        & = v - \svmin\\
        & = L_{Z_3}
    \end{align}

    \paragraph{Case 4: $c \geq t$ and $v \geq \svmin$.} Define $Z_4$ distributed as:
    $\mathbb{P}_{Z_4} = w_0\delta_0 + w_t\delta_t + w_1\delta_1$
    with $w_0 = \frac{1}{t}(v - \svmin)$, $w_t = \frac{1-c-w_0}{1-t}$  and $w_1 = 1 - w_t - w_0$.

    \begin{align}
        \esp{Z_4}
        & = w_tt + w_1\\
        & = 1 - w_0 + w_t(t-1)\\
        & = 1 - w_0 - (1-c-w_0)\\
        & = c\\
        \var{Z_4}
        & = w_0c^2 + w_t(t - c)^2 + w_1(1-c)^2\\
        & = (1-c)^2 + w_0(2c-1) - w_t(t-1)(t+1-2c)\\
        & = (1-c)^2 + w_0(2c-1) - (1-c-w_0)(t+1-2c)\\
        & = v - \svmin - (1-c)(c-t)\\
        & = v\\
        R_{Z_4}
        & = \unorm\esp{(\1{Z_4\geq t} - 1)(Z_4 - t)}\\
        & = \unorm\esp{\1{Z_4 < t}(t - Z_4)}\\
        & = w_0t\\
        & = v - \svmin\\
        & = L_{Z_4}
    \end{align}

    Note that in the cases where $v < \svmin$, the Bhatia-Davis inequality cannot be saturated and an extra dirac $\delta_c$ is added in $c$.

    For each of the above distributions, we have: $\esp{Z_i} = c$, $\var{Z_i} = v$ and the regret equals the lower bound $R_{Z_i} = L_{Z_i}$.

    Now we link $Z$ back to the true probabilities $\PPG{\hstar(X)}{\hh(X) = p}$. Let $p \in [0, 1]$ such that $\hh$ has at least 3 antecedant values that we note $x_1$, $x_2$ and $x_3$. We can thus create the joint distribution $\mathbb{P}_{(X, Y)}$ as a discrete distribution
    with $\mathbb{P}(x_i) = z_i$ and $\PPG{Y=1}{X = x_i} = w_i$ for all $i \in \{1, 2, 3\}$, where triplets $(w_1, w_2, w_3)$ are taken from each above cases depending on the values of $c$ and $v$, and $(z_1, z_2, z_3)$ their associated positions. Then, the distribution $\PPG{\hstar(X)}{\hh(X) = p}$ has mean $c$ and variance $v$ and $\rglh(p) = \lglh(p)$.
\end{proof}

\subsection{Grouping regret upper bound}
\label{proof:regret:ub}

\thmRegretUB*
\begin{proof}[Proof of \autoref{thm:regret:ub}]
    Let $p \in \supp{\hh(X)}$.

    Suppose $0 < \PPG{\hstar(X) \geq t\sstar}{\hh(X) = p} < 1$.

    Define $w(p) \triangleq \PPG{\hstar(X) \geq t\sstar}{\hh(X)=p}$ and $c^+(p) \triangleq \espk{\hstar(X)}{\hstar(X) \geq t\sstar, \hh(X)=p}$. \autoref{lem:regret:reform} gives:
    \begin{equation}
        \label{eq:proof:ub}
        \rglh(p) = \unorm\pr{w(p)(c^+(p) - t\sstar) - \1{\ch(p) \geq t\sstar}(\ch(p) - t\sstar)}
    \end{equation}

    We apply the lower bound of \autoref{lem:var} to $Z \triangleq \hstar(X) | \hh(X)$, which gives:
    \begin{align}
        \sqrt{\vfhp}
        & \geq \sqrt{w(p)(1-w(p))(c^+(p) - c^-(p))^2} & \text{\autoref{eq:var:lb}}\\
        & = \sqrt{w(p)(1-w(p))}(c^+(p) - c^-(p)) & c^+(p) \geq c^-(p)\\
        & = \sqrt{\frac{w(p)}{1-w(p)}}(c^+(p) - \ch(p)) & \text{Using \autoref{eq:mu:diff2}}
    \end{align}
    Hence:
    \begin{align}
        w(p)(c^+(p) - t\sstar) & \leq \sqrt{w(p)(1-w(p))\vfhp} + w(p)(\ch(p) - t\sstar)&
    \end{align}
    Using \autoref{eq:proof:ub}, we have:
    \begin{align}
        \label{eq:proof:ub2}
        \rglh(p) \leq \unorm\pr{\sqrt{w(p)(1-w(p))\vfhp} + (w(p) - \1{\ch(p) \geq t\sstar})(\ch(p) - t\sstar)}
    \end{align}

    We showed \autoref{eq:proof:ub2} for all $w(p) \in (0, 1)$. It still holds for $w(p) \in \{0, 1\}$. Indeed, \autoref{lem:regret:reform} shows that $\rglh(p) = 0$ in this case. Also, when $w(p) \in \{0, 1\}$, then $w(p) - \1{\ch(p) \geq t\sstar} = 0$. Hence the right-hand side of \autoref{eq:proof:ub2} is also 0 for $w(p) \in \{0, 1\}$, hence equal to $\rglh(p)$.

    Since \autoref{eq:proof:ub2} holds for all $w(p) \in [0, 1]$, we now find the worst case. That is, the $w(p) \in [0, 1]$ that maximizes the right-hand side of \autoref{eq:proof:ub2}.

    First suppose that $\ch(p) \geq t\sstar$.
    Let $g_{a,b}(w) \triangleq \sqrt{w(1-w)b} + (w - 1)a$ with $b \geq 0$. Note that $\text{(\ref{eq:proof:ub2})} = \unorm\,g_{a,b}(w(p))$ with $a = \ch(p) - t\sstar$ and $b = \vfhp$. The maximum of $g_{a,b}$ is reached at $w = \frac{1}{2}(1 - \sqrt{\frac{a^2}{a^2 + b}})$ with value $\frac{1}{2}(\sqrt{a^2 + b} - a)$. Hence the maximum of (\ref{eq:proof:ub2}) is:
    \begin{equation}
        \frac{\unorm}{2}(\sqrt{\vfhp + (\ch(p) - t\sstar)^2} - (\ch(p) - t\sstar)).
        \label{eq:app:max}
    \end{equation}

    Now suppose that $\ch(p) < t\sstar$.
    Similarly, let $g_{a,b}(w) \triangleq \sqrt{w(1-w)b} + wa$ with $b \geq 0$. Note that $\text{(\ref{eq:proof:ub2})} = \unorm\,g_{a,b}(w(p))$ with $a = \ch(p) - t\sstar$ and $b = \vfhp$. The maximum of $g_{a,b}$ is reached at $w = \frac{1}{2}(1 - \sqrt{\frac{a^2}{a^2 + b}})$ with value $\frac{1}{2}(\sqrt{a^2 + b} + a)$. Hence the maximum of (\ref{eq:proof:ub2}) is:
    \begin{equation}
        \frac{\unorm}{2}(\sqrt{\vfhp + (\ch(p) - t\sstar)^2} - (t\sstar - \ch(p))).
    \end{equation}

    In both cases, the maximum of (\ref{eq:proof:ub2}) is:
    \begin{equation}
        \frac{\unorm}{2}(\sqrt{\vfhp + (\ch(p) - t\sstar)^2} - |\ch(p) - t\sstar|).
    \end{equation}
    which concludes the proof.

    \paragraph{Tightness of the regret upper bound $\uglh$.} Following the proof of the regret upper bound, we see that the bound is obtained using the lower bound of the lemma on the variance (\autoref{lem:var}). The proof of \autoref{lem:var} gives an idea on how to achieve the equality. Taking a distribution where $v^+ = 0$ and $v^- = 0$ is a good candidate. That is, two diracs in $[0, 1]$ with appropriate weights.

    Below we explicit distributions achieving the equality between the regret $\rglh$ and the upper bound $\rglh$. For convenience, we work with a random variable $Z$ valued in $[0,1]$ which we will later link to the true probabilities $\mathbb{P}_{\hstar|h = p}$.

    Suppose $t\sstar \in (0, 1)$. Let $c \in [0, 1]$ and $v \in [0, c(1 - c)]$. The Bhatia-Davis inequality shows that these represent the sets of admissible values of mean and variances of a random variable valued in $[0,1]$. Note $R_Z \triangleq \unorm\esp{(\1{Z\geq t\sstar} - \1{\esp{Z}\geq t\sstar})(Z - t\sstar)}$ and $U_Z \triangleq \tfrac{1}{2}\unorm\!(\sqrt{\var{Z} + (\esp{Z}-t^{\star})^2} - |\esp{Z} - t^{\star}|)$.
    $R_Z$ represents the regret $\rgl$ (\autoref{thm:rgl:expression})
    and $U_Z$ its upper bound $\ugl$ (\autoref{thm:regret:ub}).

    We now show that there exist distributions $\mathbb{P}_Z$ of $Z$ such that $\esp{Z} = c$, $\var{Z} = v$ and $R_Z = U_Z$. Depending of the value of $c$, we explicit 2 distributions of $Z$ that achieve the equality.

    Define $w = \tfrac{1}{2}(1 - \sqrt{\frac{(c - t\sstar)^2}{(c-t\sstar)^2 + v}})$ and $\mathbb{P}_Z = (1-w)\delta_a + w \delta_b$ with $a, b \in [0, 1]$. Suppose $t\sstar = \frac{1}{2}$.

    \paragraph{Case 1: $c < t\sstar$.} Take $a = c - \sqrt{v\tfrac{w}{1-w}}$ and $b = c + \sqrt{v\tfrac{1-w}{w}}$. Both $a$ and $b$ are in $[0, 1]$ when $t\sstar = \frac{1}{2}$.
    \begin{align}
        \esp{Z_1}
        & = (1-w)a + wb\\
        & = c - \sqrt{vw(1-w)} + \sqrt{vw(1-w)}\\
        & = c\\
        \var{Z_1}
        & = (1-w)(a-c)^2 + w(b-c)^2\\
        & = vw + v(1-w)\\
        & = v\\
        R_{Z_1}
        & = \unorm\esp{\1{Z_1\geq t\sstar}(Z_1 - t\sstar)}\\
        & = w(c-t\sstar+\sqrt{v\tfrac{1-w}{w}}) &b \geq t\sstar \text{ and } a \leq c \leq t\sstar\\
        & = \sqrt{vw(1-w)} + w(c-t\sstar)\\
        & = \frac{1}{2}(\sqrt{v + (c-t\sstar)^2} - (c-t\sstar)) + (c-t\sstar)&\text{\autoref{eq:app:max}}\\
        & = \frac{1}{2}(\sqrt{v + (c-t\sstar)^2} - (t\sstar-c))\\
        & = U_{Z_1}
    \end{align}

    \paragraph{Case 2: $c \geq t\sstar$.}Take $a = c - \sqrt{v\tfrac{1-w}{w}}$ and $b = c + \sqrt{v\tfrac{w}{1-w}}$. Both $a$ and $b$ are in $[0, 1]$ when $t\sstar = \frac{1}{2}$.

    \begin{align}
        \esp{Z_2}
        & = (1-w)a + wb\\
        & = c + \sqrt{vw(1-w)} - \sqrt{vw(1-w)}\\
        & = c\\
        \var{Z_2}
        & = (1-w)(a-c)^2 + w(b-c)^2\\
        & = vw + v(1-w)\\
        & = v\\
    \end{align}
    \begin{align}
    R_{Z_2}
        & = \unorm\esp{\1{Z_2 < t\sstar}(t\sstar - Z_2)}\\
        & = w(t\sstar - c + \sqrt{v\tfrac{1-w}{w}})& b \geq c \geq t\sstar \text{ and } a \leq c \leq t\sstar\\
        & = \sqrt{vw(1-w)} + w(t\sstar - c)\\
        & = \frac{1}{2}(\sqrt{v + (c-t\sstar)^2} - (t\sstar-c)) + (t\sstar-c)&\text{\autoref{eq:app:max}}\\
        & = \frac{1}{2}(\sqrt{v + (c-t\sstar)^2} - (t\sstar-c))\\
        & = U_{Z_2}
    \end{align}

    For each of the above distributions, we have: $\esp{Z_i} = c$, $\var{Z_i} = v$ and the regret equals the lower bound $R_{Z_i} = U_{Z_i}$.

    Now we link $Z$ back to the true probabilities $\PPG{\hstar(X)}{\hh(X) = p}$. Let $p \in [0, 1]$ such that $\hh$ has at least 2 antecedant values that we note $x_1$ and $x_2$. We can thus create the joint distribution $\mathbb{P}_{(X, Y)}$ as a discrete distribution
    with $\mathbb{P}(x_1) = a$, $\mathbb{P}(x_2) = b$, $\PPG{Y=1}{X = x_1} = 1-w$ and $\PPG{Y=1}{X = x_2} = w$ Then, the distribution $\PPG{\hstar(X)}{\hh(X) = p}$ has mean $c$ and variance $v$ and $\rglh(p) = \uglh(p)$.

    If $p \in [0, 1]$ is such that $\hh$ has only one antecedant value, then there is no grouping loss in the level set $\hh = p$, \ie{} $\GL(p) = 0$. Hence, $\rglh(p) = 0 = \uglh(p)$.

\end{proof}

\subsection{GLAR -- Grouping Loss Adaptative Recalibration}
\label{sec:app:glar}

\begin{restatable}[GLAR reduces grouping loss, \ref{sec:app:glar}]{proposition}{propglar}
    \label{prop:glar}
    Let a function $\hh : \Xc \to [0, 1]$ and a partition $\Pc : \Xc \to E$ of the feature space $\Xc$. Then the GLAR-estimator $\hhp$ defined in \autoref{eq:recal_ours} satisfies:
    \begin{align}
        \label{eq:glar}
        \espk{Y}{\hhp(X) = p} & = p & \text{$\hhp$ is calibrated}\\
        \GL(\hhp) \leq \GL(\hh)&   & \text{$\hhp$ has lower $\GL$ than $\hh$}
    \end{align}
    The grouping loss is reduced by: $\quad \GL(\hh) - \GL(\hhp) = \esp{\vark{\hhp(X)}{\hh(X)}}$.\\
    The remaining grouping loss is: $\quad \GL(\hhp) = \esp{\vark{\hstar(X)}{\hh(X), \Pc(X)}}$.
\end{restatable}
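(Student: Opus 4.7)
The plan hinges on a single observation: by the tower property, since $\hh(X)$ and $\Pc(X)$ are functions of $X$,
\[
\hhp(X) \;=\; \espk{Y}{\hh(X),\Pc(X)} \;=\; \espk{\hstar(X)}{\hh(X),\Pc(X)}.
\]
I would establish this rewriting first; all four claims then reduce to applications of either the tower property or the law of total variance.

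For calibration, to show $\espk{Y}{\hhp(X)=p}=p$, I would write $\hhp(X)=m(\hh(X),\Pc(X))$ where $m$ denotes the regression function of $Y$ on $(\hh(X),\Pc(X))$. On the level set $\{m(\hh(X),\Pc(X))=p\}$, replacing $Y$ by its conditional expectation $m(\hh(X),\Pc(X))$ inside the outer expectation (a one-line tower argument, legitimate because the indicator of that event is $\sigma(\hh(X),\Pc(X))$-measurable) immediately returns $p$. This is the standard fact that any predictor defined as a conditional expectation of $Y$ given some sub-$\sigma$-algebra is calibrated, applied here to $\sigma(\hh(X),\Pc(X))$.

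For the remaining grouping loss, I would apply the law of total variance with outer conditioning on $\hhp(X)$ and inner conditioning on $(\hh(X),\Pc(X))$. Because $\hhp(X)$ is $\sigma(\hh(X),\Pc(X))$-measurable, the between-groups variance term $\vark{\espk{\hstar(X)}{\hh(X),\Pc(X)}}{\hhp(X)} = \vark{\hhp(X)}{\hhp(X)}$ collapses to zero, yielding
\[
\vark{\hstar(X)}{\hhp(X)} \;=\; \espk{\vark{\hstar(X)}{\hh(X),\Pc(X)}}{\hhp(X)},
\]
which marginalizes to $\GL(\hhp)=\esp{\vark{\hstar(X)}{\hh(X),\Pc(X)}}$. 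For the reduction, I would run the same identity with outer conditioning on $\hh(X)$ and inner on $(\hh(X),\Pc(X))$: this gives $\vark{\hstar(X)}{\hh(X)} = \espk{\vark{\hstar(X)}{\hh(X),\Pc(X)}}{\hh(X)} + \vark{\hhp(X)}{\hh(X)}$, and taking expectations produces $\GL(\hh)=\GL(\hhp)+\esp{\vark{\hhp(X)}{\hh(X)}}$. The inequality $\GL(\hhp)\le\GL(\hh)$ follows since the last term is non-negative.

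The main obstacle is essentially bookkeeping: carefully tracking which $\sigma$-algebras generate which variables so that the between-groups term vanishes when conditioning on $\hhp(X)$ but contributes the non-negative reduction term when coarsening to $\hh(X)$. No measure-theoretic subtlety beyond this is needed; the proof proceeds entirely through the tower property and the law of total variance.
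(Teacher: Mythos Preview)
Your proposal is correct and follows essentially the same approach as the paper: both arguments rest on the identity $\hhp(X)=\espk{\hstar(X)}{\hh(X),\Pc(X)}$ and two applications of the law of total variance (one with outer conditioning on $\hhp(X)$, one on $\hh(X)$). Your version is slightly more self-contained---the paper routes the same computation through the explained/residual decomposition of \citet{Perez-Lebel2023} and omits the calibration step you include---but the mathematical content is the same.
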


\begin{proof}
    Let $\Pc$ a partition of the feature space.
    Define $P = \Pc(X)$, $\S = \hh(X)$, $\S_P = \hhp(X)$ and $\GL(\S) = \esp{\vark{\Q}{\S}}$. Using definition of $\S_P$, we have: $\S_P = \espk{Y}{\S, P} = \espk{\espk{\Q}{X}}{\S,P} = \espk{\Q}{\S,P}$

    Following \citet[Theorem 4.1]{Perez-Lebel2023}, we have $\GL(\S) = \GL_{\mathrm{explained}}(\S) + \GL_{\mathrm{residual}}(\S)$ with $\GL_{\mathrm{explained}}(\S) = \esp{\vark{\espk{\Q}{\S,P}}{\S}}$ and $\GL_{\mathrm{residual}}(\S) = \esp{\vark{\Q}{\S,P}}$.

    \begin{align}
        & \GL_{\mathrm{explained}}(\S) - \GL_{\mathrm{explained}}(\S_P)\\
        & = \esp{\vark{\espk{\Q}{\S,P}}{\S}} - \esp{\vark{\espk{\Q}{\S,P}}{\S_P}}&\\
        & = \esp{\vark{\espk{\Q}{\S,P}}{\S}} - \esp{\vark{\S_P}{\S_P}}& \S_P = \espk{\Q}{\S,P}\\
        & = \esp{\vark{\espk{\Q}{\S,P}}{\S}} & \vark{\S_P}{\S_P} = 0\\
        & = \esp{\vark{\S_P}{\S}} &
    \end{align}

    We apply the law of total variance on the residual term by conditioning on $\S$:
    \begin{align}
        \GL_{\mathrm{residual}}(\S_P)
        & = \esp{\vark{\Q}{\S_P,P}}\\
        & = \esp{\vark{\espk{\Q}{\S,\S_P,P}}{\S_P,P} + \espk{\vark{\Q}{\S,\S_P,P}}{\S_P,P}}\\
        & = \esp{\vark{\espk{\Q}{\S,P}}{\S_P,P} + \espk{\vark{\Q}{\S,P}}{\S_P,P}}\\
        & = \esp{\vark{\S_P}{\S_P,P} + \espk{\vark{\Q}{\S,P}}{\S_P,P}}\\
        & = \esp{\espk{\vark{\Q}{\S,P}}{\S_P,P}}\\
        & = \esp{\vark{\Q}{\S,P}}\\
        & = \GL_{\mathrm{residual}}(\S)
    \end{align}

    Hence:
    \begin{align}
        & \GL(\S) - \GL(\S_P)\\
        & = \GL_{\mathrm{explained}}(\S) - \GL_{\mathrm{explained}}(\S_P) +  \GL_{\mathrm{residual}}(\S) - \GL_{\mathrm{residual}}(\S_P)\\
        & = \esp{\vark{\espk{\Q}{\S,P}}{\S}}
    \end{align}
\end{proof}

\begin{restatable}[Hierarchy of partition-based decision rules]{lemma}{lemhierarchy}
    \label{lem:hierarchy}
    Let $\Pc_1, \Pc_2 : \Xc \to E$ be two partitions of the feature space such that $\Pc_1(X)$ is $\Pc_2(X)$-measurable.
    Let $g_\Pc : x \mapsto \espk{Y}{\Pc(X)=\Pc(x)}$. Then:
    \begin{align}
        \EU(\delta_{g_{\Pc_1}, t}) \leq \EU(\delta_{g_{\Pc_2}, t\sstar}) && \forall t \in [0, 1]
    \end{align}
\end{restatable}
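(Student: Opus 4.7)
The plan is to reduce the inequality, via \autoref{cor:expected-utility:diff}, to a pointwise maximization of $a(g_{\Pc_2}(X)-t\sstar)$ over $a\in\{0,1\}$, after exploiting the measurability of $g_{\Pc_1}$ with respect to $\Pc_2(X)$.

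First, I would apply \autoref{cor:expected-utility:diff} and take an outer expectation over $X$ to write
\begin{equation*}
    \EU(\delta_{g_{\Pc_2},t\sstar}) - \EU(\delta_{g_{\Pc_1},t})
    = \unorm\,\esp[X]{\bigl(\1{g_{\Pc_2}(X)\geq t\sstar} - \1{g_{\Pc_1}(X)\geq t}\bigr)\bigl(\hstar(X)-t\sstar\bigr)}.
\end{equation*}

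Second, I would use the measurability hypothesis. By construction $g_{\Pc_2}(X)$ is $\Pc_2(X)$-measurable, and since $\Pc_1(X)$ is $\Pc_2(X)$-measurable by assumption, so is $g_{\Pc_1}(X)$, and hence both indicators are $\Pc_2(X)$-measurable. Conditioning on $\Pc_2(X)$ and using $\hstar(X)=\espk{Y}{X}$ together with the tower property gives $\espk{\hstar(X)}{\Pc_2(X)}=\espk{Y}{\Pc_2(X)}=g_{\Pc_2}(X)$, so that
\begin{equation*}
    \EU(\delta_{g_{\Pc_2},t\sstar}) - \EU(\delta_{g_{\Pc_1},t})
    = \unorm\,\esp[X]{\bigl(\1{g_{\Pc_2}(X)\geq t\sstar} - \1{g_{\Pc_1}(X)\geq t}\bigr)\bigl(g_{\Pc_2}(X)-t\sstar\bigr)}.
\end{equation*}

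Third, I would conclude by a pointwise argument: for each $x$, the map $a\mapsto a\bigl(g_{\Pc_2}(x)-t\sstar\bigr)$ with $a\in\{0,1\}$ is maximized by $a^\star=\1{g_{\Pc_2}(x)\geq t\sstar}$, so that
$\bigl(\1{g_{\Pc_2}(X)\geq t\sstar}-\1{g_{\Pc_1}(X)\geq t}\bigr)\bigl(g_{\Pc_2}(X)-t\sstar\bigr)\geq 0$ almost surely. Integrating and using $\unorm\geq 0$ (which can be assumed w.l.o.g., else the roles of classes $0$ and $1$ are swapped) yields the desired inequality for every $t\in[0,1]$.

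The only subtle step is the second one: one must carefully invoke the hypothesis that $\Pc_1(X)$ is $\Pc_2(X)$-measurable to ensure $\1{g_{\Pc_1}(X)\geq t}$ is $\Pc_2(X)$-measurable, so that the tower property can pull $\hstar(X)-t\sstar$ inside the conditional expectation against $\Pc_2(X)$. All remaining steps are the same optimality argument used in the proof of \autoref{prop:decision:est} with the calibrated proxy $g_{\Pc_2}$ playing the role of $\cch$.
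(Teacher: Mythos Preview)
Your proposal is correct and follows essentially the same route as the paper: apply \autoref{cor:expected-utility:diff}, condition on $\Pc_2(X)$ and use the measurability hypothesis together with the tower property to replace $\hstar(X)-t\sstar$ by $g_{\Pc_2}(X)-t\sstar$, then conclude by nonnegativity of the integrand. The only cosmetic difference is that the paper splits the last step into the two disagreement cases explicitly, whereas you phrase it as the optimality of $a^\star=\1{g_{\Pc_2}(x)\geq t\sstar}$; and you make the (implicit in the paper) assumption $\unorm\geq 0$ explicit.
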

\autoref{lem:hierarchy} shows that the finer the partition the better the expected utility. Since all of the original, recalibrated and GLAR-corrected can be seen as instance of GLAR with different granularity of partition (the trivial partition, the partition on $\hh$, and the partition on $(\hh, \Pc)$ respectively), this enables to conclude on a hierarchy between these decision rules (\autoref{prop:hierarchy}).

\begin{proof}
    Let $\Pc_1, \Pc_2 : \Xc \to \R^d$ be two partitions of the feature space such that $\Pc_1(X)$ is $\Pc_2(X)$-measurable.
    Let $g_\Pc : x \mapsto \espk{Y}{\Pc(X)=\Pc(x)}$. Let $G_1 \triangleq g_{\Pc_1}(X)$, $G_2 \triangleq g_{\Pc_2}(X)$, $P_1 \triangleq \Pc_1(X)$ and $P_2 \triangleq \Pc_2(X)$ the associated random variables.

    \begin{align*}
        & \EU(\delta_{g_{\Pc_2}, \t}) - \EU(\delta_{g_{\Pc_1}, \t})\\
        & = \esp[X]{(\1{G_2 \geq \t} - \1{G_1 \geq \t})(\tfrac{\Q}{\t} - 1)} & \text{\autoref{cor:expected-utility:diff}}\\
        & = \esp[X]{\espk{(\1{G_2 \geq \t} - \1{G_1 \geq \t})(\tfrac{\Q}{\t} - 1)}{\Pc_2}} & \text{Total expectation}\\
        & = \esp[X]{(\1{G_2 \geq \t} - \1{G_1 \geq \t})\espk{\tfrac{\Q}{\t} - 1}{\Pc_2}} & \text{$P_1$ is $P_2$-measurable}\\
        & = \esp[X]{(\1{G_2 \geq \t} - \1{G_1 \geq \t})\pr{\tfrac{\espk{\Q}{\Pc_2}}{\t} - 1}} & \text{Linearity}\\
        & = \esp[X]{(\1{G_2 \geq \t} - \1{G_1 \geq \t})\pr{\tfrac{\espk{\espk{Y}{X}}{\Pc_2}}{\t} - 1}} & \text{Definition of $\Q$}\\
        & = \esp[X]{(\1{G_2 \geq \t} - \1{G_1 \geq \t})\pr{\tfrac{\espk{Y}{\Pc_2}}{\t} - 1}} & \text{Total expectation}\\
        & = \esp[X]{(\1{G_2 \geq \t} - \1{G_1 \geq \t})\pr{\tfrac{G_2}{\t} - 1}} & \text{Definition of $G_2$}\\
        & = \esp[X]{\1{G_2 \geq \t}\1{G_1 < \t}\pr{\tfrac{G_2}{\t} - 1} + \1{G_2 < \t}\1{G_1 \geq \t}\pr{1 - \tfrac{G_2}{\t}}} & \\
        & \geq 0
    \end{align*}

\end{proof}

\begin{restatable}[Hierarchy of decision rules, \ref{sec:app:glar}]{proposition}{prophierarchy}
    \label{prop:hierarchy}
    Let $\Pc_1, \Pc_2 : \Xc \to \R$ be two partitions of the feature space such that $\Pc_1$ is constant on regions of $\Pc_2$ (\ie{} $\Pc_1$ is a function of $\Pc_2$).
    Then, for all $\delta_{\hh,t} \in \mathcal{D}_\hh$:
    \begin{equation*}
        \EU(\delta_{\hh,t})
        \leq
        \EU(\delta_{\cch, t\sstar})
        \leq
        \EU(\delta_{\hh_{\Pc_1}, t\sstar})
        \leq
        \EU(\delta_{\hh_{\Pc_2}, t\sstar})
        \leq
        \EU(\delta_{\hstar, t\sstar})
    \end{equation*}
\end{restatable}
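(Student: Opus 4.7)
The plan is to chain four inequalities, using \autoref{prop:decision:est} for the leftmost one and applying \autoref{lem:hierarchy} three times (with appropriately chosen partitions) for the remaining three.

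First, for the leftmost inequality $\EU(\delta_{\hh,t}) \leq \EU(\delta_{\cch, t\sstar})$, I would simply observe that $\delta_{\hh,t}\in\mathcal{D}_\hh$ (take $g=\mathrm{id}$ in the definition of $\mathcal{D}_\hh$), and that \autoref{prop:decision:est} asserts $\delta_{\cch, t\sstar}$ maximizes the conditional expected utility $\EU(\cdot|p)$ over $\mathcal{D}_\hh$ for every $p\in\supp{\hh(X)}$. Integrating this pointwise comparison against the law of $\hh(X)$ yields the marginal inequality.

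Second, I would recast each of $\cch$, $\hhp[\Pc_1]$, $\hhp[\Pc_2]$ and $\hstar$ as functions of the form $g_\Pc(x)=\espk{Y}{\Pc(X)=\Pc(x)}$ for an appropriate partition $\Pc$:
\begin{itemize}[itemsep=1pt, parsep=1pt, topsep=0pt,leftmargin=2ex]
\item $\cch = g_{\Pc_\hh}$ with $\Pc_\hh(x)\triangleq\hh(x)$, since by definition of the calibration curve $\ch(\hh(x))=\espk{Y}{\hh(X)=\hh(x)}$.
\item $\hhp[\Pc_i] = g_{(\hh,\Pc_i)}$ with the product partition $(\hh,\Pc_i)(x)\triangleq(\hh(x),\Pc_i(x))$, by \autoref{def:glar}.
\item $\hstar = g_{\Pc_X}$ with the finest partition $\Pc_X(x)\triangleq x$, by definition of $\hstar$.
\end{itemize}
The measurability hypothesis of \autoref{lem:hierarchy} is then easy to verify in each step: $\Pc_\hh$ is $(\hh,\Pc_1)$-measurable (it is a coordinate projection); $(\hh,\Pc_1)$ is $(\hh,\Pc_2)$-measurable because $\Pc_1$ is assumed to be a function of $\Pc_2$; and $(\hh,\Pc_2)$ is trivially $\Pc_X$-measurable. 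Applying \autoref{lem:hierarchy} with $t=t\sstar$ in each of the three consecutive comparisons delivers the remaining three inequalities in the chain.

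The main obstacle is really just bookkeeping: checking that each object in the hierarchy has the correct ``conditional expectation'' representation so that \autoref{lem:hierarchy} applies verbatim, and verifying the refinement relation between successive partitions. The only substantive point is for $\cch$, where one must invoke the tower property $\espk{Y}{\hh(X)}=\espk{\espk{Y}{X}}{\hh(X)}=\espk{\hstar(X)}{\hh(X)}$ to see that the calibration curve coincides with $g_{\Pc_\hh}$; the other identifications are immediate from the definitions. Once these identifications are made, the conclusion follows by concatenating the four inequalities.
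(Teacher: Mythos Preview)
Your proposal is correct and follows essentially the same route as the paper: identify each predictor in the chain as $g_\Pc$ for a suitable partition $\Pc$, check the refinement (measurability) relations, and invoke \autoref{lem:hierarchy}. The paper's proof is terser---it lists the partitions $P_0=0$, $P_c=\hh(X)$, $P_1=(P_c,\Pc_1(X))$, $P_2=(P_c,\Pc_2(X))$ and defers to the lemma---whereas you spell out the leftmost inequality via \autoref{prop:decision:est} and explicitly add the identity partition $\Pc_X$ for $\hstar$; both of these clarifications are welcome, since the paper's proof leaves those two ends implicit.
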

\begin{proof}
    Let $\Pc_1, \Pc_2 : \Xc \to \R^d$ be two partitions of the feature space such that $\Pc_1(X)$ is $\Pc_2(X)$-measurable.

    Define $P_0 = 0$, $P_c = \hh(X)$, $P_1 = (P_c, \Pc_1(X))$ and $P_2 = (P_c, \Pc_2(X))$. We have: $P_0$ is $P_c$-measurable, $P_c$ is $P_1$-measurable, and $P_1$ is $P_2$-measurable. Hence, \autoref{lem:hierarchy} concludes the proof.

\end{proof}

\paragraph{A threshold view}
A duality between probability correction and adaptative thresholding also exists for GLAR. Instead of correcting the estimated probabilities on groups with $\hhp$, GLAR can also be viewed as using the original estimates $\hh$ with an adaptative threshold $t_\Pc$ instead (\autoref{def:glat}): $\delta_{\hh, t_\Pc}$ and $\delta_{\hhp, \t}$ have same regret.

Instead of correcting the estimated probabilities on groups with $\hhp$, this procedure can be viewed as using the original estimates $\hh$ with an adaptative threshold $t_\Pc$ instead (\autoref{def:glat}):

\begin{definition}[Grouping loss adaptative threshold GLAT]
    \label{def:glat}
    \begin{equation}
        \label{eq:gl-threshold}
        \begin{aligned}
        t_{\mathcal{P}} \colon x \mapsto t\sstar - (\hh_{\mathcal{P}}(x) - \hh(x))
        \end{aligned}
    \end{equation}
\end{definition}
$\delta_{\hh, t_\Pc}$ and $\delta_{\hhp, \t}$ have same regret: $\hh(x) \geq t_{\Pc} \Leftrightarrow \hh_\Pc(x) \geq t\sstar$.

\clearpage
\section{Experiments}
\label{sec:app:xp}

\subsection{Model definitions}
\begin{longtable}{lll}
\caption{Detailed description of all the models used for the hate speech detection experiment.} \label{tab:model_translation} \\
\toprule
 &  & Latent Layer \\
Model & HuggingFace &  \\
\midrule
\endfirsthead
\caption[]{Detailed description of all the models used for the hate speech detection experiment.} \\
\toprule
 &  & Latent Layer \\
Model & HuggingFace &  \\
\midrule
\endhead
\midrule
\multicolumn{3}{r}{Continued on next page} \\
\midrule
\endfoot
\bottomrule
\endlastfoot
CNERG Hatexplain & \href{https://huggingface.co/Hate-speech-CNERG/bert-base-uncased-hatexplain}{Hate-speech-CNERG/bert-base-uncased-hate...} & classifier \\
\cline{1-3}
CNERG en MuRIL & \href{https://huggingface.co/Hate-speech-CNERG/english-abusive-MuRIL}{Hate-speech-CNERG/english-abusive-MuRIL} & classifier \\
\cline{1-3}
CNERG en mono & \href{https://huggingface.co/Hate-speech-CNERG/dehatebert-mono-english}{Hate-speech-CNERG/dehatebert-mono-englis...} & classifier \\
\cline{1-3}
CNERG portuguese & \href{https://huggingface.co/Hate-speech-CNERG/dehatebert-mono-portugese}{Hate-speech-CNERG/dehatebert-mono-portug...} & classifier \\
\cline{1-3}
CNERG tamil & \href{https://huggingface.co/Hate-speech-CNERG/tamil-codemixed-abusive-MuRIL}{Hate-speech-CNERG/tamil-codemixed-abusiv...} & classifier \\
\cline{1-3}
FB Roberta & \href{https://huggingface.co/facebook/roberta-hate-speech-dynabench-r4-target}{facebook/roberta-hate-speech-dynabench-r...} & classifier.out\_proj \\
\cline{1-3}
\end{longtable}

\subsection{Dataset definitions}
\setlength{\tabcolsep}{3pt}
\begin{longtable}{llllll}
\caption{Detailed description of all the datasets used for the hate speech detection experiment.} \label{tab:ds_translation} \\
\toprule
 &  & Split & Input & Target & Pos. class \\
Dataset & HuggingFace or CSV &  &  &  &  \\
\midrule
\endfirsthead
\caption[]{Detailed description of all the datasets used for the hate speech detection experiment.} \\
\toprule
 &  & Split & Input & Target & Pos. class \\
Dataset & HuggingFace or CSV &  &  &  &  \\
\midrule
\endhead
\midrule
\multicolumn{6}{r}{Continued on next page} \\
\midrule
\endfoot
\bottomrule
\endlastfoot
Tweets & \href{https://huggingface.co/datasets/tweets\_hate\_speech\_detection}{tweets\_hate\_speech\_detect...} & train & tweet & label &  \\
\cline{1-6}
Speech18 & \href{https://huggingface.co/datasets/hate\_speech18}{hate\_speech18} & train & text & label &  \\
\cline{1-6}
Offensive & \href{https://huggingface.co/datasets/hate\_speech\_offensive}{hate\_speech\_offensive} & train & tweet & class & 0 \\
\cline{1-6}
Davidson & \href{https://huggingface.co/datasets/krishan-CSE/Davidson\_Hate\_Speech}{krishan-CSE/Davidson\_Hate...} & train+test & text & labels & 0 \\
\cline{1-6}
Gender & \href{https://huggingface.co/datasets/ctoraman/gender-hate-speech}{ctoraman/gender-hate-spee...} & train+test & Text & Label & 2 \\
\cline{1-6}
\multirow[t]{2}{*}{FRENK} & \href{https://huggingface.co/datasets/classla/FRENK-hate-en}{classla/FRENK-hate-en} & train+val+test & text & label &  \\
 & \href{https://huggingface.co/datasets/limjiayi/hateful\_memes\_expanded}{limjiayi/hateful\_memes\_ex...} & train+val+test & text & label &  \\
\cline{1-6}
Check & \href{https://huggingface.co/datasets/Paul/hatecheck}{Paul/hatecheck} & test & test\_case & label\_gold & hateful \\
\cline{1-6}
Tweets 2 & \href{https://huggingface.co/datasets/thefrankhsu/hate\_speech\_twitter}{thefrankhsu/hate\_speech\_t...} & train+test & tweet & label &  \\
\cline{1-6}
Open & \href{https://huggingface.co/datasets/parnoux/hate\_speech\_open\_data\_original\_class\_test\_set}{parnoux/hate\_speech\_open\_...} & test & tweet & class & 0 \\
\cline{1-6}
UCB & \href{https://huggingface.co/datasets/ucberkeley-dlab/measuring-hate-speech}{ucberkeley-dlab/measuring...} & train & text & hate\_speec... & y > 0.5 \\
\cline{1-6}
\multirow[t]{5}{*}{Merged} & \href{https://huggingface.co/datasets/tweets\_hate\_speech\_detection}{tweets\_hate\_speech\_detect...} & train & tweet & label &  \\
 & \href{https://huggingface.co/datasets/hate\_speech18}{hate\_speech18} & train & text & label &  \\
 & \href{https://huggingface.co/datasets/hate\_speech\_offensive}{hate\_speech\_offensive} & train & tweet & class & 0 \\
 & \href{https://huggingface.co/datasets/krishan-CSE/Davidson\_Hate\_Speech}{krishan-CSE/Davidson\_Hate...} & train+test & text & labels & 0 \\
 & \href{https://huggingface.co/datasets/ctoraman/gender-hate-speech}{ctoraman/gender-hate-spee...} & train+test & Text & Label & 2 \\
\cline{1-6}
\multirow[t]{5}{*}{Merged 2} & \href{https://huggingface.co/datasets/classla/FRENK-hate-en}{classla/FRENK-hate-en} & train+val+test & text & label &  \\
 & \href{https://huggingface.co/datasets/limjiayi/hateful\_memes\_expanded}{limjiayi/hateful\_memes\_ex...} & train+val+test & text & label &  \\
 & \href{https://huggingface.co/datasets/Paul/hatecheck}{Paul/hatecheck} & test & test\_case & label\_gold & hateful \\
 & \href{https://huggingface.co/datasets/thefrankhsu/hate\_speech\_twitter}{thefrankhsu/hate\_speech\_t...} & train+test & tweet & label &  \\
 & \href{https://huggingface.co/datasets/parnoux/hate\_speech\_open\_data\_original\_class\_test\_set}{parnoux/hate\_speech\_open\_...} & test & tweet & class & 0 \\
\cline{1-6}
DynGen & CSV: \href{https://raw.githubusercontent.com/bvidgen/Dynamically-Generated-Hate-Speech-Dataset/main/Dynamically%20Generated%20Hate%20Dataset%20v0.2.3.csv}{bvidgen/Dynamic...} &  & text & label & hate \\
\cline{1-6}
\multirow[t]{11}{*}{Merged3} & \href{https://huggingface.co/datasets/tweets\_hate\_speech\_detection}{tweets\_hate\_speech\_detect...} & train & tweet & label &  \\
 & \href{https://huggingface.co/datasets/hate\_speech18}{hate\_speech18} & train & text & label &  \\
 & \href{https://huggingface.co/datasets/hate\_speech\_offensive}{hate\_speech\_offensive} & train & tweet & class & 0 \\
 & \href{https://huggingface.co/datasets/krishan-CSE/Davidson\_Hate\_Speech}{krishan-CSE/Davidson\_Hate...} & train+test & text & labels & 0 \\
 & \href{https://huggingface.co/datasets/ctoraman/gender-hate-speech}{ctoraman/gender-hate-spee...} & train+test & Text & Label & 2 \\
 & \href{https://huggingface.co/datasets/classla/FRENK-hate-en}{classla/FRENK-hate-en} & train+val+test & text & label &  \\
 & \href{https://huggingface.co/datasets/limjiayi/hateful\_memes\_expanded}{limjiayi/hateful\_memes\_ex...} & train+val+test & text & label &  \\
 & \href{https://huggingface.co/datasets/Paul/hatecheck}{Paul/hatecheck} & test & test\_case & label\_gold & hateful \\
 & \href{https://huggingface.co/datasets/thefrankhsu/hate\_speech\_twitter}{thefrankhsu/hate\_speech\_t...} & train+test & tweet & label &  \\
 & \href{https://huggingface.co/datasets/parnoux/hate\_speech\_open\_data\_original\_class\_test\_set}{parnoux/hate\_speech\_open\_...} & test & tweet & class & 0 \\
 & CSV: \href{https://raw.githubusercontent.com/bvidgen/Dynamically-Generated-Hate-Speech-Dataset/main/Dynamically%20Generated%20Hate%20Dataset%20v0.2.3.csv}{bvidgen/Dynamic...} &  & text & label & hate \\
\cline{1-6}
\end{longtable}

\subsection{Experimental Details}
\label{sec:details}

\paragraph{Data}
For each of the pre-trained models for hate-speech detection \autoref{tab:model_translation}, we use the embedding space of the pernultimate layers as input space $\Xc$ for estimating the grouping loss. We note $f : \Xc \to [0, 1]$ the function mapping the embedding space of the model to the hate-speech probability esitmate. We forward the datasets listed in \autoref{tab:ds_translation} through each of the models. We store the ground truth label $Y$, which equals to 1 if the text is hate speech and 0 otherwise, the embedding $X$, and the model's predicted probability $\hh(X)$ that the text is hate-speech. Then we work with each triplet $(X, Y, f(X))$ to investigate the effect of post-training on decision-making. The data is split into a train set on which post-training methods are fit, and a test set on which the post-training methods are applied and the expected utility and regrets are estimated.

\paragraph{Utility} We define binary utility matrices $U \in \R^{2 \times 2}$ of the shape:
\begin{equation}
    U \triangleq
    \begin{bmatrix}
        1 & 0\\
        0 & U_{11}
    \end{bmatrix}
\end{equation}
with $U_{11} \in \R^+$. We select values of $U_{11}$ so that the optimal threshold $t\sstar$ takes values $[0.01, 0.025, 0.05, 0.1, 0.25, 0.5, 0.75, 0.9, 0.95, 0.975, 0.99]$. Note that the link between $U_{11}$ and $t\sstar$ is given by $U_{11} = \tfrac{1}{t\sstar} - 1$ (\autoref{eq:optimal_classifier}). Each utilty matrix $U$ represents a different decision-making problem. Following the decision theory result of \citet{Elkan2001}, the optimal decision rule is $\delta_{\hstar, t\sstar}$. We thus use $\delta_{\hh, t\sstar}$ as decision rule from the probability esimates given by $f$.

The expected utility of decision rule $\delta_{\hh, t\sstar}$ is given by $\EU(\delta_{\hh, t\sstar}) = \esp{U[\delta_{\hh, t\sstar}(X), Y]}$. We estimte the expected utility using the emprirical utility $\widehat{\EU} \triangleq \frac{1}{n}\sum_i U[\delta_{\hh, t\sstar}(X_i), Y_i]$ where $(X_i, Y_i)$ are the samples of $(X, Y)$ and $n$ is the number of samples.

\paragraph{Regret} We estimate the regrets using the expression given in \autoref{def:intro:estimators}. This necessitates estimates of the calibration curve $\ch$ and the grouping loss. To estimate $\ch$ we using an histogram binning with 15 equal-mass bins. The esitmation of the grouping loss is detailed in the next paragarph.

\paragraph{Grouping Loss estimation}
For the estimation of the grouping loss we follow the estimation procedure given by \citet{Perez-Lebel2023}. We use 15 equal-mass bins on the probability space $[0, 1]$. For partitioning the embedding space $\Xc$, we use a decision tree on the pair $(X, Y)$ constrained to create at most 5 regions per bin. The partition $\Pc$ is derived from the fitted decision tree using the leaves of the tree: each leave forms a part of the partition. To fit the decision tree we use a train-test split strategy, as recommended by \citet{Perez-Lebel2023}, akin to a honest tree. The decision tree is fitted on the first part of the train set and the local averages and grouping loss are estimated on the second part of the train set. This avoids overfitting.

\paragraph{Recalibration} We use recalibration methods to correct the estimated probabilities. Below are listed the methods used and implementation details. Recalibration methods are fitted on the train set and applied on the test set.
\begin{itemize}
    \item \emph{Isotonic Regression}. We use scikit-learn's implementation \texttt{IsotonicRegression} from \texttt{sklearn.isotonic} and fit it on (S, Y). This method has no hyperparameters.
    \item \emph{Platt Scaling}. We use scikit-learn's implementation \texttt{\_SigmoidCalibration} from \texttt{sklearn.calibration} and fit it on (S, Y). This method has no hyperparameters.
    \item \emph{Histogram Binning}. We use the implementation of \texttt{HistogramCalibrator} from the \texttt{calibration} python package. We use 15 equal-mass bins.
    \item \emph{Scaling-Binning}. We use the implementation of \texttt{PlattBinnerCalibrator} from the \texttt{calibration} python package. We use 15 equal-mass bins.
    \item \emph{Meta-Cal}. We use the implementation of \texttt{MetaCalMisCoverage} from the \texttt{metacal} python package. We use a miscoverage of 0.05 based on the default value provided in the package's example \texttt{metacal/examples/test\_metacal.py}.
\end{itemize}

\paragraph{Post-training} Besides recalibration, we also investigate post-training methods. Below are listed the methods used and implementation details. Post-training methods are fitted on the train set and applied on the test set.
\begin{itemize}
    \item \emph{Fine-tuning}. Since the link between each hate-speech model's embedding space and the output space is a sigmoid, we use scikit-learn to learn the sigmoid function with \texttt{LogisticRegression} from \texttt{sklearn.linear\_model} with default parameters. In this form of finetuning, the initial model weight is not used. This contrasts with finetuning using pytorch's optimizers starting from the pre-trained model's weights.
    \item \emph{Stacking}. We use \texttt{RandomForestClassifier} and \texttt{HistGradientBoostingClassifier} implementation from scikit-learn with default parameters for the stacked models. The stack model is fit on the augmented space $(X, \hh(X))$ as input, that is $((X, \hh(X)), Y)$.
    \item \emph{GLAR}. We implemented GLAR as described in
    the next paragraph.
\end{itemize}

\paragraph{Metrics} The performance metrics used in the experiments are:
\begin{itemize}
    \item \emph{Brier score} defined as $\mathrm{Brier} \triangleq \esp{(\hh(X) - Y)^2}$.
    \item \emph{Expected Calibration Error (ECE)} defined as $\mathrm{ECE} \triangleq \esp{|\espk{Y}{\hh(X)} - \hh(X)|}$.
    \item \emph{Calibration Loss (CL)} defined as $\mathrm{CL} \triangleq \esp{(\espk{Y}{\hh(X)} - \hh(X))^2}$.
    \item \emph{Root Mean Square Calibration Error (RMSCE)} defined as $\mathrm{RMSCE} \triangleq \sqrt{\esp{(\espk{Y}{\hh(X)} - \hh(X))^2}}$.
    \item \emph{Maximum Calibration Error (MCE)} defined as $\mathrm{MCE} \triangleq \esp{\max |\espk{Y}{\hh(X)} - \hh(X)|}$.
\end{itemize}

\paragraph{Compute setting}
All experiments ran on a single compute node of 256 CPUs.

\paragraph{Code repository}
The code used for the experiments is available at \url{https://github.com/aperezlebel/decision_suboptimal_classifiers}.

\paragraph{GLAR}
GLAR builds on the grouping loss estimator proposed by \citet{Perez-Lebel2023}. GLAR uses the partitioning strategy based on a decision tree detailed in \autoref{sec:details}. Once identified the partition $\Pc$ of the input space, GLAR estimates the local averages of the probabilities $\hh(X)$ on each region of the partition. The GLAR estimator $\hhp$ is then the function mapping each input $x$ to the average of the probabilities on the region of the partition to which $x$ belongs. It is applied on the test set to correct the estimated probabilities similarly to the other post-training methods.

For each model $f$, we split the probability range $[0, 1]$ into 15 equal-mass bins. In each bin, we use a decision tree constrained to form at most 5 leaves. To regularize the correction by GLAR, we use a threshold strategy. We set a threshold $r \in \R$. GLAR applies the correction if the overall grouping regret $\rglhhat$ is larger than $r$ \emph{and} only corrects the bins for which the conditional grouping regret $\rglhhat(p)$ is larger than $r$. We set $r = 0.02$. Otherwise, GLAR applies an isotonic correction. This leverages the grouping regret estimator and avoid correcting the probabilities beyond calibration when it is not needed.

\clearpage
\subsection{Post-training gain results}
\label{sec:app:gains}

\begin{figure}[ht]
    \centering
    \includegraphics[width=0.33\linewidth]{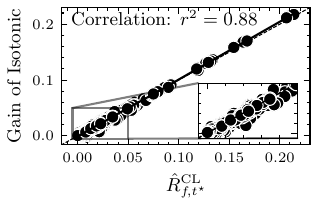}%
    \includegraphics[width=0.33\linewidth]{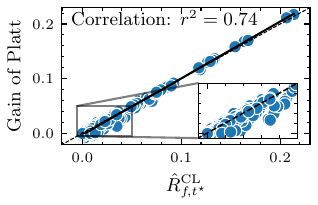}%
    \includegraphics[width=0.33\linewidth]{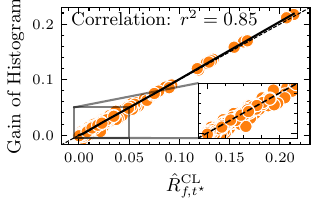}\\
    \includegraphics[width=0.33\linewidth]{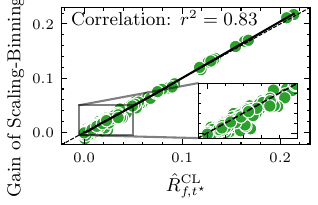}%
    \includegraphics[width=0.33\linewidth]{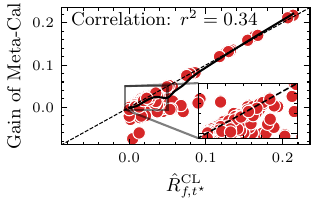}%
    \caption{\textbf{Recalibration gain vs $\rclhtstarhat$.} Gain of each recalibration method vs $\rclhtstarhat$.}
    \label{fig:app:gain:rcl}
\end{figure}

\begin{figure}[ht]
    \centering
    \includegraphics[width=0.45\linewidth]{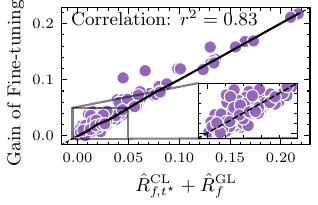}%
    \includegraphics[width=0.45\linewidth]{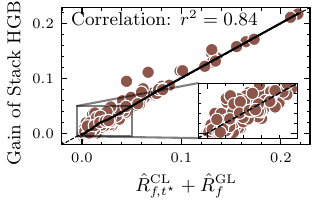}\\
    \includegraphics[width=0.45\linewidth]{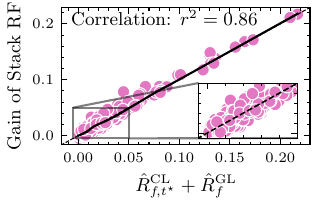}%
    \includegraphics[width=0.45\linewidth]{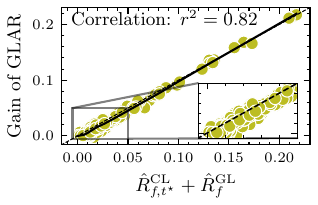}
    \caption{\textbf{Post-training gain vs $\rclhtstarhat + \rglhhat$.} Gain of non-recalibration methods vs $\rclhtstarhat + \rglhhat$.}
    \label{fig:app:gain:rclglmean}
\end{figure}

\begin{figure}[ht]
    \centering
    \includegraphics[width=0.33\linewidth]{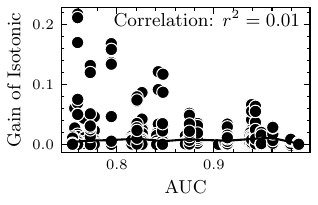}%
    \includegraphics[width=0.33\linewidth]{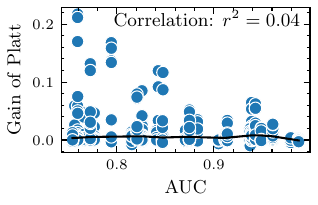}%
    \includegraphics[width=0.33\linewidth]{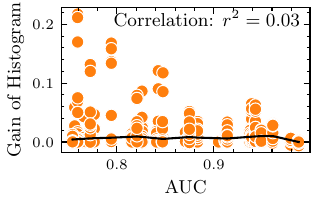}\\
    \includegraphics[width=0.33\linewidth]{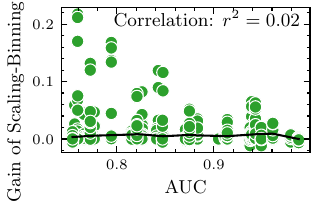}%
    \includegraphics[width=0.33\linewidth]{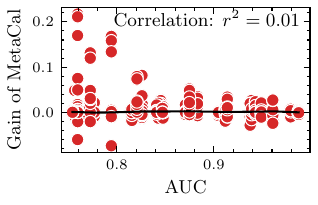}%
    \includegraphics[width=0.33\linewidth]{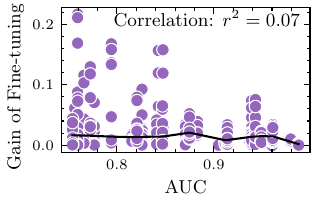}\\
    \includegraphics[width=0.33\linewidth]{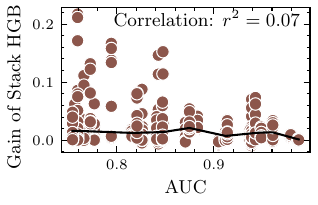}%
    \includegraphics[width=0.33\linewidth]{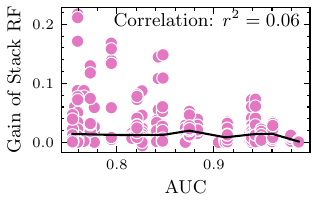}%
    \includegraphics[width=0.33\linewidth]{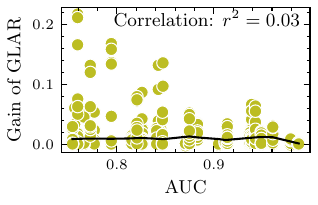}
    \caption{\textbf{Gain vs AUC.} Gain of each post-training method vs the AUC of the model.}
    \label{fig:app:gain:auc}
\end{figure}
\begin{figure}[ht]
    \centering
    \includegraphics[width=0.33\linewidth]{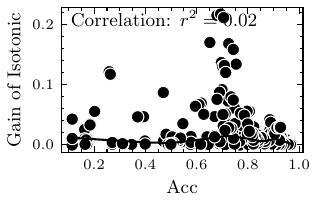}%
    \includegraphics[width=0.33\linewidth]{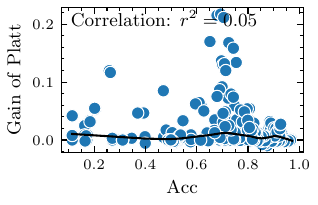}%
    \includegraphics[width=0.33\linewidth]{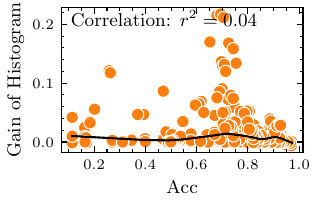}\\
    \includegraphics[width=0.33\linewidth]{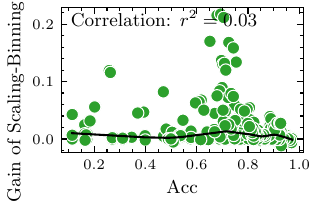}%
    \includegraphics[width=0.33\linewidth]{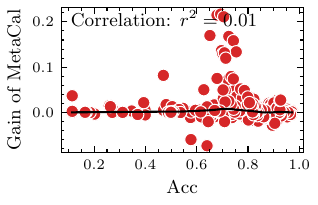}%
    \includegraphics[width=0.33\linewidth]{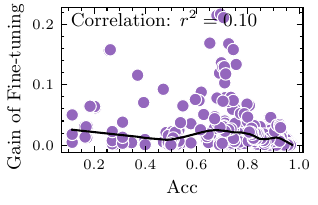}\\
    \includegraphics[width=0.33\linewidth]{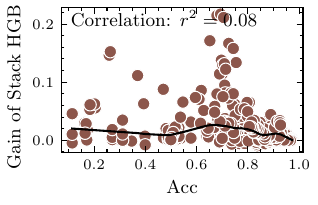}%
    \includegraphics[width=0.33\linewidth]{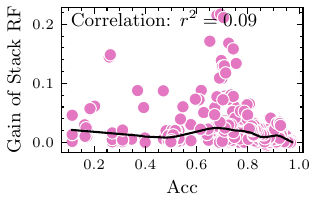}%
    \includegraphics[width=0.33\linewidth]{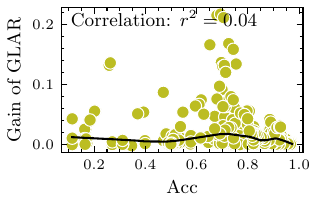}
    \caption{\textbf{Gain vs Accuracy.} Gain of each post-training method vs the Accuracy of the model.}
    \label{fig:app:gain:acc}
\end{figure}
\begin{figure}[ht]
    \centering
    \includegraphics[width=0.33\linewidth]{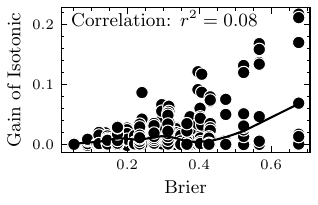}%
    \includegraphics[width=0.33\linewidth]{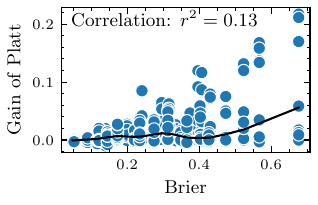}%
    \includegraphics[width=0.33\linewidth]{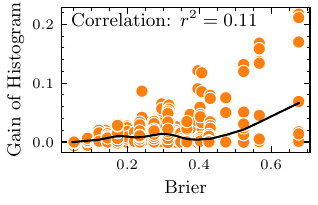}\\
    \includegraphics[width=0.33\linewidth]{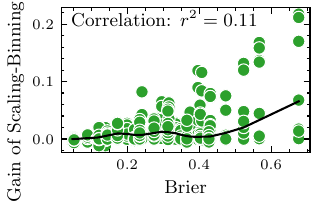}%
    \includegraphics[width=0.33\linewidth]{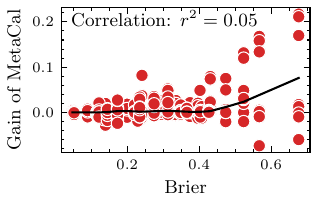}%
    \includegraphics[width=0.33\linewidth]{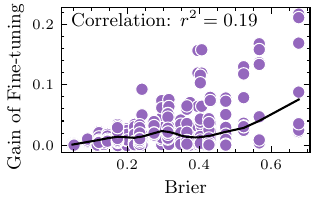}\\
    \includegraphics[width=0.33\linewidth]{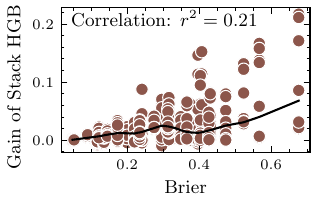}%
    \includegraphics[width=0.33\linewidth]{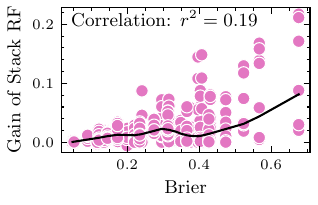}%
    \includegraphics[width=0.33\linewidth]{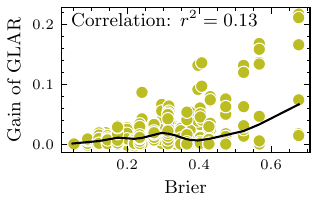}
    \caption{\textbf{Gain vs Brier.} Gain of each post-training method vs the Brier score of the model.}
    \label{fig:app:gain:brier}
\end{figure}

\begin{figure}[ht]
    \centering
    \includegraphics[width=0.33\linewidth]{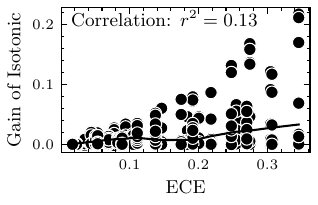}%
    \includegraphics[width=0.33\linewidth]{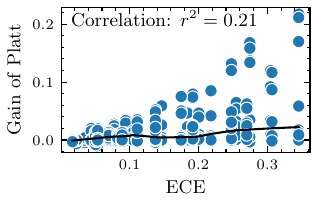}%
    \includegraphics[width=0.33\linewidth]{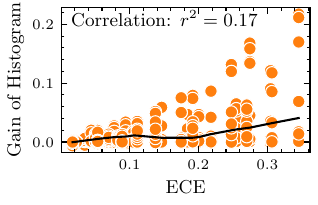}\\
    \includegraphics[width=0.33\linewidth]{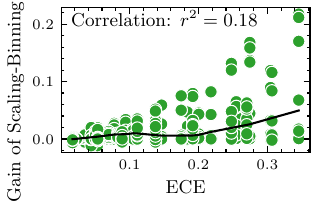}%
    \includegraphics[width=0.33\linewidth]{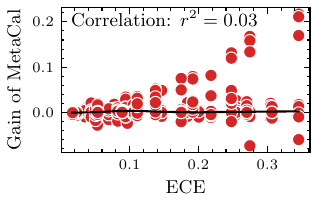}%
    \includegraphics[width=0.33\linewidth]{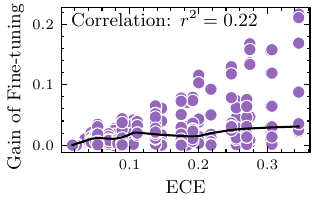}\\
    \includegraphics[width=0.33\linewidth]{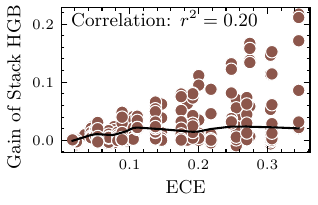}%
    \includegraphics[width=0.33\linewidth]{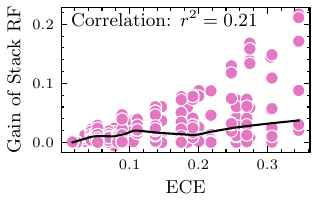}%
    \includegraphics[width=0.33\linewidth]{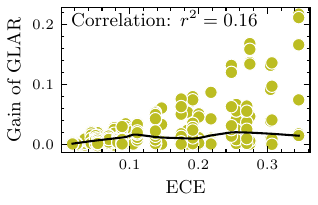}
    \caption{\textbf{Gain vs $\ece$.} Gain of each post-training method vs the $\ece$ of the model.}
    \label{fig:app:gain:ece}
\end{figure}

\begin{figure}[ht]
    \centering
    \includegraphics[width=0.33\linewidth]{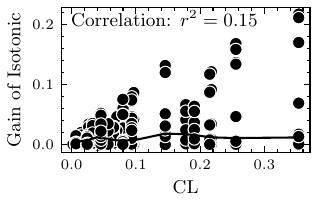}%
    \includegraphics[width=0.33\linewidth]{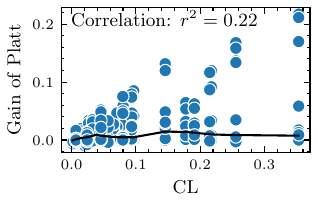}%
    \includegraphics[width=0.33\linewidth]{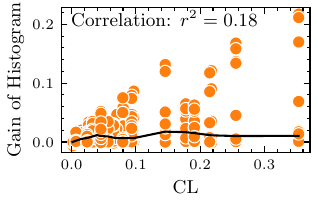}\\
    \includegraphics[width=0.33\linewidth]{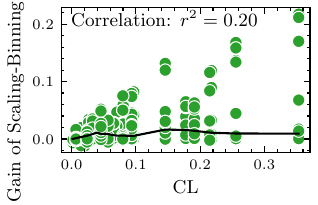}%
    \includegraphics[width=0.33\linewidth]{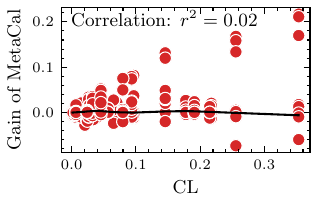}%
    \includegraphics[width=0.33\linewidth]{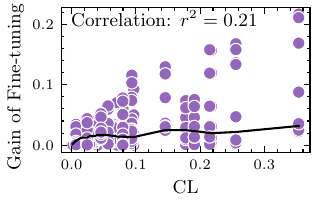}\\
    \includegraphics[width=0.33\linewidth]{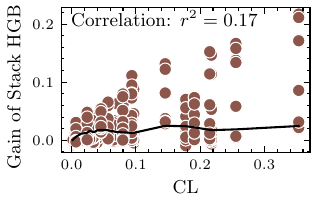}%
    \includegraphics[width=0.33\linewidth]{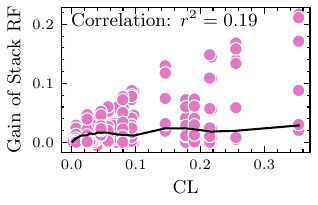}%
    \includegraphics[width=0.33\linewidth]{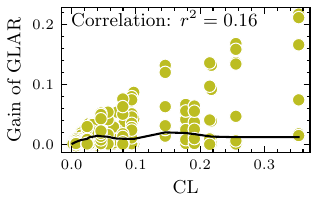}
    \caption{\textbf{Gain vs $\CL$.} Gain of each post-training method vs the calibration loss of the model.}
    \label{fig:app:gain:cl}
\end{figure}

\begin{figure}[ht]
    \centering
    \includegraphics[width=0.33\linewidth]{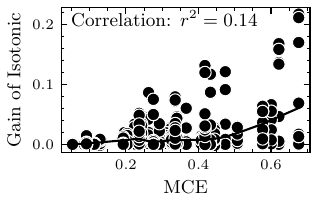}%
    \includegraphics[width=0.33\linewidth]{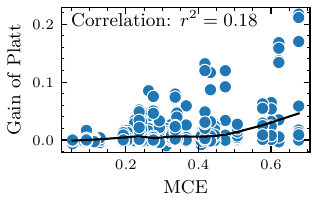}%
    \includegraphics[width=0.33\linewidth]{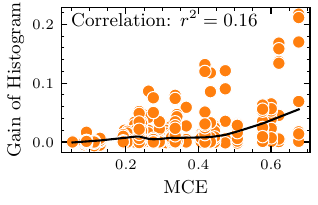}\\
    \includegraphics[width=0.33\linewidth]{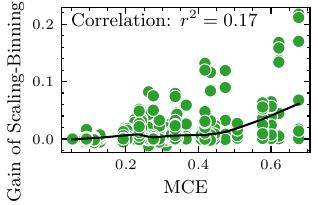}%
    \includegraphics[width=0.33\linewidth]{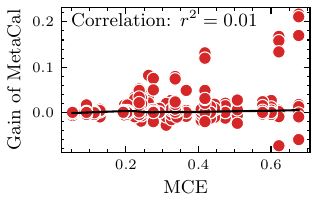}%
    \includegraphics[width=0.33\linewidth]{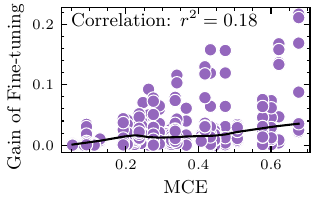}\\
    \includegraphics[width=0.33\linewidth]{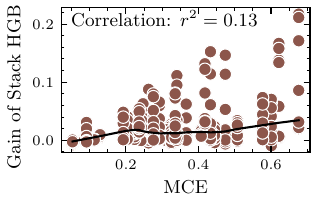}%
    \includegraphics[width=0.33\linewidth]{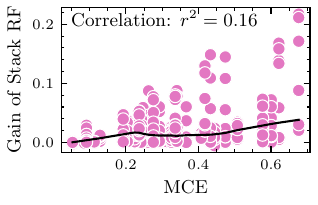}%
    \includegraphics[width=0.33\linewidth]{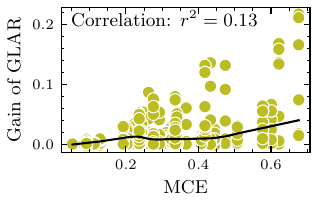}
    \caption{\textbf{Gain vs $\mce$.} Gain of each post-training method vs the $\mce$ of the model.}
    \label{fig:app:gain:mce}
\end{figure}

\begin{figure}[ht]
    \centering
    \includegraphics[width=0.33\linewidth]{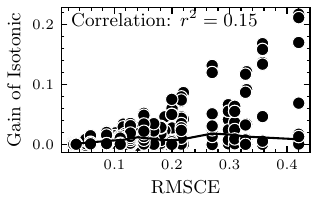}%
    \includegraphics[width=0.33\linewidth]{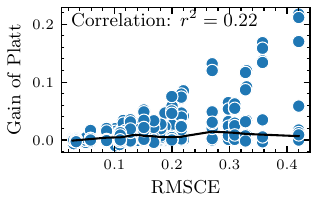}%
    \includegraphics[width=0.33\linewidth]{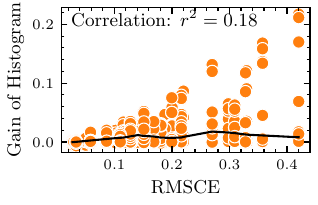}\\
    \includegraphics[width=0.33\linewidth]{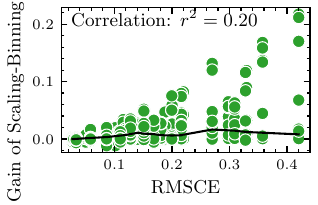}%
    \includegraphics[width=0.33\linewidth]{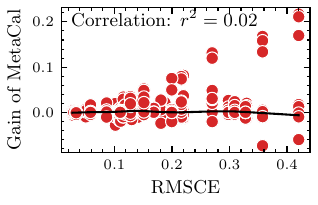}%
    \includegraphics[width=0.33\linewidth]{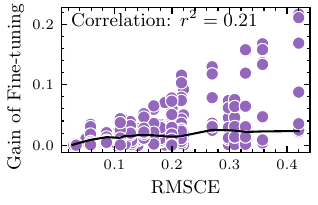}\\
    \includegraphics[width=0.33\linewidth]{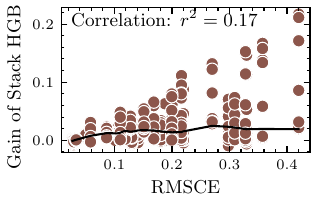}%
    \includegraphics[width=0.33\linewidth]{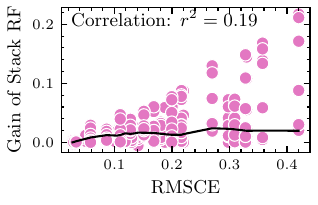}%
    \includegraphics[width=0.33\linewidth]{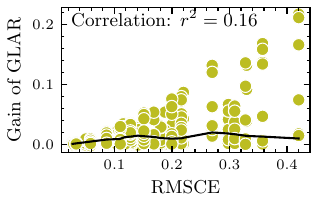}
    \caption{\textbf{Gain vs $\rmsce$.} Gain of each post-training method vs the $\rmsce$ of the model.}
    \label{fig:app:gain:rmsce}
\end{figure}

\clearpage

\section{Supplementary results}
\label{sec:supp-results}

\subsection{Influence of $t\sstar$: gains as a function of the utility regime}

We investigate the influence of the regime $t\sstar$ on the potential gain of post-training. We found that, on average across all models and datasets, values of $t\sstar$ in the range $[0.05, 0.95]$ yield higher regrets with a peak around $0.5$ (\autoref{fig:post:excess:t}). This corresponds to exchange rates of 1:19 to 1:1, with a peak around 1:1. Similarly to \citet{VanCalster2015calibration}, we observe that miscalibration is less likely to cause regret when $t\sstar$ is close to the event rate $\esp{Y}$ (\autoref{fig:recal:excess:event}).

\begin{figure}[ht]
    \centering
    \includegraphics{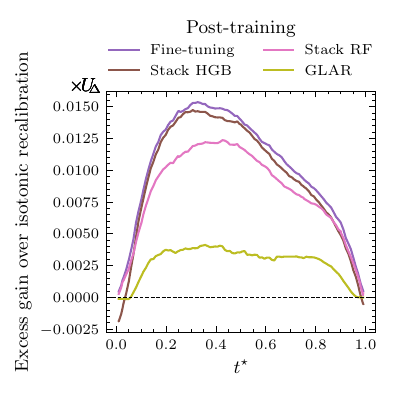}%
    \caption{Average gain of post-training over the gain of recalibration as a function of the utility-derived $t\sstar$.}
    \label{fig:post:excess:t}
\end{figure}
\begin{figure}[ht]
    \centering
    \includegraphics{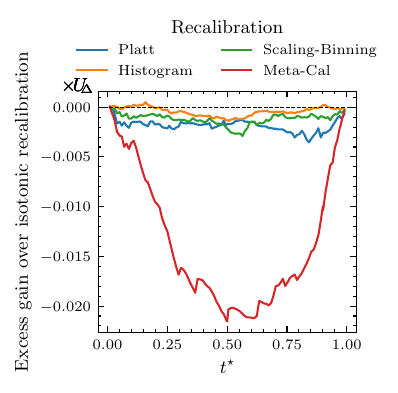}%
    \caption{Average gains of recalibration approaches compared to the gain of isotonic recalibration as a function of the utility-derived $t\sstar$.}
    \label{fig:recal:excess:t}
\end{figure}
\begin{figure}[ht]
    \centering
    \includegraphics{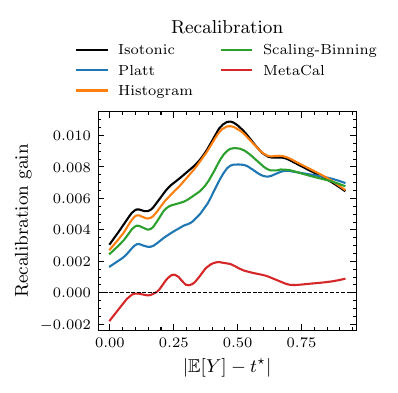}%
    \caption{Average gains of recalibration approaches compared to the gain of isotonic recalibration as a function of the distance of the utility-derived $t\sstar$ to the event rate $\esp{Y}$. LOWESS curves were fitted with a width of 0.2.}
    \label{fig:recal:excess:event}
\end{figure}

\end{document}